\documentclass[twoside,11pt]{article}

\usepackage{blindtext}

\usepackage[preprint]{jmlr2e}
\usepackage{amsmath, amssymb, mathtools}
\usepackage{algorithm, algpseudocode}
\usepackage{enumitem}
\usepackage{makecell, booktabs}  
\usepackage{xcolor}
\usepackage{enumitem}
\usepackage{subcaption}

\newtheorem{assumption}{Assumption}

\newcommand{\norm}[1]{\left\lVert #1 \right\rVert}
\newcommand{\norminf}[2][]{\norm{#2}_{\infty#1}}

\newcommand{\normtwo}[2][]{\norm{#2}_{2#1}}

\newcommand{\state}{\mathcal{S}}
\newcommand{\action}{\mathcal{A}}

\newcommand{\algname}{\mathtt{CycleFQI}}

\DeclareMathOperator*{\argmin}{arg\,min}
\DeclareMathOperator*{\argmax}{arg\,max}

\usepackage{lastpage}
\jmlrheading{1}{2026}{1-\pageref{LastPage}}{2/12}{}{}{Kyungbok Lee, Angelica Cristello Sarteau, and Michael Kosorok}

\ShortHeadings{Provable Offline Reinforcement Learning for Structured
Cyclic MDPs}{Lee, Sarteau, and Kosorok}
\firstpageno{1}

\begin{document}

\title{Provable Offline Reinforcement Learning for Structured Cyclic MDPs}

\author{\name Kyungbok Lee \email kyungbok@unc.edu \\
       \addr Department of Biostatistics\\
       University of North Carolina at Chapel Hill\\
       Chapel Hill, NC 27599-7420, USA
       \AND
       \name Angelica Cristello Sarteau \email angelica.cristellosarteau@vumc.org \\
       \addr Vanderbilt University Medical Center\\
       Nashville, TN 37232, USA
       \AND
       \name Michael R. Kosorok \email kosorok@unc.edu \\
       \addr Department of Biostatistics\\
       University of North Carolina at Chapel Hill\\
       Chapel Hill, NC 27599-7420, USA}

\editor{}

\maketitle

\begin{abstract}
We introduce a novel cyclic Markov decision process (MDP) framework for multi-step decision problems with heterogeneous stage-specific dynamics, transitions, and discount factors across the cycle.
In this setting, offline learning is challenging: optimizing a policy at any stage shifts the state distributions of subsequent stages, propagating mismatch across the cycle.
To address this, we propose a modular structural framework that decomposes the cyclic process into stage-wise sub-problems.
While generally applicable, we instantiate this principle as \(\algname\), an extension of fitted Q-iteration enabling theoretical analysis and interpretation.
It uses a vector of stage-specific Q-functions, tailored to each stage, to capture within-stage sequences and transitions between stages.
This modular design enables partial control, allowing some stages to be optimized while others follow predefined policies.
We establish finite-sample suboptimality error bounds and derive global convergence rates under Besov regularity, demonstrating that the \(\algname\) mitigates the curse of dimensionality compared to monolithic baselines.
Additionally, we propose a sieve-based method for asymptotic inference of optimal policy values under a margin condition.
Experiments on simulated and real-world Type 1 Diabetes data sets demonstrate \(\algname\)'s effectiveness.
\end{abstract}

\begin{keywords}
Offline Reinforcement Learning, Cyclic Markov Decision Processes, Fitted Q-Iteration, Statistical Policy Value Inference, Diabetes Management
\end{keywords}

\section{Introduction and Related Work}

Many real-world decision-making problems exhibit cyclic patterns, unfolding across distinct, repeating stages with unique dynamics and objectives. For instance, Type 1 Diabetes management \citep{javad2019reinforcement, luckett2020estimating} involves decisions that vary across daily stages. State-actions differ contextually: morning decisions use minimal prior data, later actions incorporate cumulative information, and overnight strategies focus on sleep-related stability. Similarly, urban traffic control cycles through distinct phases like rush hour and off-peak periods. Standard reinforcement learning (RL) methods often assume uniform dynamics and struggle with such stage-specific variations and inter-stage dependencies \citep{sutton1998reinforcement}.

Crucially, these applications require a framework that respects the \textit{structural distinctness} of each phase. A monolithic approach that treats diverse stages—ranging from complex active decisions to passive monitoring—as a uniform process inevitably incurs statistical inefficiencies. Instead of forcing a single global model to fit disparate dynamics, our framework explicitly leverages this structure. By decomposing the problem, we match the learning complexity to the intrinsic nature of each stage, ensuring that simple stages are not burdened by the high-dimensional requirements of complex ones. This structural decomposition serves as a meta-strategy adaptable to various value-based algorithms; in this work, we demonstrate its fundamental benefits through the lens of Fitted Q-Iteration.

\textit{Offline Reinforcement Learning.} Traditional RL such as Q-learning \citep{watkins1992q} assumes consistent dynamics, oversimplifying sequential decisions. Offline methods like Fitted Q-Iteration (FQI) learn from fixed data sets but can falter in cyclic settings under standard stationary assumptions \citep{ernst2005tree, riedmiller2005neural}.
Specifically, the distribution mismatch—a core challenge in offline RL—is exacerbated by the cyclic dependency; a policy update in a preceding stage inherently alters the state visitation distribution for the subsequent stage, creating a feedback loop of error accumulation that standard methods fail to mitigate.
Bias-reduction techniques including CQL and BCQ \citep{fujimoto2019off, kumar2020conservative} improve offline robustness but lack specific adaptations for multi-stage cyclic structures. Neural network approaches enhance scalability but rarely handle cyclic dynamics explicitly \citep{nguyen-tang2022on}.

\textit{Cyclic and Hierarchical Reinforcement Learning.} Work on cyclic behaviors often involves periodic MDPs, typically assuming shared state-action spaces with limited variations \citep{aniket2024online, sharma2018phase}, or explores periodic or multi-time-scale MDPs that may neglect intra-stage sequential decisions \citep{jacobson2003markov, wernz2013multi}. Hierarchical RL decomposes tasks but seldom accounts for the explicit cyclic dependencies characterizing our target problems \citep{barto2003recent, dietterich2000hierarchical, sutton1999between}. Research often prioritizes online learning, leaving offline cyclic settings underexplored \citep{hamadanian2023online}. Approaches like cyclic policy distillation focus on domain randomization rather than stage dynamics \citep{kadokawa2023cyclic}.

\textit{Theoretical Guarantees.} Existing theoretical analyses in offline RL often rely on structures limiting applicability to our cyclic, multi-stage setting. Assumptions such as realizability or Bellman completeness \citep{chen2019information, xie2021batch, zhan2022offline} are challenging to satisfy. Standard finite-sample bounds \citep{munos2008finite, nguyen-tang2022on}, sometimes with Besov spaces, do not address cyclic dependencies with stage-varying samples. Our work contributes by introducing interconnected stage-specific Q-functions, offering finite-sample bounds, convergence rates under Besov regularity, and inference for multi-stage settings, extending prior methods \citep{shi2022statistical}.

To address these challenges, we propose \(\algname\), a novel offline RL algorithm tailored for cyclic MDPs. While the core decomposition principle is generalizable, we formulate \(\algname\) within the Fitted Q-Iteration framework to rigorously quantify its statistical properties. Our approach models \( K \) distinct MDP stages cycling infinitely, each with potentially unique state-action spaces, dynamics, rewards, stage transitions $\phi_k$, and discounting $\gamma_k$. 
Unlike standard time-step discounting, we apply discounts at stage transitions, treating each stage as a cohesive block with accumulated rewards. This formulation accommodates variable intra-stage durations—where actions may span non-uniform time intervals—avoiding the unnecessary mathematical complexity of tracking specific time steps without compromising the fundamental cyclic structure.
Learning a vector of stage-specific Q-functions linked via a coupled Bellman system optimizes decisions across the multi-step cyclic process. The framework flexibly learns policies for all stages, or optimizes a subset while using predefined policies $\pi_k^\circ$ for others such as fixed protocols or uncontrollable environmental phases, focusing learning on adaptable stages. This \textit{modular approach} enhances practicality when full control is infeasible and strictly enforces safety constraints required by real-world clinical protocols. We extend FQI for these dynamics, ensuring robust offline performance. Experiments validate \(\algname\)'s effectiveness in capturing complex cyclic patterns and adapting to varying policy specifications on synthetic and real-world T1D data.

Our contributions are:
\begin{itemize}
    \item \textit{Novel Cyclic MDP Framework and Algorithm}: We introduce a cyclic MDP framework designed for stage heterogeneity, featuring interconnected stage-specific Q-functions and a coupled Bellman system. We formalize this via the \(\algname\) algorithm, extending fitted Q-iteration to allow for modular optimization—learning policies for all stages or focusing on a subset while respecting pre-defined policies for the remainder. This structure addresses limitations in prior cyclic methods \citep{aniket2024online, kadokawa2023cyclic}.

    \item \textit{Robust Theoretical Guarantees}: We establish finite-sample guarantees for the proposed \(\algname\) within this cyclic setting. We theoretically demonstrate that our decomposition strategy mitigates the curse of dimensionality compared to monolithic baselines. Specifically, while standard flattened approaches suffer from the cumulative dimension of the cycle, our analysis shows that \(\algname\) is statistically bottlenecked only by the worst-case single stage, providing a fundamental efficiency gain in high-dimensional structured environments. We also extend sieve methods to achieve asymptotic multivariate normality for value estimation \citep{shi2022statistical}.

    \item \textit{Empirical Validation}: Comprehensive experiments on synthetic and real-world T1D data sets demonstrate \(\algname\)'s effectiveness in capturing complex cyclic dependencies and, crucially, validate its flexibility and practical utility, including its capability to handle cycles with partially pre-defined policies.
\end{itemize}

\section{Problem Setup: Cyclic MDP}\label{sec:problem}

We introduce a Cyclic Markov Decision Process (MDP) framework with \( K \) stages, where each stage operates as a finite-horizon MDP with potentially distinct characteristics. The process cycles through these stages, with deterministic transitions triggered upon reaching terminal state-action pairs within each stage.
This section formalizes the components of this cyclic MDP, including stage dynamics, transitions, value functions, and the relevant Bellman equations adapted to our setting.

To facilitate the understanding of our framework and the indices used for stages, iterations, and time steps, we summarize the key notations in Table \ref{tab:notation}.

\begin{table}[t]
\centering
\caption{Summary of Key Notations used in the Cyclic MDP framework.}
\label{tab:notation}
\begin{tabular}{l l}
\toprule
\textbf{Symbol} & \textbf{Description} \\
\midrule
$K$ & Total number of distinct stages in the cycle \\
$k \in \{1, \dots, K\}$ & Index for stages (e.g., $k=1$ for Morning) \\
$m$ & Index for the training iteration ($m=1, \dots, M$) \\
$d_k$ & Dimension of the state space at stage $k$ \\
$\mathcal{S}_k, \mathcal{A}_k$ & State and Action spaces for stage $k$ \\
$\phi_k: \mathcal{S}_k \to \mathcal{S}_{k+1}$ & Inter-stage transition mapping function \\
$\gamma_k$ & Discount factor applied upon transitioning \textit{out} of stage $k$ \\
$\mathcal{U} \subseteq \{1, \dots, K\}$ & Update set containing indices of adaptable stages \\
$\pi_k^\circ$ & Fixed behavior policy for non-adaptable stages ($k \notin \mathcal{U}$) \\
$Q_k^{(m)}$ & Estimated Q-function for stage $k$ at iteration $m$ \\
\bottomrule
\end{tabular}
\end{table}

\subsection{Structure of the Cyclic MDP}
\textit{Intra-Stage Dynamics.} For each stage \( k \in \{1, \dots, K\} \), the dynamics are governed by an MDP \( \mathcal{M}_k = (\mathcal{S}_k, \mathcal{A}_k, P_k, R_k, \mathcal{T}_k) \). The state space \( \mathcal{S}_k \subseteq [0,1]^{d_k} \) can be continuous, while the action space \( \mathcal{A}_k \) is finite. The transition kernel \( P_k: \mathcal{S}_k \times \mathcal{A}_k \to \Delta(\mathcal{S}_k) \) governs intra-stage state dynamics, where \( \Delta(\mathcal{S}_k) \) denotes the set of probability distributions over \( \mathcal{S}_k \). The reward function \( R_k: \mathcal{S}_k \times \mathcal{A}_k \to \Delta([0, R_{\max,k}]) \) defines bounded nonnegative rewards, accumulated \textit{without} discounting within stages. A set of terminal state-action pairs \( \mathcal{T}_k \subseteq \mathcal{S}_k \times \mathcal{A}_k \) triggers transitions to the next stage.

In the \( m \)-th visited stage starting from stage \( k \), the stage index is \( [m+k-1] \), defined cyclically as \( [m] = ((m-1) \pmod{K}) + 1 \). A \textit{stage-specific policy} \( \pi_k: \mathcal{S}_k \to \Delta(\mathcal{A}_k) \) maps states to action distributions, while the \textit{composite policy} is \( \pi = (\pi_1, \dots, \pi_K) \). At each step \( t = 1, \dots, \tau_m \), the dynamics evolve as:
\[
\begin{aligned}
a_{m,t} &\sim \pi_{[m+k-1]}(\cdot \mid s_{m,t}), \\
r_{m,t} &\sim R_{[m+k-1]}(\cdot \mid s_{m,t}, a_{m,t}), \\
s_{m,t+1} &\sim P_{[m+k-1]}(\cdot \mid s_{m,t}, a_{m,t}),
\end{aligned}
\]
until a terminal pair \( (s_{m,\tau_m}, a_{m,\tau_m}) \in \mathcal{T}_{[m+k-1]} \) is reached at stopping time
\[
\tau_m = \inf \{ t \geq 1 : (s_{m,t}, a_{m,t}) \in \mathcal{T}_{[m+k-1]} \},
\]
with \( \tau_m \leq H_{[m+k-1]} \), where \( H_k \) is the finite horizon of stage \( k \).

\textit{Inter-Stage Transitions.} Upon termination at step \( \tau_m \), a deterministic transition to the \((m+1)\)-th stage \( [m+k] \) occurs via a \textit{stage transition mapping}:
\[
s_{m+1,1} = \phi_{[m+k-1]}(s_{m,\tau_m +1}),
\]
with a \textit{stage-specific discount factor} \( \gamma_{[m+k-1]} \), where \( 0 \leq \gamma_k \leq 1 \) for all \( k \), and \textit{at least one} \( \gamma_k < 1 \) to ensure bounded total discounted rewards. The \textit{cycle discount factor} is defined as
\[
\gamma_{\mathrm{cycle}} = \prod_{k=1}^K \gamma_k.
\]

\begin{remark}
This cyclic framework generalizes the infinite-horizon MDP. Setting \( K=1 \) and \( H_1=1 \) recovers the standard MDP, ensuring all results apply to conventional MDPs.
\end{remark}

\begin{figure}[t]
    \centering
    \includegraphics[width=0.7\textwidth]{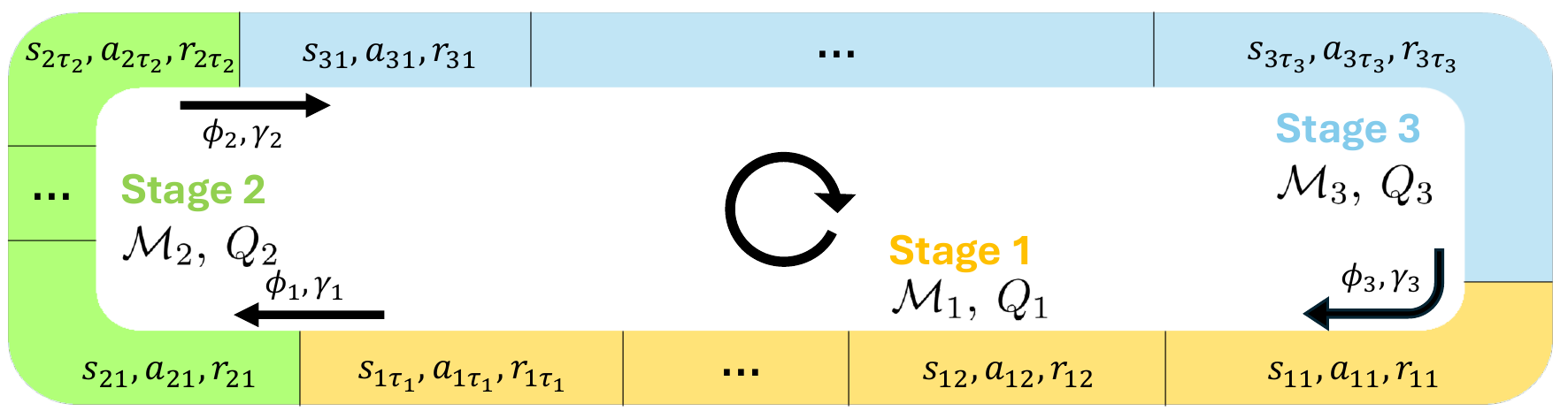}  
    \caption{Illustration of a cyclic MDP with \( K=3 \) stages. Each stage \( k \) is an MDP \( \mathcal{M}_k \) with \( \tau_k \) steps, connected cyclically via transitions \( \phi_k \) with discounts \( \gamma_k \). We estimate the optimal Q-function \( Q_k^* \) for each stage, maximizing expected discounted reward over an infinite loop starting from stage \( k \).}
    \label{fig:donut}
\end{figure}

\subsection{Action-Value and State-Value Functions}  

For the $m$-th visited stage, the stage-level reward is defined as the cumulative sum of undiscounted rewards, $R_m = \sum_{t=1}^{\tau_m} r_{m,t}$. The cumulative inter-stage discount factor starting from stage $k$ is given by
\[ 
\Gamma_{k,m} = \prod_{j=1}^{m-1} \gamma_{[k+j-1]}, 
\]
with $\Gamma_{k,1} = 1$. Given a composite stationary policy $\pi = (\pi_1, \dots, \pi_K)$, the \textit{action-value function} $\mathbf{Q}^\pi = (Q_1^\pi, \dots, Q_K^\pi)$ consists of stage-specific components $Q_k^\pi: \mathcal{S}_k \times \mathcal{A}_k \to [0, Y]$, where the upper bound is
\[ 
Y = \frac{1}{1 - \gamma_{\mathrm{cycle}}} \sum_{j=1}^K H_j R_{\max,j}. 
\]

Each function $Q_k^\pi(s_k, a_k)$ represents the expected total discounted future reward obtained by starting from $(s_k, a_k)$ at stage $k$ and following policy $\pi$:
\[ 
Q_k^\pi(s_k, a_k) = \mathbb{E}_\pi \left[ \sum_{m=1}^\infty \Gamma_{k,m} R_m \;\middle|\; s_{1,1} = s_k, a_{1,1} = a_k \right]. 
\]
The corresponding \textit{state-value function} $\mathbf{V}^\pi = (V_1^\pi, \dots, V_K^\pi)$ is defined as
\[ 
V_k^\pi(s_k) = \mathbb{E}_{a_k \sim \pi_k(\cdot \mid s_k)}[Q_k^\pi(s_k, a_k)]. 
\]

\subsection{Constrained Optimality and the Bellman Operator}  

Building upon the general value function definitions, we introduce the \textit{Bellman optimality operator} $\mathbf{T}_{\mathcal{U}}$ tailored for our cyclic MDP framework, reflecting the capability to handle partially specified policies. We define an \textit{update set} $\mathcal{U} \subseteq \{1,\dots,K\}$ indicating stages where policies are learned, while policies for stages $k \notin \mathcal{U}$ remain fixed as given policies $\pi_k^{\circ}$. The objective in this setting is to find the optimal constrained policy $\pi_{\mathcal{U}}^*$ that maximizes value subject to $\pi_k = \pi_k^\circ$ for $k \notin \mathcal{U}$.

Let \( L_\infty(\mathcal{S}_k \times \mathcal{A}_k) \) denote the space of bounded functions on \( \mathcal{S}_k \times \mathcal{A}_k \). The Bellman operator for this constrained problem, \( \mathbf{T}_{\mathcal{U}} \), maps the space of vector Q-functions to itself, as \( \mathbf{T}_{\mathcal{U}}: \prod_{k=1}^K L_\infty(\mathcal{S}_k \times \mathcal{A}_k) \to \prod_{k=1}^K L_\infty(\mathcal{S}_k \times \mathcal{A}_k) \). To define its action on an input Q-function vector \( \mathbf{Q} = (Q_1, \dots, Q_K) \), we first define the state value \( V_k(s_k) \) for each stage \( k \), which reflects the policy constraints:
\begin{equation}\label{eq:state_value_function}
V_k(s_k) = \max_{a_k \in \mathcal{A}_k} Q_k(s_k, a_k) \mathbb{I}(k \in \mathcal{U}) + \mathbb{E}_{a_k \sim \pi_k^{\circ}(\cdot \mid s_k)} Q_k(s_k, a_k) \mathbb{I}(k \notin \mathcal{U}).
\end{equation}
Using the expected immediate reward $r_k(s_k, a_k)$, the $k$-th component of the output \( (\mathbf{T}_{\mathcal{U}}\mathbf{Q})_k \) is:
\begin{multline}\label{eq:bellman_operator_action}
(\mathbf{T}_{\mathcal{U}}\mathbf{Q})_k(s_k, a_k) = r_k(s_k, a_k) \\
+ \mathbb{E}_{s'_k \sim P_k(\cdot \mid s_k, a_k)} \Big[ 
    V_k(s'_k) \mathbb{I}((s_k, a_k) \notin \mathcal{T}_k) + \gamma_k V_{[k+1]}(\phi_k(s'_k)) \mathbb{I}((s_k, a_k) \in \mathcal{T}_k) 
\Big].
\end{multline}

The unique fixed point of this operator, denoted $\mathbf{Q}_{\mathcal{U}}^*$, represents the optimal action-values achievable under the constraints imposed by $\mathcal{U}$ and $\{\pi_k^\circ\}_{k \notin \mathcal{U}}$. The corresponding optimal constrained policy $\pi_{\mathcal{U}}^*$ acts greedily with respect to $\mathbf{Q}_{\mathcal{U}}^*$ for stages $k \in \mathcal{U}$ and follows the fixed policy $\pi_k^\circ$ for stages $k \notin \mathcal{U}$ \citep{bertsekas1996stochastic}. The following proposition establishes properties of $\mathbf{T}_{\mathcal{U}}$ ensuring convergence to this fixed point.

\begin{proposition}[Contraction Property]\label{prop:contraction}
Let \( H = \sum_{k=1}^K H_k \) and \( \gamma_{\mathrm{cycle}} = \prod_{k=1}^K \gamma_k \). For any update set \(\mathcal{U} \subseteq \{1,\dots,K\}\) and vectors \( \mathbf{f}, \mathbf{g} \in \prod_{k=1}^K L_\infty(\mathcal{S}_k \times \mathcal{A}_k) \), the operator \( \mathbf{T}_{\mathcal{U}} \) defined by Equation ~\ref{eq:state_value_function}–\ref{eq:bellman_operator_action} has a unique fixed point \( \mathbf{Q}_{\mathcal{U}}^* \), is non-expansive, and satisfies the $H$-step contraction property:
\[
\| \mathbf{T}_{\mathcal{U}}\mathbf{f} - \mathbf{T}_{\mathcal{U}} \mathbf{g} \|_\infty \leq \| \mathbf{f} - \mathbf{g} \|_\infty \quad \text{and} \quad \| \mathbf{T}_{\mathcal{U}}^{H} \mathbf{f} - \mathbf{T}_{\mathcal{U}}^{H} \mathbf{g} \|_\infty \leq \gamma_{\mathrm{cycle}} \, \| \mathbf{f} - \mathbf{g} \|_\infty.
\]
\end{proposition}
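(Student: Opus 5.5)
Non-expansiveness comes from a direct pointwise comparison, and the $H$-step contraction from tracking which components of $\mathbf{f}-\mathbf{g}$ can influence a given component after repeated applications. Existence and uniqueness of $\mathbf{Q}_{\mathcal{U}}^*$ then follow from Banach's fixed point theorem applied to $\mathbf{T}_{\mathcal{U}}^H$.

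\textbf{Non-expansiveness.} Write $\delta = \|\mathbf{f}-\mathbf{g}\|_\infty = \max_k \|f_k-g_k\|_\infty$, and let $V_k^f, V_k^g$ be the state values defined in Equation~\ref{eq:state_value_function}.
\begin{itemize}
\item For $k\in\mathcal{U}$, the inequality $|\max_a f_k(s,a)-\max_a g_k(s,a)|\le \max_a|f_k(s,a)-g_k(s,a)|$ gives $\|V_k^f-V_k^g\|_\infty \le \|f_k-g_k\|_\infty$.
\item For $k\notin\mathcal{U}$, averaging under $\pi_k^\circ$ gives the same bound.
\end{itemize}
Substituting into Equation~\ref{eq:bellman_operator_action}, the rewards cancel. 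At a non-terminal pair the difference is bounded by $\|f_k-g_k\|_\infty$. At a terminal pair it is bounded by $\gamma_k\|f_{[k+1]}-g_{[k+1]}\|_\infty$. Both are at most $\delta$, which proves the first inequality.

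\textbf{$H$-step contraction.} The one-step bound above is too coarse to iterate, so I will work with a finer pointwise quantity. For each $k$ and $j\ge 1$, let $e_k^{(j)}(s,a)=|(\mathbf{T}_{\mathcal{U}}^j\mathbf{f})_k(s,a)-(\mathbf{T}_{\mathcal{U}}^j\mathbf{g})_k(s,a)|$. Unrolling the recursion, $e_k^{(j)}(s,a)$ is bounded by an expectation along a trajectory generated by $P_k$. At each step, the action inside $V$ is chosen by whichever of the two maximizers is worse, or drawn from $\pi_k^\circ$. Each time a terminal pair is hit, the trajectory moves to the next stage via $\phi$ and picks up the factor $\gamma$.

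Within stage $k$, a terminal pair must occur within $H_k$ steps, since $\tau\le H_k$. So any path of $j$ Bellman backups starting in stage $k$ with $j\ge H$ must cross stage boundaries at least $K$ times. The key claim I would prove by induction on $j$ is a uniform bound of the form
\[
e_k^{(j)} \le \Big(\max \text{ over admissible paths of } \textstyle\prod \gamma \text{ along crossings}\Big)\,\delta .
\]
A path that completes a full cycle accumulates $\gamma_{\mathrm{cycle}}$, and all other factors are at most $1$.

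\textbf{Main obstacle: the induction hypothesis.} To make this rigorous, I would index the hypothesis by the remaining horizon budget within the current stage. That is, define $h$ as the maximal number of further steps before termination is forced, and prove by induction that
\[
e_k^{(j)}(s,a) \le \delta \cdot \Gamma\text{-weight of the cheapest completion},
\]
where paths that exhaust all $H$ steps have necessarily traversed every stage. The worry is that the state-dependent horizon $H_k$ bounds $\tau$ from the start of the stage, not from an arbitrary $(s,a)$. I would handle this by augmenting the state with the within-stage step count; with that augmentation the bound is legitimate, as implicit in the problem setup.

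\textbf{Fixed point.} With $\|\mathbf{T}_{\mathcal{U}}^H\mathbf{f}-\mathbf{T}_{\mathcal{U}}^H\mathbf{g}\|_\infty\le\gamma_{\mathrm{cycle}}\delta$ and $\gamma_{\mathrm{cycle}}<1$, the map $\mathbf{T}_{\mathcal{U}}^H$ is a contraction on the complete space $\prod_k L_\infty$, so it has a unique fixed point $\mathbf{Q}$. Since $\mathbf{T}_{\mathcal{U}}\mathbf{Q}$ is also a fixed point of $\mathbf{T}_{\mathcal{U}}^H$, uniqueness gives $\mathbf{T}_{\mathcal{U}}\mathbf{Q}=\mathbf{Q}$. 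Any fixed point of $\mathbf{T}_{\mathcal{U}}$ is a fixed point of $\mathbf{T}_{\mathcal{U}}^H$, so this fixed point is unique.
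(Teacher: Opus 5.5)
Your proposal is correct and follows the paper's route: a pointwise non-expansion bound, then unrolling $H$ backups along a worst-case trajectory in which the finite horizons force a full pass through all $K$ stage transitions and hence a factor $\gamma_{\mathrm{cycle}}$, then a fixed-point argument. The differences are cosmetic: the paper proves a per-stage bound $\|(\mathbf{P}^{H_k}h)_k\|_\infty\le\gamma_k\|h_{[k+1]}\|_\infty$, chains it, and builds the Cauchy sequence by hand, whereas you count crossings globally and apply Banach to $\mathbf{T}_{\mathcal{U}}^H$ with the commuting trick; your caveat that $\tau\le H_k$ must hold from an arbitrary starting pair is exactly what the paper assumes silently when it asserts $\tau\le H_k$ for a chain started at any $(s_1,a_1)$.
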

We now introduce an \(\algname\), for estimating the optimal action-value vector $\mathbf{Q}_{\mathcal{U}}^*$ from offline data.

\section{Proposed Method: Cyclic Fitted Q-Iteration (\texorpdfstring{\(\algname\)}{CycleFQI})}
\label{sec:proposed_method}

\textit{data set Description.} In the offline reinforcement learning setting, our objective is to train stage-specific Q-functions using pre-collected data sets. The complete data set $\mathcal{D}$ consists of stage-specific components $\mathcal{D} = \{\mathcal{D}_k\}_{k=1}^K$, where each $\mathcal{D}_k = \{ (s_k^i, a_k^i, r_k^i, s'^{i}_k) \}_{i=1}^{n_k}$ contains $n_k$ transition tuples collected during stage $k$. The total number of samples is $n = \sum_{k=1}^K n_k$. 

Each tuple $(s_k^i, a_k^i, r_k^i, s'^{i}_k)$ represents a transition: from state $s_k^i$, taking action $a_k^i$ resulted in reward $r_k^i$ and subsequent state $s'^{i}_k$. We assume the rewards $\{ r_k^i \}_{i=1}^{n_k}$ are independent given the state-action pairs, sampled from $R_k(\cdot \mid s_k^i, a_k^i)$, and $s'^{i}_k$ is sampled from the transition kernel $P_k(\cdot \mid s_k^i, a_k^i)$.

\subsection{Algorithm: Cyclic Fitted Q-Iteration}
\label{subsec:algorithm}

The \textit{Cyclic Fitted Q-Iteration (\(\algname\))}, presented in Algorithm \ref{alg:cfqi}, adapts the Fitted Q-Iteration (FQI) framework \citep{ernst2005tree} to the cyclic MDP structure defined in Section \ref{sec:problem}. \(\algname\) iteratively updates a vector of stage-specific Q-functions, $\mathbf{Q}^{(m)} = (Q_1^{(m)}, \dots, Q_K^{(m)})$, aiming to approximate the unique fixed point $\mathbf{Q}_{\mathcal{U}}^*$ of the Bellman operator $\mathbf{T}_{\mathcal{U}}$.

In each iteration $m$, the algorithm computes the target values $y_k^i$ for each transition using the estimates from the previous iteration $\mathbf{Q}^{(m-1)}$:
\begin{equation}\label{eq:target_value_update}
y_k^i = r_k^i + 
\begin{cases} 
V_k^{(m-1)}(s_k'^{i}) & \text{if } (s_k^i, a_k^i) \notin \mathcal{T}_k, \\
\gamma_k V_{[k+1]}^{(m-1)}(\phi_k(s_k'^{i})) & \text{if } (s_k^i, a_k^i) \in \mathcal{T}_k,
\end{cases}
\end{equation}
where $V_k^{(m-1)}$ is obtained via Equation \ref{eq:state_value_function}. This construction explicitly links the stages: the target for stage $k$ depends not only on its own previous value $V_k^{(m-1)}$ but also on the next stage's value $V_{[k+1]}^{(m-1)}$ at terminal transitions, reflecting the inherent cyclic dependency of the framework across iterations.

Although the stages are theoretically coupled through these inter-stage dependencies, a key feature of \(\algname\) is that within each single iteration $m$, the regression tasks for each stage $k$ become computationally independent once the targets $\{y_k^i\}$ are fixed using $\mathbf{Q}^{(m-1)}$. This allows the algorithm to update the Q-function vector by solving $K$ separate least-squares problems:
\begin{equation}\label{eq:q_update_regression}
Q_k^{(m)} = \argmin\limits_{f \in \mathcal{F}_k} \sum_{i=1}^{n_k} \left( f(s_k^i, a_k^i) - y_k^i \right)^2.
\end{equation}
This stage-wise decomposition ensures that the complex cyclic dependencies are updated synchronously in each iteration, maintaining a structured flow of information across the entire cycle while allowing for efficient implementation within each fitting step.

\begin{algorithm}[th]
\caption{Cyclic Fitted Q-Iteration (\(\algname\))}
\label{alg:cfqi}
\begin{algorithmic}[1]
    \State \textbf{Input}: 
    \Statex \quad data sets \( \{\mathcal{D}_k\}_{k=1}^K \), Update set \(\mathcal{U} \subseteq \{1,\dots,K\}\), Fixed policies \(\{\pi_k^{\circ}\}_{k \notin \mathcal{U}}\)
    \Statex \quad Function classes \( \{\mathcal{F}_k\}_{k=1}^K \), Initial Q-functions \( \mathbf{Q}^{(0)} \), Iterations \( M \)
    \Statex \quad Termination sets \( \{\mathcal{T}_k\} \), Discounts \( \{\gamma_k\} \), Transition maps \( \{\phi_k\} \)
    
    \vspace{0.5em}
    \For{\( m = 1 \) to \( M \)} \Comment{Global iteration for cycle-wide convergence}
        \For{\( k = 1 \) to \( K \)} \Comment{Stage-wise parallelizable update}
            
            \State \textbf{Step 1: Compute Target Values}
            \State For each transition tuple \( (s_k^i, a_k^i, r_k^i, s'^i_k) \in \mathcal{D}_k \):
            \State \quad \( y_k^i = r_k^i + 
                \begin{cases} 
                V_k^{(m-1)}(s_k'^{i}) & \text{if } (s_k^i, a_k^i) \notin \mathcal{T}_k \\
                \gamma_k V_{[k+1]}^{(m-1)}(\phi_k(s_k'^{i})) & \text{if } (s_k^i, a_k^i) \in \mathcal{T}_k
                \end{cases} \) \Comment{Using Equation \ref{eq:state_value_function} and \ref{eq:target_value_update}}
            
            \vspace{0.5em}
            \State \textbf{Step 2: Least-Squares Fitting}
            \State \( Q_k^{(m)} = \argmin\limits_{f \in \mathcal{F}_k} \sum_{i=1}^{n_k} \left( f(s_k^i, a_k^i) - y_k^i \right)^2 \) \Comment{Fit new Q-function to targets}
            
        \EndFor
    \EndFor
    \vspace{0.5em}

    \State \textbf{Output}: Policy vector \( \boldsymbol{\pi}^{(M)} = (\pi_1^{(M)}, \dots, \pi_K^{(M)}) \), where for each stage \( k \):
    \State \quad \(\pi_k^{(M)}(s_k) = 
        \begin{cases} 
        \argmax_{a_k \in \mathcal{A}_k} Q_k^{(M)}(s_k, a_k) & \text{if } k \in \mathcal{U} \\ 
        \pi_k^{\circ}(s_k) & \text{if } k \notin \mathcal{U} 
        \end{cases}\)
\end{algorithmic}
\end{algorithm}

\section{Finite-Sample Analysis for \texorpdfstring{\(\algname\)}{CycleFQI}}
\label{sec:finite_sample_analysis}

This section establishes \textit{finite-sample guarantees} for \(\algname\). The analysis applies to any fixed update set \(\mathcal{U} \subseteq \{1,\dots,K\}\), where the goal is to optimize policies for stages \(k \in \mathcal{U}\) while keeping policies \(\pi_k^\circ\) fixed for stages \(k \notin \mathcal{U}\). We provide high-probability uniform bounds on the suboptimality gap across all \(K\) stages. 

Let \(V^*\) denote the optimal state-value function. We define the expected value at stage \(k\) with respect to an initial distribution \(\eta_k \in \Delta(\mathcal{S}_k)\) for the optimal policy and the learned policy \(\pi^{(M)}\) returned by Algorithm \ref{alg:cfqi} as follows:
\begin{align*}
    v_k^* &= \mathbb{E}_{s_k \sim \eta_k}[V_k^*(s_k)], \\
    v_k^{(M)} &= \mathbb{E}_{s_k \sim \eta_k}[V_k^{\pi^{(M)}}(s_k)].
\end{align*}
Our primary objective is to bound the global suboptimality gap defined as the sup-norm difference between the optimal and learned value vectors:
\[
    \text{Gap}(\pi^{(M)}) \coloneqq \norminf{\mathbf{v}^* - \mathbf{v}^{(M)}},
\]
where \(\mathbf{v}^* = (v_1^*, \dots, v_K^*)\) and \(\mathbf{v}^{(M)} = (v_1^{(M)}, \dots, v_K^{(M)})\). These bounds characterize the complex interplay between function class complexities \(\mathcal{F}_k\), per-stage sample sizes \(n_k\), and the approximation errors as they propagate through the cyclic structure.

To this end, our analysis proceeds in three steps: we first establish a general high-probability suboptimality bound in Theorem \ref{thm:main_convergence}, then derive specific convergence rates under Besov regularity in Theorem \ref{thm:finite_sample_rate}, and finally provide expected rates for stochastic algorithms such as Random Forests in Theorem \ref{thm:random_forest}.

\subsection{High-Probability Bound on Suboptimality Gap}
\label{subsection:high_probability_bound}

Building on per-stage analysis, we bound the global suboptimality gap by analyzing error propagation over full cycles.

\begin{assumption}[Sampling Distribution Coverage]
\label{assumption:coverage}
For some constant \( C > 0 \), for each stage \( k \), the data distribution \( \nu_k \) on $\mathcal{S}_k \times \mathcal{A}_k$ satisfies
\[ \sup_{\beta_k \in \mathcal{B}_k} \left\| \frac{d \beta_k}{d \nu_k} \right\|_\infty \leq C, \]
where \(\mathcal{B}_k\) is the set of admissible state-action distributions visited within stage \(k\) over trajectories generated by any (possibly non-stationary) policy starting from the initial distribution \(\eta_{k_0}\).
\end{assumption}

\begin{remark}[Role of Distributional Coverage]
Assumption \ref{assumption:coverage} is a standard offline RL condition \citep{chen2019information, lazaric2016analysis, munos2003error}, adapted for the cyclic multi-stage structure by requiring it to hold stage-wise. It ensures that \(\nu_k\) provides sufficient coverage over relevant distributions \(\beta_k\) arising within potentially complex cyclic trajectories. Crucially, this uniform coverage controls the distributional mismatch during optimization; without it, approximation errors would not merely persist but amplify through the transition maps \(\phi_k\) across repeated cycles, preventing global convergence.
\end{remark}

We define the stage-wise \textit{approximation error} for the constrained optimal Q-function \(Q_k^*\) as:
\begin{equation}\label{eq:approx_error}
\epsilon_{\mathrm{approx}, k} := \inf_{f \in \mathcal{F}_k} \normtwo[,\nu_k]{Q_k^* - f}.
\end{equation}
This captures the expressivity of \(\mathcal{F}_k\) under \(\nu_k\), avoiding stricter assumptions such as realizability \citep{chen2019information, xie2021batch}. 

The cyclic environment of \(\algname\) introduces challenges such as stochastic horizons, which preclude the direct application of single-step Bellman contraction arguments. Our analysis addresses this by employing a modified performance difference lemma tailored to the cyclic setting.  This allows us to decouple the global suboptimality gap into local stage-wise errors, extending the analysis to uncountable function classes beyond the finite classes often considered in prior literature \citep{chen2019information, liu2020provably, xie2021batch}. Using covering numbers and Bernstein’s inequality, we establish the following high-probability guarantee.

\begin{theorem}[Suboptimality Bound for \(\algname\)]
\label{thm:main_convergence}
For any update set \(\mathcal{U} \subseteq \{1,\dots,K\}\), let \(\mathbf{v}^*\) and \(\mathbf{v}^{(M)}\) be the optimal and induced value vectors, respectively. Under Assumption \ref{assumption:coverage}, with probability at least \( 1 - \delta \),
\[
\|\mathbf{v}^* - \mathbf{v}^{(M)}\|_\infty \leq \frac{2 \sqrt{C} H^2}{(1 - \gamma_{\mathrm{cycle}})^2} \cdot \varepsilon + \frac{2 H Y}{1 - \gamma_{\mathrm{cycle}}} \cdot \gamma_{\mathrm{cycle}}^{\lfloor M/H \rfloor},
\]
where \( H = \sum_{k=1}^K H_k \), \( \varepsilon = \max_k \epsilon_k \), and the unified error \(\epsilon_k\) for each stage \(k\) is defined as:
\begin{equation}\label{eq:unified_error}
\epsilon_k = \sqrt{ 45 Y^2 n_k^{-1} \log\left( 2 M K \mathcal{N}_k(1/n_k)/\delta \right) + 40 \epsilon_{\mathrm{approx}, k}^2 }.
\end{equation}
\end{theorem}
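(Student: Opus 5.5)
The argument follows the classical FQI template of \citet{munos2008finite} and \citet{chen2019information}. We first establish a one-step regression bound per stage, then propagate the errors through the cyclic Bellman system using Proposition~\ref{prop:contraction}, and finally convert the $Q$-error into a value gap with a cyclic performance-difference argument.

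\textbf{Step 1 (per-iteration regression error).} Fix an iteration $m$ and a stage $k$. Conditional on $\mathbf{Q}^{(m-1)}$, the targets $y_k^i$ satisfy $\mathbb{E}[y_k^i\mid s_k^i,a_k^i]=(\mathbf{T}_{\mathcal{U}}\mathbf{Q}^{(m-1)})_k(s_k^i,a_k^i)$, and they are bounded by $Y$.

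Since $\mathbf{Q}^{(m-1)}$ depends on the data, we cover the target class. The map $\mathbf{Q}\mapsto(\mathbf{T}_{\mathcal{U}}\mathbf{Q})_k$ is $1$-Lipschitz in sup-norm, so it suffices to cover each $\mathcal{F}_j$ at scale $1/n_k$ with covering number $\mathcal{N}_k(1/n_k)$. On the cover we apply Bernstein's inequality to the centered excess loss $(f-y)^2-(g_k^*-y)^2$, where $g_k^*$ is the $L_2(\nu_k)$ projection target. Its variance is at most $4Y^2$ times its mean, which is the standard self-bounding property.

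A union bound over the $M$ iterations and $K$ stages (accounting for the factor $2MK/\delta$), combined with the usual fast-rate least-squares argument, should give
\[
\normtwo[,\nu_k]{Q_k^{(m)}-(\mathbf{T}_{\mathcal{U}}\mathbf{Q}^{(m-1)})_k}\le \epsilon_k \quad\text{for all } m,k
\]
with probability at least $1-\delta$. The constants $45$ and $40$ come from the Bernstein constants and from the inequality $(a+b)^2\le 2a^2+2b^2$ applied to the approximation term. For that term we need the misspecification to be measured against $Q_k^*$, not against $\mathbf{T}_{\mathcal{U}}\mathbf{Q}^{(m-1)}$. This is handled by bounding $\inf_f\|f-\mathbf{T}\mathbf{Q}^{(m-1)}\|$ through $\epsilon_{\mathrm{approx},k}$ plus a term that is absorbed into the recursion. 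I expect this to be the most delicate bookkeeping point, because without completeness the recursion is needed to absorb it.

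\textbf{Step 2 (error propagation over cycles).} Write $\mathbf{Q}^{(m)}=\mathbf{T}_{\mathcal{U}}\mathbf{Q}^{(m-1)}+\boldsymbol{\xi}^{(m)}$. Unrolling gives $\mathbf{Q}^*-\mathbf{Q}^{(M)}$ as a sum of compositions of the (policy-dependent) transition operators $P^{\pi}$ applied to the $\boldsymbol{\xi}^{(m)}$, plus an initialization term.

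By Proposition~\ref{prop:contraction}, every block of $H$ consecutive applications contracts by $\gamma_{\mathrm{cycle}}$. The initialization term is therefore at most $Y\gamma_{\mathrm{cycle}}^{\lfloor M/H\rfloor}$. Summing the noise terms gives at most $H$ terms per block, over a geometric series of blocks, for a factor $H/(1-\gamma_{\mathrm{cycle}})$.

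To pass from the $L_2(\nu_k)$ bounds of Step 1 to the distributions actually visited, we use Jensen's inequality and Assumption~\ref{assumption:coverage}. Each composed measure $\eta P^{\pi_1}\cdots P^{\pi_j}$ lies in $\mathcal{B}_k$, so $\|\xi\|_{1,\beta}\le\|\xi\|_{2,\beta}\le\sqrt{C}\,\|\xi\|_{2,\nu_k}$. The intermediate policies are non-stationary mixtures, which is why the assumption is stated for arbitrary policies.

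\textbf{Step 3 (cyclic performance difference).} We apply a performance-difference lemma adapted to stage-level discounting. It gives $v_k^*-v_k^{(M)}\le\sum$ of occupancy-weighted terms $(Q^*-Q^{(M)})(s,\pi^*)-(Q^*-Q^{(M)})(s,\pi^{(M)})$. These terms are bounded using the greedy property of $\pi^{(M)}$ on the stages in $\mathcal{U}$; on the stages outside $\mathcal{U}$ the two policies coincide. Since the occupancy mass over a cycle is at most $H/(1-\gamma_{\mathrm{cycle}})$, this contributes a further factor $2H/(1-\gamma_{\mathrm{cycle}})$.

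Combining Steps 1--3 gives $\frac{2\sqrt{C}H^2}{(1-\gamma_{\mathrm{cycle}})^2}\varepsilon+\frac{2HY}{1-\gamma_{\mathrm{cycle}}}\gamma_{\mathrm{cycle}}^{\lfloor M/H\rfloor}$ with $\varepsilon=\max_k\epsilon_k$.

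The main obstacle is Step 2. Because the stage horizons $\tau_m$ are stochastic, one-step contraction fails. The cycle-block argument has to be arranged so that contraction is applied only to full cycles, while within-cycle propagation is bounded by non-expansion with a count of at most $H$ steps. My plan is to index the error-propagation terms by the number of completed cycles rather than by the number of iterations.
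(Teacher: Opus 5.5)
Your three steps follow the same chain as the paper: Lemma~\ref{lemma:regression_bound} plus a union bound over the $MK$ regressions, then the unrolling of Lemma~\ref{lemma:q_error_bound} with coverage, then the cyclic performance difference with the greedy step. However, the point you flag as delicate in Step~1 does not go through.

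You propose to bound the misspecification of the actual regression target, $\inf_{f\in\mathcal{F}_k}\|f-(\mathbf{T}_{\mathcal{U}}\mathbf{Q}^{(m-1)})_k\|_{2,\nu_k}$, by $\epsilon_{\mathrm{approx},k}+\|(\mathbf{T}_{\mathcal{U}}\mathbf{Q}^{(m-1)})_k-(\mathbf{T}_{\mathcal{U}}\mathbf{Q}^{*})_k\|_{2,\nu_k}$, and then absorb the second term into the recursion. That term is of the order of the previous total error $\mathbf{Q}^{(m-1)}-\mathbf{Q}^*$, undiscounted inside a stage. In the regression bound it is multiplied by $\sqrt{40}$, and by a further $\sqrt{C}$ once you change measure from $\nu_k$ to an admissible $\beta_k$. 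The recursion therefore picks up a factor of order $1+\sqrt{40C}>1$ at \emph{every} iteration. The cyclic structure only supplies a factor $\gamma_{\mathrm{cycle}}$ once per $H$ iterations. You would get a bound growing geometrically in $M$, not the series $H/(1-\gamma_{\mathrm{cycle}})$. This is the familiar fact that realizability of $Q^*$ plus concentrability does not control FQI without a completeness-type condition.

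The paper never attempts this reduction. Lemma~\ref{lemma:iterative_regression} takes as a \emph{hypothesis} that $\inf_{f\in\mathcal{F}_k}\|(\mathbf{T}\mathbf{f}^j)_k-f\|_{2,\nu_k}\le\epsilon_{\mathrm{approx},k}$ for every iterate $j$. In other words, misspecification is measured against the Bellman backups of the iterates, and the theorem's $\epsilon_{\mathrm{approx},k}$ enters only through that hypothesis. To reach the stated bound you need to adopt that assumption explicitly, rather than derive it from $\inf_f\|Q_k^*-f\|_{2,\nu_k}$.

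There are two smaller mismatches.

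First, covering the data-dependent targets will not reproduce the stated $\epsilon_k$. The target $(\mathbf{T}_{\mathcal{U}}\mathbf{Q})_k$ depends on both $Q_k$ and $Q_{[k+1]}$, so a $1/n_k$-cover of the targets costs $\mathcal{N}_k(1/n_k)\,\mathcal{N}_{[k+1]}(1/n_k)$, not $\mathcal{N}_k(1/n_k)$. A bound uniform over targets also makes the union over the $M$ iterations superfluous. The factor $2MK\mathcal{N}_k(1/n_k)$ in the theorem arises because the paper applies the fixed-target Lemma~\ref{lemma:regression_bound} to each pair $(j,k)$ with $f^*=(\mathbf{T}\mathbf{f}^j)_k$ and union-bounds over the $MK$ pairs, without covering the targets. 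Your concern about data reuse is legitimate, but the stated constants correspond to that per-pair treatment.

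Second, the decay in Step~2 does not come from Proposition~\ref{prop:contraction}, which is a sup-norm statement about exact iterates of $\mathbf{T}_{\mathcal{U}}^H$. What the paper uses is the pathwise bound $\Gamma_t\le\gamma_{\mathrm{cycle}}^{\lfloor t/H\rfloor}$ along trajectories driven by error-maximizing actions for $k\in\mathcal{U}$, or by $\pi_k^\circ$ otherwise. Your plan to index terms by completed cycles amounts to exactly this.
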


Theorem \ref{thm:main_convergence} explicitly highlights the dependence on sample size \(n_k\), function class capacity \(\mathcal{N}_k\), and approximation errors \(\epsilon_{\mathrm{approx}, k}\) relative to the constrained optimum. The first term represents the persistent error, where stage-wise inaccuracies are amplified through the cyclic geometry by the \((1 - \gamma_{\mathrm{cycle}})^{-2}\) factor. The second term denotes the optimization error, which vanishes at a geometric rate. 

Notably, the term \(\epsilon_k\) in Equation  \ref{eq:unified_error} serves as a unified risk metric for stage $k$, balancing two distinct sources of error. The first component, involving the covering number \(\mathcal{N}_k(1/n_k)\) and the \(\log(1/\delta)\) term, characterizes the \textit{statistical estimation risk}. It reflects the generalization gap inherent in learning from finite samples, where the complexity of the function class \(\mathcal{F}_k\) dictates the variance of the empirical risk minimizer. The second component, represented by \(\epsilon_{\mathrm{approx}, k}\), quantifies the \textit{approximation risk} arising from the potential bias of the function class. By combining these terms under a square root, \(\epsilon_k\) encapsulates the total learning error at each stage, which then propagates through the cycle. This explicit decomposition allows for a more granular understanding of how stage-specific data quality and model selection collectively influence the global suboptimality gap.

\subsection{Finite-Sample Rates on Suboptimality Gap in Besov Spaces}
\label{subsection:besov}

Besov spaces \( B^s_{p,q}(\mathcal{X}) \) provide a refined framework for characterizing function smoothness, encompassing and extending classical Sobolev and Hölder spaces \citep{devore1993constructive}. The primary advantage of Besov regularity in our cyclic reinforcement learning framework is its unique ability to capture diverse and spatially inhomogeneous regularities. Unlike standard smoothness classes, Besov spaces can model functions that exhibit sharp localized variations or decision boundaries in certain regions while remaining highly smooth elsewhere \citep{ciesielski1982spline, hardle2012wavelets}. Such properties are frequently observed in optimal Q-functions, where abrupt state-transitions or policy changes create localized complexity that conventional Sobolev-based analyses often fail to capture accurately.

For parameters \( s > 0 \) and \( 1 \leq p, q \leq \infty \), the space \( B^s_{p,q}(\mathcal{X}) \) consists of functions \( f: \mathcal{X} \to \mathbb{R} \) with finite Besov norm:
\[
\|f\|_{B^s_{p,q}(\mathcal{X})} := \|f\|_{L^p(\mathcal{X})} + \left( \int_0^1 \left( t^{-s} \omega_{k'}(f, t)_p \right)^q \frac{dt}{t} \right)^{1/q} < \infty,
\]
where \( \omega_{k'}(f, t)_p \) is the \( k' \)-th order \( L^p \)-modulus of smoothness for \( k' > s \). The index $s$ quantifies the degree of differentiability, while $p$ and $q$ allow for fine-grained control over integral and oscillatory properties. Assuming the stage-wise constrained optimal Q-functions $Q_k^*$ lie in these spaces enables a precise, non-parametric approximation analysis for diverse ML architectures \citep{siegel2023optimal, wada2023approximation}.

\begin{assumption}[Besov Regularity]\label{assumption:besov}
For each stage \( k \in \{1,\dots,K\} \), the constrained optimal Q-function \( Q_k^* \) satisfies
\[
Q_k^*(\cdot, a_k) \in B^{s_k}_{p_k,q_k}(\mathcal{S}_k), \quad \forall a_k \in \mathcal{A}_k,
\]
for some parameters \( s_k > 0 \), \( 1 \leq p_k, q_k \leq \infty \), with the embedding condition \( s_k > d_k/p_k \).
\end{assumption}

This regularity condition facilitates deriving \textit{finite-sample error rates} by enabling sharp bounds on the approximation error $\epsilon_{\mathrm{approx}, k}$ defined in Equation \ref{eq:approx_error}. The requirement \( s_k > d_k/p_k \) is essential as it ensures functions satisfy the Sobolev embedding condition, allowing for uniform control of approximation errors via the supremum norm \citep{adams2003sobolev}.

\begin{theorem}[Finite Sample Convergence Rate under Besov Regularity]\label{thm:finite_sample_rate}
Consider Algorithm \ref{alg:cfqi} with stage-wise function classes \( \{\mathcal{F}_k\}_{k=1}^K \), where each \( \mathcal{F}_k \) is characterized by a complexity parameter \( D_k \). Suppose Assumptions~\ref{assumption:coverage} and \ref{assumption:besov} hold. Assume that for each stage \( k \in \{1,\dots,K\} \), there exists an approximation exponent \( \alpha_k > 0 \)—determined by the structural properties of \( \mathcal{F}_k \) and the smoothness \( s_k \)—such that the following capacity and approximation conditions are satisfied for some constants \( \mathcal{C}_{k,\text{cap}}, \mathcal{C}_{k,\text{approx}} > 0 \):
\begin{enumerate}[label=(C\arabic*), ref=C\arabic*]
    \item \label{cond:C1}
    \textbf{Metric Entropy (Capacity Control):} For any \( \epsilon > 0 \),
    \[
    \log \mathcal{N}_k(\epsilon, \mathcal{F}_k, \|\cdot\|_\infty)
    \leq \mathcal{C}_{k,\text{cap}} D_k \log\left( \frac{\mathrm{poly}(D_k)}{\epsilon} \right).
    \]

    \item \label{cond:C2}
    \textbf{Approximation Power (Bias Bound):} The approximation error relative to the constrained optimum
    \( Q_k^* \in B^{s_k}_{p_k, q_k} \) satisfies
    \[
    \inf_{f \in \mathcal{F}_k} \|Q_k^* - f\|_{2, \nu_k}
    \leq \mathcal{C}_{k,\text{approx}} D_k^{-\alpha_k}.
    \]
\end{enumerate}

If the stage-wise complexity is scaled as \( D_k \asymp n_k^{1/(2\alpha_k + 1)} \) to balance estimation and approximation risks, then for any iteration count \( M \ge \Omega(\mathrm{poly}(\sum\limits_{k=1}^K n_k)) \), the value vector \( \mathbf{v}^{(M)} \) satisfies, with probability at least \( 1 - \delta \):
\[
\|\mathbf{v}^* - \mathbf{v}^{(M)}\|_\infty = \widetilde{\mathcal{O}} \left( \frac{\sqrt{C} H^2 Y}{(1 - \gamma_{\mathrm{cycle}})^2} \cdot \max_{k \in \{1,\dots,K\}} n_k^{-\frac{\alpha_k}{2\alpha_k+1}} \right),
\]
where \( \mathbf{v}^* \) is the constrained optimal value vector and \( \widetilde{\mathcal{O}}(\cdot) \) hides terms logarithmic with respect to \( n_k, 1/\delta, M, \) and \( K \).
\end{theorem}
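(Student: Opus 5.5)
The plan is to obtain Theorem~\ref{thm:finite_sample_rate} as a direct corollary of Theorem~\ref{thm:main_convergence}, by inserting conditions (\ref{cond:C1})--(\ref{cond:C2}) into the unified stage error $\epsilon_k$ of Equation~\ref{eq:unified_error} and then choosing $M$ large enough that the optimization term becomes negligible. Throughout, Assumption~\ref{assumption:besov} is used only to guarantee that (\ref{cond:C2}) holds with some exponent $\alpha_k$ and that $Q_k^*$ is bounded, so that $\epsilon_{\mathrm{approx},k}$ is finite. The covering number in Theorem~\ref{thm:main_convergence} is $\mathcal{N}_k(1/n_k)$, which I read as the $\|\cdot\|_\infty$-covering number at scale $1/n_k$.

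\emph{Step 1: bound $\epsilon_k$.} Condition (\ref{cond:C1}) gives
\[
\log \mathcal{N}_k(1/n_k) \le \mathcal{C}_{k,\text{cap}} D_k \log\bigl(n_k\,\mathrm{poly}(D_k)\bigr) = \widetilde{\mathcal{O}}(D_k),
\]
so the estimation part of $\epsilon_k^2$ is at most
\[
45Y^2 n_k^{-1}\bigl[\mathcal{C}_{k,\text{cap}} D_k \log(n_k\,\mathrm{poly}(D_k)) + \log(2MK/\delta)\bigr].
\]
Condition (\ref{cond:C2}) gives $40\epsilon_{\mathrm{approx},k}^2 \le 40\,\mathcal{C}_{k,\text{approx}}^2 D_k^{-2\alpha_k}$. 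Taking $D_k \asymp n_k^{1/(2\alpha_k+1)}$ makes both terms of order $n_k^{-2\alpha_k/(2\alpha_k+1)}$, the first up to logarithmic factors in $n_k, M, K, 1/\delta$. Since $Y$ multiplies only the estimation term, I will absorb the constant $\mathcal{C}_{k,\text{approx}}$ into the $\widetilde{\mathcal{O}}$, or equivalently assume $\mathcal{C}_{k,\text{approx}} \lesssim Y$, which is natural because $Q_k^*\in[0,Y]$. This yields
\[
\epsilon_k = \widetilde{\mathcal{O}}\bigl(Y n_k^{-\alpha_k/(2\alpha_k+1)}\bigr),
\]
and therefore $\varepsilon = \max_k \epsilon_k = \widetilde{\mathcal{O}}\bigl(Y \max_k n_k^{-\alpha_k/(2\alpha_k+1)}\bigr)$.

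\emph{Step 2: kill the optimization term.} The second term of Theorem~\ref{thm:main_convergence} is $\frac{2HY}{1-\gamma_{\mathrm{cycle}}}\gamma_{\mathrm{cycle}}^{\lfloor M/H\rfloor}$. It suffices to take
\[
\lfloor M/H\rfloor \ge \frac{\log\bigl(\max_k n_k^{\alpha_k/(2\alpha_k+1)}\bigr)}{\log(1/\gamma_{\mathrm{cycle}})}.
\]
Then this term is at most $\frac{2HY}{1-\gamma_{\mathrm{cycle}}}\min_k n_k^{-\alpha_k/(2\alpha_k+1)}$, which is dominated by the first term because $H \le H^2$ and $1-\gamma_{\mathrm{cycle}} \ge (1-\gamma_{\mathrm{cycle}})^2$ (taking $C\ge 1$, or absorbing it into constants). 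The required $M$ is only of order $H\log n/\log(1/\gamma_{\mathrm{cycle}})$, so any $M \ge \Omega(\mathrm{poly}(n))$ suffices. This is more than enough, provided the $\log M$ appearing in $\epsilon_k$ stays logarithmic in $n$; for $M$ polynomial in $n$ it is $\widetilde{\mathcal{O}}(1)$.

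\emph{Step 3: combine.} Substituting into Theorem~\ref{thm:main_convergence} on the same probability-$(1-\delta)$ event gives
\[
\|\mathbf{v}^*-\mathbf{v}^{(M)}\|_\infty \le \frac{2\sqrt{C}H^2}{(1-\gamma_{\mathrm{cycle}})^2}\,\widetilde{\mathcal{O}}\bigl(Y\max_k n_k^{-\alpha_k/(2\alpha_k+1)}\bigr) + \text{(dominated term)},
\]
which is the claimed rate. The main obstacle is bookkeeping rather than depth. First, one must verify that the $\mathrm{poly}(D_k)$ and $\log M$ factors are genuinely logarithmic under the stated scaling, which requires $M$ to grow at most polynomially; I would state this reading of the hypothesis explicitly. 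Second, one must make sure the approximation constant combines correctly with the $Y$ prefactor. If $\mathcal{C}_{k,\text{approx}}$ cannot be bounded by a multiple of $Y$, I would present the rate with $\max(Y,\mathcal{C}_{k,\text{approx}})$ and note that it reduces to the stated form.
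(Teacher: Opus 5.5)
Your proposal is correct and follows the paper's own proof. Like the paper, you treat the result as a corollary of Theorem~\ref{thm:main_convergence}: you substitute (\ref{cond:C1})--(\ref{cond:C2}) into $\epsilon_k$, balance $\sqrt{D_k/n_k}$ against $D_k^{-\alpha_k}$ to get $D_k \asymp n_k^{1/(2\alpha_k+1)}$, and let the $\gamma_{\mathrm{cycle}}^{\lfloor M/H\rfloor}$ term vanish for large $M$. Your extra bookkeeping (the explicit threshold on $M$, the $\log(2MK/\delta)$ term, and the $Y$ versus $\mathcal{C}_{k,\text{approx}}$ prefactor) is more explicit than the paper's, which simply absorbs these into constants. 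Your worry about super-polynomial $M$ is already covered by the statement's convention that $\widetilde{\mathcal{O}}$ hides factors logarithmic in $M$.
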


Theorem \ref{thm:finite_sample_rate} serves as a blueprint for model selection, demonstrating that the optimal rate is achieved by balancing the sample size \(n_k\) with a complexity \(D_k\) tailored to the stage-specific smoothness \(s_k\).

As summarized in Table \ref{tab:approx_rates_covering}, our framework offers the flexibility to employ a wide range of approximators—from classical bases like B-splines and Wavelets to modern architectures such as ReLU DNNs and Transformers. Appendix B provides a comprehensive analysis of how these diverse function classes satisfy the required Besov regularity conditions. This modularity ensures that the choice of \(\mathcal{F}_k\) can be adapted to the intrinsic complexity of each stage.

Crucially, our analysis confirms that the cyclic structure scales the global error only by a constant factor $(1-\gamma_{\mathrm{cycle}})^{-2}$, preserving the fundamental non-parametric rates of the chosen regressor. This effectively \textit{decouples} the learning difficulties across stages, ensuring that the global convergence rate is dictated by the \textit{worst-case stage-wise complexity}, rather than the accumulated complexity of the entire trajectory.

\begin{table}[!t]
\centering
\caption{Approximation rates (\(\alpha_k\)) for Besov spaces across various function classes.}
\renewcommand{\arraystretch}{1.2}
{\fontsize{8}{10}\selectfont
\begin{tabular}{|c|c|c|c|}
\hline
Function Class $\mathcal{F}_k$ & Hyperparameter $D_k$ & Structural Properties & $\alpha_k$ \\
\hline\hline
\makecell[c]{\vspace{0.3em}B-Spline\\\citep{ciesielski1982spline}\vspace{0.3em}} & $N^{d_k}$ (knots) & Degree $m \ge 0$ & \scriptsize $\min(m+1,s_k)/d_k$ \\
\hline
\makecell[c]{\vspace{0.3em}Wavelet\\\citep{hardle2012wavelets}\vspace{0.3em}} & $2^{J d_k}$ (resolution) & Vanishing moment $r \ge 1$ & \scriptsize $\min(r, s_k)/d_k$ \\
\hline
\makecell[c]{\vspace{0.3em}Radial Basis Function\\\citep{hangelbroek2010nonlinear}\vspace{0.3em}} & $N$ (bases) & $1/q_k \le 1 + s_k/d_k$ & $s_k/d_k$ \\
\hline
\makecell[c]{\vspace{0.3em}ReLU DNN\\\citep{siegel2023optimal}\vspace{0.3em}} & Depth $L \times$ Width $W$ & $W \asymp d_k$ & $2s_k/d_k$ \\
\hline
\makecell[c]{\vspace{0.3em}Transformer\\\citep{takakura2023approximation}\vspace{0.3em}} &
$2^L$ (Depth $L$) &
\makecell[c]{\scriptsize \vspace{0.2em}Width $\asymp d_k 2^L$, \\ Sparsity $\asymp d_k^2 2^L\vspace{0.2em}$} &
$s_k/d_k$ \\
\hline
\end{tabular}
}
\label{tab:approx_rates_covering}
\end{table}

\subsubsection{Mitigating the Curse of Dimensionality via Decomposition}
\label{subsubsec:curse_of_dim}

To demonstrate the practical benefits of the proposed structural decoupling, particularly within the Besov framework, we contrast $\algname$ against a standard \textit{Flattened} baseline that models the joint state space directly.

To quantify the dimensional advantage, we first define the effective input dimensions for the structured and flattened approaches as follows:
\[
d_{\max} := d + \max_{1 \le k \le K} d_k, \quad \text{and} \quad d_{\text{total}} := d + \sum_{k=1}^K d_k.
\]
The distinction between these two quantities is the key driver of the performance gap. Consider a representative balanced setting where all stage-specific dimensions $d_k$ are approximately equal to a constant $\bar{d}$. In this case, the joint dimension $d_{\text{total}}$ scales linearly with the number of stages $K$, whereas the structured dimension $d_{\max}$ remains constant. As $K$ increases, this roughly $K$-fold difference leads to a substantial separation in sample complexity. 

\textit{Setup and Baseline Construction.}
We assume a balanced data set consisting of $n$ transition tuples for each stage, resulting in a total sample size of $N = Kn$. For the proposed $\algname$, the input at any stage $k$ is naturally bounded by dimension $d_{\max}$. In contrast, the flattened baseline operates on the joint state space with dimension $d_{\text{total}}$, embedding stage-specific features into a unified high-dimensional vector via zero-padding.
The baseline treats the problem as a single-stage MDP defined over the disjoint union of stage-wise actions and applies standard Fitted Q-Iteration on this joint domain. For statistical error analysis, the baseline uses the entire pooled data set of size $N$, corresponding to a naive construction where stage modularity is ignored.

\textit{Regularity and Favorable Extension.}
For a consistent comparison, we assume that all stage-wise optimal Q-functions $\{Q_k^*\}_{k=1}^K$ share identical Besov smoothness parameters $s, p, q$ as defined in Assumption~\ref{assumption:besov}. Defining the target regularity for the flattened baseline requires care. To isolate the effect of dimensionality, we adopt a \textit{favorable extension} assumption. We assume the high-dimensional target function preserves the original smoothness parameter \(s\) of the local functions, 
so that any difference in convergence rates can be attributed solely to the increase in input dimension \(d_{\text{total}}\), not to a loss of smoothness. This allows the analysis to focus solely on the impact of the increased input dimension $d_{\text{total}}$ on convergence rates, giving the baseline the benefit of the doubt regarding smoothness.

\textit{Upper Bound Comparison.}
We now explicitly compare convergence rates by specializing the general result of Theorem~\ref{thm:finite_sample_rate}. We assume that the function approximators satisfy the capacity control condition \eqref{cond:C1} and the approximation power condition \eqref{cond:C2}, with a common approximation exponent $\alpha = \rho s / d_{\text{in}}$.

To facilitate a direct comparison based on the per-stage sample size $n$, we treat the number of stages $K$ as a fixed constant in the asymptotic analysis. Consequently, factors depending solely on $K$ are absorbed into the notation, allowing us to express the baseline's rate in terms of $n$ despite its use of $Kn$ samples.

\begin{corollary}[Finite-Sample Error Comparison]
\label{cor:upper_bound_comparison}
Under the assumptions of Theorem~\ref{thm:finite_sample_rate} and conditions \eqref{cond:C1}--\eqref{cond:C2}, let the model capacity $D$ be optimally tuned and $M$ be sufficiently large. Then, with probability at least $1-\delta$, the estimation errors for the two approaches scale as follows:

\begin{enumerate}
    \item \textbf{Structured Approach ($\algname$):} Let $\mathbf{v}^{(M)}_{\mathrm{sep}}$ be the output of Algorithm~\ref{alg:cfqi}. With sample size $n$ per stage, the global error is dominated by the stage with the highest dimension:
    \[
    \|\mathbf{v}^* - \mathbf{v}^{(M)}_{\mathrm{sep}}\|_\infty
    = \widetilde{\mathcal{O}}\left(
    n^{-\frac{\rho s}{2\rho s + d_{\max}}}
    \right).
    \]

    \item \textbf{Flattened Baseline:} Let $\mathbf{v}^{(M)}_{\mathrm{flat}}$ be the output of standard Fitted Q-Iteration on the joint space. With total sample size $Kn$ operating on the joint dimension $d_{\text{total}}$, the error bound is:
    \[
    \|\mathbf{v}^* - \mathbf{v}^{(M)}_{\mathrm{flat}}\|_\infty
    = \widetilde{\mathcal{O}}\left(
    n^{-\frac{\rho s}{2\rho s + d_{\text{total}}}}
    \right).
    \]
\end{enumerate}
Since $d_{\text{total}}$ is significantly larger than $d_{\max}$, the exponent for the structured approach is strictly more favorable. This shows that the gain from dimensionality reduction outweighs the linear increase in sample size $K$, leading to asymptotically superior performance.
\end{corollary}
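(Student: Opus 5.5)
The plan is to obtain both bounds by specializing Theorem~\ref{thm:finite_sample_rate}, once to the $K$-stage structure and once to a single-stage ($K=1$) instance on the joint domain. In each case the only problem-specific input is the approximation exponent $\alpha = \rho s/d_{\text{in}}$, with $d_{\text{in}}$ equal to the relevant input dimension. The key algebraic identity, used twice, is
\[
\frac{\alpha}{2\alpha+1} = \frac{\rho s/d_{\text{in}}}{2\rho s/d_{\text{in}}+1} = \frac{\rho s}{2\rho s + d_{\text{in}}}.
\]
Throughout, $K$, $H$, $C$, $Y$ and $\gamma_{\mathrm{cycle}}$ are treated as fixed constants, and logarithmic factors are absorbed into $\widetilde{\mathcal{O}}$.

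\emph{Structured approach.} At stage $k$ the regression input has dimension at most $d_{\max}$ (the $d$ accounts for the action/covariate encoding). Conditions \eqref{cond:C1}--\eqref{cond:C2} therefore hold with $\alpha_k = \rho s/d_{\text{in},k} \ge \rho s/d_{\max}$. Since $\alpha\mapsto \alpha/(2\alpha+1)$ is increasing, each stage rate satisfies $n^{-\alpha_k/(2\alpha_k+1)} \le n^{-\rho s/(2\rho s+d_{\max})}$. Choosing $D_k \asymp n^{1/(2\alpha_k+1)}$ and $M$ polynomially large, Theorem~\ref{thm:finite_sample_rate} (with $n_k=n$) bounds the max over $k$ by this worst-stage term. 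The constant prefactor $\sqrt{C}H^2Y(1-\gamma_{\mathrm{cycle}})^{-2}$ is then absorbed.

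\emph{Flattened baseline.} I would check that the flattened construction is a genuine instance of the framework. It is a cyclic MDP with $K=1$: the joint state space is $[0,1]^{d_{\text{total}}-d}$ via zero-padding, and the action set is the disjoint union of the $\mathcal{A}_k$. Its transitions, terminal sets and discounts are inherited from the stages, so its optimal Q-function restricts to $Q_k^*$ on the $k$-th embedded slice. By the favorable-extension assumption this target lies in a Besov ball with smoothness $s$ on a $d_{\text{total}}$-dimensional domain, so \eqref{cond:C2} holds with $\alpha=\rho s/d_{\text{total}}$. The coverage Assumption~\ref{assumption:coverage} for the pooled distribution follows from the stage-wise one, with constant inflated by at most $K$ because the mixture weights are $1/K$. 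Applying Theorem~\ref{thm:finite_sample_rate} with sample size $Kn$ gives $(Kn)^{-\rho s/(2\rho s+d_{\text{total}})}$. This equals $n^{-\rho s/(2\rho s+d_{\text{total}})}$ up to the factor $K^{-\rho s/(2\rho s+d_{\text{total}})}\le 1$, which is absorbed since $K$ is fixed. The horizon constant $H$ of the flattened problem is the same sum of stage horizons, so the prefactors agree up to constants.

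The comparison statement then follows because $d_{\text{total}} > d_{\max}$ whenever at least two $d_k>0$, which gives a strictly larger exponent for the structured approach; the factor $K$ in the sample size only affects constants. I expect the main obstacle to be the baseline step. There one must verify that the zero-padded joint target satisfies \eqref{cond:C2} on the full domain, since padding produces a function supported near lower-dimensional slices. One must also verify that pooled coverage holds; the Bellman coupling through $\phi_k$ across slices is needed for this. I would handle the first point by leaning explicitly on the favorable-extension assumption rather than proving it, and the second by the mixture-density argument above. Note also that the flattened result is an upper bound only, so the ``superior performance'' claim should be read as a comparison of guarantees.
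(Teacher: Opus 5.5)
Your proposal follows the paper's proof. You specialize Theorem~\ref{thm:finite_sample_rate} with $\alpha_k=\rho s/(d+d_k)$ so the worst stage yields the $d_{\max}$ exponent, then apply the same theorem to the flattened problem, viewed as a single-stage instance on the joint domain with $Kn$ samples, and absorb $K$. Your extra checks (monotonicity of $\alpha/(2\alpha+1)$, coverage inflated by at most $K$, strict separation needing two nonzero $d_k$) are correct refinements the paper omits; the one loose point, which the paper shares, is that a literal $K=1$ instance has a single discount $\gamma_1$, so with heterogeneous $\gamma_k$ the reduction implicitly uses stage-dependent discounts.
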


\textit{Lower Bound Analysis of the Flattened Baseline.}
Having established the achievable upper bounds, we now rigorously verify that the performance limitation of the baseline is intrinsic to its architecture.
Specifically, we analyze the worst-case estimation error of the Q-function itself.

It is important to note that the upper bounds on the value error derived in Corollary \ref{cor:upper_bound_comparison} are a direct consequence of the convergence rates of the underlying Q-function estimators. Therefore, verifying the tightness of these rates requires analyzing the fundamental limits of Q-function estimation. This analysis strips away problem-specific factors like reward structures to reveal the core difficulty of learning in high-dimensional spaces. Since accurate value estimation is the foundation of any Q-learning algorithm, the inability to approximate $Q^*$ serves as a hard bottleneck for the overall performance.

To rigorously evaluate the error, we must define an appropriate norm for the pooled data set. We construct an \textit{effective data distribution} $\bar{\nu}$ as the uniform mixture of the stage-wise distributions.
We admit a slight abuse of notation here: we identify each local state-action pair $(s,a)$ sampled from $\nu_k$ with its zero-padded embedding in the joint space. Under this identification, each \((s,a)\sim\nu_k\) is zero-padded into \([0,1]^{d_{\text{total}}}\), 
so that the \(L^2(\bar{\nu})\)-norm represents the stage-averaged squared error: 
\[
\|f\|_{L^2(\bar{\nu})}^2 := \mathbb{E}_{(s,a) \sim \bar{\nu}} \left[ f(s,a)^2 \right] = \frac{1}{K} \sum_{k=1}^K \mathbb{E}_{\nu_k}\!\left[ f(s,a)^2 \right].
\]
This norm correctly captures the statistical strength of the pooled data set $\mathcal{D}$ of total size $N=Kn$.

Using this metric, we define the \textit{Worst-Case Regression Risk} for the flattened estimator $\hat{Q}$ as:
\[
\mathfrak{R}_{\text{Q}}(\hat{Q}) := \sup_{Q^* \in \mathcal{B}} \mathbb{E}_{\mathcal{D}} \left[ \|\hat{Q} - Q^*\|_{L^2(\bar{\nu})}^2 \right],
\]
where $\mathcal{B}$ denotes the target Besov ball $B^s_{p,q}$ in the joint state space.

Under this formalization, we characterize the fundamental limit using two canonical structural properties. These conditions are not arbitrary constraints but standard requirements in statistical learning theory to quantify the intrinsic complexity of approximating smooth functions:
(1) \textit{Metric Entropy} (Condition \ref{cond:L1}), ensuring the model class is sufficiently expressive to warrant a statistical analysis; and
(2) \textit{Approximation Limit} (Condition \ref{cond:L2}), capturing the unavoidable geometric error arising from compressing smooth Besov functions into a finite-capacity model.

\begin{corollary}[Worst-Case Lower Bound for Q-Function Estimation]
\label{cor:flattened_lower_bound}
Consider the flattened estimator $\hat{Q}$ derived from the model class $\mathcal{F}_D$ trained on the data set $\mathcal{D}$. Let the target function class be the Besov space $\mathcal{B} = B^s_{p,q}(\mathcal{S})$ defined on the joint state space $\mathcal{S} \subset [0,1]^{d_{\text{total}}}$. We focus on the regime satisfying the following structural properties defined with respect to the effective distribution $\bar{\nu}$:

\begin{enumerate}[label=(L\arabic*), ref=L\arabic*]
    \item \label{cond:L1}
    \textit{Metric Entropy.} There exists a constant $c_{\text{ent}} > 0$ such that for any sufficiently small $\delta > 0$, the local $\delta$-packing number satisfies:
    \[
    \log \mathcal{M}(\delta, \mathcal{F}_D, \|\cdot\|_{L^2(\bar{\nu})}) \ge c_{\text{ent}} \cdot D \log \left( \frac{1}{\delta} \right).
    \]

    \item \label{cond:L2}
    \textit{Approximation Limit.} There exists a constant $c_{\text{approx}} > 0$ such that the worst-case approximation error satisfies:
    \[
    \sup_{g \in \mathcal{B}} \inf_{f \in \mathcal{F}_D} \|g - f\|_{L^2(\bar{\nu})} \ge c_{\text{approx}} \cdot D^{-\rho s / d_{\text{total}}}.
    \]
\end{enumerate}

Then, the worst-case root-mean-square estimation error is lower-bounded by:
\[
\sup_{Q^* \in \mathcal{B}} \mathbb{E}_{\mathcal{D}}^{1/2} \left[ \|\hat{Q} - Q^*\|_{L^2(\bar{\nu})}^2 \right] = \Omega\left( n^{-\frac{\rho s}{2\rho s + d_{\text{total}}}} \right),
\]
provided that the capacity $D$ is tuned optimally. We treat the stage count $K$ as a fixed constant absorbed into the asymptotic notation to focus on the per-stage sample size $n$.
\end{corollary}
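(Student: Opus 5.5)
The plan is to lower-bound the risk by a bias--variance decomposition, use \eqref{cond:L2} for the bias and \eqref{cond:L1} for the variance, and then note that optimizing over $D$ gives the stated rate. Throughout, $N = Kn$ with $K$ fixed, so every power of $N$ equals the corresponding power of $n$ up to constants.

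\emph{Bias term.} Because $\hat{Q} \in \mathcal{F}_D$, for every target $Q^*$ we have $\|\hat{Q} - Q^*\|_{L^2(\bar{\nu})} \ge \inf_{f \in \mathcal{F}_D}\|f - Q^*\|_{L^2(\bar{\nu})}$, and this holds almost surely. Taking the supremum over $\mathcal{B}$ and applying \eqref{cond:L2} gives
\[
\mathfrak{R}_{\text{Q}}(\hat{Q}) \ge c_{\text{approx}}^2\, D^{-2\rho s/d_{\text{total}}}.
\]

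\emph{Estimation term.} The aim is a bound of order $D/N$ up to logarithms, proved by Fano's method.
\begin{itemize}
\item Using \eqref{cond:L1}, extract from $\mathcal{F}_D$ a $\delta$-packing $\{f_1,\dots,f_{\mathcal{M}}\}$ with $\log \mathcal{M} \ge c_{\text{ent}} D \log(1/\delta)$.
\item Localize it in a ball of radius of order $\delta$, so that pairwise distances are $\asymp \delta$.
\item Embed these functions as targets inside $\mathcal{B}$. This requires them to lie in a Besov ball; I would assume it, or rescale by a constant.
\item Model responses $y = f_j(s,a) + \xi$ with Gaussian, or bounded Bernstein-type, noise. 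Then the KL divergence between the data laws of $f_j$ and $f_l$ is at most $N \|f_j - f_l\|_{L^2(\bar{\nu})}^2/(2\sigma^2) \lesssim N\delta^2$.
\item Fano's inequality gives a minimax risk $\gtrsim \delta^2$ as long as $N\delta^2 \lesssim \log\mathcal{M} \asymp D\log(1/\delta)$.
\item Choosing $\delta^2 \asymp D/N$ up to logarithms yields a variance contribution of order $D/N$.
\end{itemize}

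\emph{Combining the two.} The worst-case risk is at least the maximum of the two terms, hence at least half their sum:
\[
\mathfrak{R}_{\text{Q}}(\hat{Q}) \gtrsim D^{-2\rho s/d_{\text{total}}} + D/N .
\]
Over $D$, this sum is smallest at $D \asymp N^{d_{\text{total}}/(2\rho s + d_{\text{total}})}$, where it equals $N^{-2\rho s/(2\rho s + d_{\text{total}})}$. So even with the optimally tuned capacity the risk cannot fall below this level. Taking square roots and replacing $N$ by $Kn$ with $K$ absorbed into constants gives $\Omega\bigl(n^{-\rho s/(2\rho s + d_{\text{total}})}\bigr)$.

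\emph{Main obstacle.} The delicate step is the estimation term. Fano's method must apply to the specific estimator $\hat{Q}$, which is fine since the minimax bound covers every estimator. The packing functions, however, must be valid targets in $\mathcal{B}$, and \eqref{cond:L1} gives a global packing rather than a local one. I would first try the standard Yang--Barron device, which turns global metric entropy into a local packing. Failing that, I would simply read \eqref{cond:L1} as the local packing statement it is named as, with elements in $\mathcal{B}$, and accept that the logarithmic factors are only tracked up to $\widetilde{\mathcal{O}}$.
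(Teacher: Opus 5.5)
Your proposal follows essentially the same route as the paper's proof. The paper makes your regression model explicit by taking $\gamma_k=0$ (so $Q^*=r$ and rewards are $Q^*(s,a)+\xi$), then combines the bias bound from (L2) via $\hat{Q}\in\mathcal{F}_D$ with a Fano bound of order $D/N$ on a local packing inside $\mathcal{B}\cap\mathcal{F}_D$, and balances the maximum of the two over $D$. Your fallback of reading (L1) directly as a local packing is exactly what the paper assumes, and it is genuinely needed: for entropy of the parametric form $D\log(1/\delta)$ with unmatched upper and lower constants, global-to-local devices such as Yang--Barron lose a polynomial factor. Separately, boundedness of the noise alone does not give the quadratic KL bound, since shifting a compactly supported noise makes the KL infinite; to keep rewards in $[0,R_{\max}]$, use e.g.\ Bernoulli rewards with mean $Q^*/R_{\max}$ bounded away from $0$ and $1$, a point the paper's truncated-Gaussian construction also glosses over.
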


Comparing this result with the upper bound of our structured approach, we observe a clear asymptotic separation in convergence rates:
\[
\text{Upper Bound}(\algname) \ll \text{Lower Bound}(\text{Flattened FQI}),
\]
since the effective input dimension for \(\algname\) is only \(d_{\max}\), whereas the flattened baseline must contend with the full joint dimension \(d_{\text{total}}\), 
resulting in strictly slower convergence for the baseline under the same per-stage sample size \(n\).

This separation provides a formal guarantee that the proposed structural decomposition asymptotically outperforms the flattened approach, 
demonstrating that the baseline is inherently limited by the full-dimensional state space.

\begin{remark}[Utilization of Structural Information]
This theoretical gain relies on the optimal use of observable context such as stage indices, not on privileged information. $\algname$ exploits this inherent modularity to structurally mitigate the curse of dimensionality, distinct from standard variable selection methods.
\end{remark}

\subsubsection{Expected Finite-Sample Rate with Random Forests.}
While \(\algname\) is primarily analyzed with deterministic function classes \(\mathcal{F}_k\), the framework naturally accommodates stochastic approximation methods. To demonstrate this flexibility, we establish convergence guarantees for Random Forests \citep{dietterich2000experimental}.
For this analysis, we assume the \(\mathcal{U}\)-constrained optimal Q-functions \(Q_k^*\) are Lipschitz continuous, consistent with the Besov regularity in Assumption~\ref{assumption:besov}. Additionally, we require the data distribution \(\nu_k\) to be absolutely continuous \citep{biau2010layered}.

\begin{assumption}[Absolute Continuity]\label{assumption:absolute_continuity}
For each stage \(k\), the distribution \(\nu_k\) is absolutely continuous with respect to the product of the Lebesgue measure on \(\mathcal{S}_k\) and the counting measure on \(\mathcal{A}_k\), with a density bounded from above.
\end{assumption}

\begin{theorem}[Expected Finite-Sample Rate with Random Forests]\label{thm:random_forest}
Suppose Assumptions \ref{assumption:coverage} and \ref{assumption:absolute_continuity} hold, and \(Q_k^*\) are Lipschitz continuous. Under Algorithm~\ref{alg:cfqi} using Random Forest estimators, the expected suboptimality gap satisfies:
\[
\mathbb{E}_{\mathcal{D}} \|\mathbf{v}^* - \mathbf{v}^{(M)}\|_\infty = \mathcal{O} \left( \frac{\sqrt{C} H^2 Y}{(1 - \gamma_{\mathrm{cycle}})^2} \cdot \max_k n_{k}^{-\frac{0.375}{d_k \log 2 + 0.75}} \right).
\]
\end{theorem}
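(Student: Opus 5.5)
The plan is to reuse the error-propagation skeleton behind Theorem~\ref{thm:main_convergence}, replacing its high-probability per-stage regression error $\epsilon_k$ with an \emph{expected} $L^2(\nu_k)$ regression error coming from known consistency rates for Random Forests. Write $\hat Q_k^{(m)}$ for the forest fit at iteration $m$ and stage $k$, with targets $y_k^i$ built from $\mathbf{Q}^{(m-1)}$. The modified performance difference lemma and the $H$-step contraction of Proposition~\ref{prop:contraction} give a deterministic inequality
\[
\|\mathbf{v}^*-\mathbf{v}^{(M)}\|_\infty \le \frac{2\sqrt{C}H^2}{(1-\gamma_{\mathrm{cycle}})^2}\max_{m\le M,\,k}\|\hat Q_k^{(m)}-\mathbf{T}_{\mathcal{U}}\mathbf{Q}^{(m-1)}|_k\|_{2,\nu_k}+\frac{2HY}{1-\gamma_{\mathrm{cycle}}}\gamma_{\mathrm{cycle}}^{\lfloor M/H\rfloor}.
\]
Its derivation uses only Assumption~\ref{assumption:coverage}, which converts $\beta_k$-norms into $\nu_k$-norms at a cost of $\sqrt{C}$. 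I will take this as the intermediate step of the proof of Theorem~\ref{thm:main_convergence}, before any concentration is applied. Taking $\mathbb{E}_{\mathcal{D}}$ of both sides then reduces the claim to bounding $\mathbb{E}\max_{m,k}\|\hat Q_k^{(m)}-(\mathbf{T}_{\mathcal{U}}\mathbf{Q}^{(m-1)})_k\|_{2,\nu_k}$. The second term is negligible once $M$ grows logarithmically in $n$.

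The per-stage regression step works as follows. Conditionally on $\mathbf{Q}^{(m-1)}$, the targets satisfy $\mathbb{E}[y_k^i\mid s_k^i,a_k^i]=(\mathbf{T}_{\mathcal{U}}\mathbf{Q}^{(m-1)})_k(s_k^i,a_k^i)$ and are bounded by $Y$. Since $\mathcal{A}_k$ is finite, I fit separate forests (or split on actions) so that the problem is bounded nonparametric regression on $[0,1]^{d_k}$. Assumption~\ref{assumption:absolute_continuity} provides a bounded density. For a Lipschitz regression function, the centered/median random forest analysis (Biau, and Klusowski-type refinements) gives
\[
\mathbb{E}\|\hat f-f\|_{2,\nu_k}^2=\mathcal{O}\bigl(n_k^{-0.75/(d_k\log 2+0.75)}\bigr)
\]
when the tree depth is tuned optimally, balancing leaf bias $2^{-0.75\,\mathrm{depth}/(d_k\log 2)}$-type decay against variance $2^{\mathrm{depth}}/n_k$. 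Jensen's inequality then yields the root-mean-square rate $n_k^{-0.375/(d_k\log 2+0.75)}$, which matches the exponent in the statement. To apply this I need the regression target to be Lipschitz. I would argue that it suffices to control the error around $Q_k^*$: decompose $\mathbf{T}_{\mathcal{U}}\mathbf{Q}^{(m-1)}$ as $Q_k^*$ plus a term whose $\nu_k$-norm is already controlled by the contraction recursion, and use the Lipschitz property of $Q_k^*$ only for the bias of the leaf averages. The variance part of the forest bound needs only boundedness by $Y$.

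The main obstacle is the dependence between iterations. Each $\mathbf{Q}^{(m-1)}$ is built from the same data $\mathcal{D}_k$, so the conditional-regression argument is not literally valid, and the maximum over $m$ and $k$ has to sit inside the expectation. My first approach is sample splitting across iterations, or assuming fresh data per iteration as is common in FQI analyses. That makes each fit an honest regression, and the $\max$ can be bounded by a sum or a union, costing only a factor polynomial in $\log(MK)$ after a boundedness-plus-Jensen step. Those factors are absorbed into $\mathcal{O}$, and $M$ is chosen of order $\log n/\log(1/\gamma_{\mathrm{cycle}})$. If splitting is disallowed, the fallback is a uniform bound over forests of fixed depth, viewed as partition estimators with a bounded number of cells. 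Such a class has a covering number polynomial in $n_k$ per partition, which lets me reuse the covering argument of Theorem~\ref{thm:main_convergence} at the cost of logarithmic factors.
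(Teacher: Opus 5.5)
Your outer skeleton matches the paper's. You use the propagation bound behind Lemma~\ref{lemma:q_error_bound} and Lemma~\ref{lemma:performance_difference}, driven by per-iteration $L^2(\nu_k)$ regression errors. You fit per-action forests, use Biau's Proposition~2.2 rate for the mean squared error, and apply Jensen to halve the exponent.

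The step that fails is your handling of regularity. Biau's bound needs the \emph{regression function} to be Lipschitz. At iteration $m$ that function is $(\mathbf{T}_{\mathcal{U}}\mathbf{Q}^{(m-1)})_k$, the Bellman operator applied to piecewise-constant forest outputs, and Lipschitzness of $Q_k^*$ says nothing about it. Set $g_m=(\mathbf{T}_{\mathcal{U}}\mathbf{Q}^{(m-1)})_k-Q_k^*$. Let $W$ be the forest smoother and $\xi$ the noise $y_k^i-(\mathbf{T}_{\mathcal{U}}\mathbf{Q}^{(m-1)})_k(s_k^i,a_k^i)$. Then
\[
Q_k^{(m)}-(\mathbf{T}_{\mathcal{U}}\mathbf{Q}^{(m-1)})_k=(WQ_k^*-Q_k^*)+(Wg_m-g_m)+W\xi .
\]
Only the first and third pieces are governed by the forest rate. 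For the middle piece the hypotheses give no smoothness, so all you can say is $\|Wg_m-g_m\|_{2,\nu_k}\lesssim\|g_m\|_{2,\nu_k}$. That is the size of the propagated error, not $n_k^{-0.375/(d_k\log 2+0.75)}$.

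In your displayed inequality the regression error must be small for \emph{every} $m\le M$. This bound is $O(Y)$ at early iterations, so the conclusion is vacuous. Re-inserting the term into the recursion does not close it either. $\|g_m\|_{2,\nu_k}$ is bounded by an error of $\mathbf{Q}^{(m-1)}$ under the one-step push-forward of $\nu_k$. That distribution need not be admissible, so Assumption~\ref{assumption:coverage} cannot convert it back to a $\nu$-norm. Even granting such control, each re-injection carries constants involving $\sqrt{C}$ and $H/(1-\gamma_{\mathrm{cycle}})$ that need not be below one, so no contraction absorbs it. Your covering-number fallback has the same defect. Lemma~\ref{lemma:iterative_regression} requires $\inf_{f\in\mathcal{F}_k}\|(\mathbf{T}\mathbf{f}^j)_k-f\|_{2,\nu_k}\le\epsilon_{\mathrm{approx},k}$ for every iterate, which is again a property of the targets rather than of $Q_k^*$.

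The missing ingredient is an assumption on $r_k$, $P_k$ and $\mathcal{T}_k$ under which $\mathbf{T}_{\mathcal{U}}$ maps every $[0,Y]$-valued vector to one that is Lipschitz in the state. The Lipschitz constant must not depend on $m$ or on the data. For example: rewards Lipschitz in $s$, $P_k(\cdot\mid s,a)$ Lipschitz in $s$ in total variation, and $\mathcal{T}_k$ not introducing jumps in $s$. Biau's bound then applies directly to each $(\mathbf{T}_{\mathcal{U}}\mathbf{Q}^{(m-1)})_k$. This is exactly what the paper's proof uses implicitly when it invokes Proposition~2.2 with regression function $(\mathbf{T}\mathbf{Q}^{(m-1)})_k$. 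You are right that the stated hypotheses do not supply it, but decomposing around $Q_k^*$ is no substitute.

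The paper likewise ignores data reuse across iterations, applying the regression bound conditionally on $\mathbf{Q}^{(m-1)}$. So your sample splitting is a modification of Algorithm~\ref{alg:cfqi}, not something the paper's argument contains.

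Two smaller points. First, the maximum over $m$ is unnecessary. The propagation bound is a discounted sum over iterations, so expectations pass through it linearly. Bounding $\mathbb{E}\max$ over $MK$ terms from second moments alone costs $\sqrt{MK}$, not a power of $\log(MK)$, and a plain $\mathcal{O}(\cdot)$ does not absorb it. Second, you need $n_{k,a}\gtrsim n_k$ for every action. The paper gets this from coverage, using $p_{k,a}\ge 1/(C|\mathcal{A}_k|)$ under the uniformly random policy, plus multinomial concentration.
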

Theorem \ref{thm:random_forest} confirms that the cyclic geometry scales the global constant without altering the fundamental rate exponent, validating that our analysis extends to ensemble methods.

\section{Asymptotic Inference with Sieve Approximations}
\label{sec:inference}

Having established the finite-sample convergence rates, we now address the challenge of statistical inference. Specifically, we aim to construct valid confidence regions for the multivariate value vector $\mathbf{v}^*$ corresponding to the $\mathcal{U}$-constrained optimal policy. Constructing these regions in a cyclic MDP is non-trivial compared to single-stage settings. The cyclic structure induces complex stage-wise correlations, requiring us to characterize the joint \textit{asymptotic} distribution of a \textit{vector-valued} estimator and its cross-stage covariance structure.

\subsection{Sieve-based Estimation Framework}
\label{sec:inference_setup}

To address the challenge of infinite-dimensional Q-functions and their complex cyclic dependencies, we employ a linear sieve estimator such as polynomials, B-splines, or wavelets. This framework approximates the Q-functions using a finite set of basis functions and solves for the parameters jointly through a global system of estimating equations.

\subsubsection{Linear Sieve Approximation}

For each stage $k \in \{1, \dots, K\}$, we represent the state space using a sieve basis vector $\Phi_k(s_k) \in \mathbb{R}^{L_k}$:
\[
    \Phi_k(s_k) = (\Phi_{k,1}(s_k), \dots, \Phi_{k,L_k}(s_k))^{\top}.
\]
We assume the Q-function for action $a_k \in \mathcal{A}_k$ can be approximated linearly as
\[
    Q_k(s_k, a_k) \approx \Phi_k(s_k)^{\top} \beta_{k,a_k},
\]
where $\beta_{k,a_k} \in \mathbb{R}^{L_k}$ is the unknown coefficient vector.

To rigorously define the estimation target, we stack these coefficients into a stage-specific parameter vector $\beta_k \in \mathbb{R}^{L_k A_k}$, where $A_k$ denotes the cardinality of the action space $\mathcal{A}_k$:
\[
    \beta_k = (\beta_{k,1}^{\top}, \dots, \beta_{k,A_k}^{\top})^{\top}.
\]
These are further concatenated into a global parameter vector $\beta$ of total dimension $L_{\text{tot}} = \sum_{k=1}^K L_k A_k$:
\[
    \beta = (\beta_1^{\top}, \dots, \beta_K^{\top})^{\top} \in \mathbb{R}^{L_{\text{tot}}}.
\]
Due to the cyclic linkage where the value of stage $k$ depends on stage $[k+1]$, the global parameter $\beta$ must be estimated jointly rather than sequentially.

\subsubsection{Estimating Equations and Matrix Formulation}

Given a data set $\mathcal{D}$ and a target policy $\pi$, we seek an estimator $\hat{\beta} \in \mathbb{R}^{L_{\text{tot}}}$ that satisfies the orthogonality conditions of the Bellman error.

\textit{Bellman Error Components.}
First, let $T_{k,i} = \mathbb{I}[(s_k^i, a_k^i) \in \mathcal{T}_k]$ be the indicator that the $i$-th state-action pair at stage $k$ is terminal. The sample Bellman error for the linearized Q-function, denoted by $e_{k,i}(\pi, \beta) \in \mathbb{R}$, is defined as:
\begin{multline*}
    e_{k,i}(\pi, \beta) = r_{k}^{i} + \gamma_k T_{k,i} \sum_{a \in \mathcal{A}_{[k+1]}} \Phi_{[k+1]}^{\top}(\phi_k(s_k'^i)) \beta_{[k+1],a} \pi_{[k+1]}(a|\phi_k(s_k'^i)) \\
    + (1-T_{k,i}) \sum_{a \in \mathcal{A}_k} \Phi_k^{\top}(s_k'^i) \beta_{k,a} \pi_k(a|s_k'^i) - \Phi_k^{\top}(s_k^i) \beta_{k,a_k^i}.
\end{multline*}
The estimator $\hat{\beta}$ is found by solving the linear system derived from the following orthogonality condition:
\[
    e_{k,i}(\pi, \beta) \Phi_k (s_k^{i}) = \mathbf{0}_{L_k}, \quad \forall k, i.
\]

\textit{Global Feature Vectors.}
To solve this system efficiently, we define feature vectors in the global parameter space $\mathbb{R}^{L_{\text{tot}}}$. We first define the local feature vector $\psi_k(s_k, a_k) \in \mathbb{R}^{L_k A_k}$ as:
\[
    \psi_k(s_k, a_k) = \left( \Phi_k^{\top}(s_k) \mathbb{I}(a_k=1), \dots, \Phi_k^{\top}(s_k) \mathbb{I}(a_k=A_k) \right)^{\top}.
\]
Similarly, the policy-weighted feature vector $\mathbf{U}_k(s_k) \in \mathbb{R}^{L_k A_k}$ is defined as:
\begin{equation}\label{eq:U_function}
    \mathbf{U}_k(s_k) = \left( \Phi_k^{\top}(s_k) \pi_k(1 \mid s_k), \dots, \Phi_k^{\top}(s_k) \pi_k(A_k \mid s_k) \right)^{\top}.
\end{equation}

For a sample $i$ at stage $k$, we construct three block-sparse global vectors of dimension $L_{\text{tot}}$. The \textit{current feature vector} $\psi_{k,i} \in \mathbb{R}^{L_{\text{tot}}}$ places the local feature in the $k$-th block:
\[
    \psi_{k,i} = \bigl(\mathbf{0}^{\top}, \dots, \underbrace{\psi_k(s_k^i, a_k^i)^{\top}}_{k\text{-th block}}, \dots, \mathbf{0}^{\top}\bigr)^{\top}.
\]
To capture transitions, we define the \textit{policy-weighted next-state features}. The vector $\mathbf{U}_{k,i} \in \mathbb{R}^{L_{\text{tot}}}$ corresponds to the next state within the current stage, which is active when $T_{k,i}=0$:
\begin{equation}\label{eq:def_U_vec}
    \mathbf{U}_{k,i} = \bigl(\mathbf{0}^{\top}, \dots, \underbrace{\mathbf{U}_k(s_k'^i)^{\top}}_{k\text{-th block}}, \dots, \mathbf{0}^{\top}\bigr)^{\top}.
\end{equation}
Conversely, $\mathbf{U}'_{k,i} \in \mathbb{R}^{L_{\text{tot}}}$ corresponds to the transition to the subsequent stage $[k+1]$, which becomes relevant when the transition is terminal ($T_{k,i}=1$):
\begin{equation}\label{eq:def_U_prime_vec}
    \mathbf{U}'_{k,i} = \bigl(\mathbf{0}^{\top}, \dots, \underbrace{\mathbf{U}_{[k+1]}(\phi_k(s_k'^i))^{\top}}_{[k+1]\text{-th block}}, \dots, \mathbf{0}^{\top}\bigr)^{\top}.
\end{equation}

\textit{Global Linear System.}
The system of estimating equations can be written in a compact matrix form as $\widehat{\mathbf{H}} \hat{\beta} = b$. The empirical matrix $\widehat{\mathbf{H}} \in \mathbb{R}^{L_{\text{tot}} \times L_{\text{tot}}}$ and vector $b \in \mathbb{R}^{L_{\text{tot}}}$ are explicitly constructed as:
\begin{align}
    \widehat{\mathbf{H}} &= \frac{1}{n} \sum_{k=1}^K \sum_{i=1}^{n_k} \psi_{k,i} \left( \psi_{k,i} - (1-T_{k,i}) \mathbf{U}_{k,i} - \gamma_k T_{k,i} \mathbf{U}'_{k,i} \right)^{\top}, \label{eq:H_matrix_def} \\[1ex]
    b &= \frac{1}{n} \sum_{k=1}^K \sum_{i=1}^{n_k} \psi_{k,i} \, r_{k}^i, \nonumber
\end{align}
where $n = \sum_{k=1}^K n_k$. The solution is given by $\hat{\beta} = \widehat{\mathbf{H}}^{-1} b$. 

The structure of $\widehat{\mathbf{H}}$ explicitly encodes the cyclic transition dynamics. Specifically, the diagonal blocks capture within-stage relationships via $\psi_{k,i}$ and $\mathbf{U}_{k,i}$, while the off-diagonal blocks $(k, [k+1])$ are populated by $\mathbf{U}'_{k,i}$ to enforce the connectivity between stages.

\subsubsection{Value and Covariance Estimation}

With the estimated parameters $\hat{\beta}$, we proceed to estimate the value vector $\mathbf{v}^* \in \mathbb{R}^K$ and its asymptotic covariance.

\textit{Value Estimator.}
We define the random \textit{policy-weighted feature matrix} $\mathbf{U} \in \mathbb{R}^{L_{\text{tot}} \times K}$ as a block-diagonal matrix constructed from the stage-wise feature vectors:
\[
    \mathbf{U} = \begin{bmatrix}
    \mathbf{U}_1(s_1) & \mathbf{0} & \cdots & \mathbf{0} \\
    \mathbf{0} & \mathbf{U}_2(s_2) & \cdots & \mathbf{0} \\
    \vdots & \vdots & \ddots & \vdots \\
    \mathbf{0} & \mathbf{0} & \cdots & \mathbf{U}_K(s_K)
    \end{bmatrix}.
\]
The estimated value vector $\hat{\mathbf{v}}_{\mathcal{D}}(\pi) \in \mathbb{R}^K$ is obtained by projecting $\hat{\beta}$ onto the expected feature space $\mathbb{E}[\mathbf{U}]$, where the expectation is taken element-wise with respect to the initial state distributions $s_k \sim \eta_k$:
\begin{equation} \label{eq:mean_estimate}
    \hat{\mathbf{v}}_{\mathcal{D}}(\pi) = \mathbb{E}[\mathbf{U}]^\top \hat{\beta}.
\end{equation}

\textit{Covariance Estimator.}
The asymptotic covariance matrix $\widehat{\Sigma}_{\mathcal{D}}(\pi) \in \mathbb{R}^{K \times K}$ is estimated using the sandwich form:
\begin{equation} \label{eq:variance_estimate}
    \widehat{\Sigma}_{\mathcal{D}}(\pi) = \mathbb{E}[\mathbf{U}]^\top \, \widehat{\mathbf{H}}^{-1} \, \widehat{\Omega} \, (\widehat{\mathbf{H}}^{-1})^\top \, \mathbb{E}[\mathbf{U}].
\end{equation}
Here, $\widehat{\Omega} \in \mathbb{R}^{L_{\text{tot}} \times L_{\text{tot}}}$ represents the empirical variance of the Bellman residuals. Let $\hat{e}_{k,i} = e_{k,i}(\pi, \hat{\beta})$ be the sample residual evaluated using the explicit Bellman error formula. The matrix $\widehat{\Omega}$ is defined as:
\[
    \widehat{\Omega} = \frac{1}{n} \sum_{k=1}^K \sum_{i=1}^{n_k} \hat{e}_{k,i}^2 \, \psi_{k,i} \psi_{k,i}^\top.
\]
This formulation accounts for the heteroscedasticity inherent in the cyclic decision process and provides the basis for the valid confidence regions constructed in Algorithm~\ref{alg:ensemble_cfqi}.

\subsubsection{Ensemble Evaluation Procedure}
To ensure the asymptotic independence required for valid inference, we implement these matrix-based estimators using an $N$-fold ensemble procedure (Algorithm~\ref{alg:ensemble_cfqi}).

\begin{algorithm}[ht]
\caption{Ensemble Value Evaluation for Cyclic Fitted Q-Iteration}
\label{alg:ensemble_cfqi}
\begin{algorithmic}[1]
    \State \textbf{Input}: data set $\mathcal{D}$, initial distributions $ \{\eta_k\} $, splits $ N \ge 2 $, sieve bases $ \{\Phi_k\} $.
    \State \textbf{Divide data sets}: Partition $ \mathcal{D} $ into $ N $ disjoint subsets $ \mathcal{D}^1, \dots, \mathcal{D}^N $. Let $ \bar{\mathcal{D}}^n = \bigcup_{m=1}^n \mathcal{D}^m$.
    \State \textbf{Iterative Estimation}: For $ n = 1, \dots, N-1 $:
    \State \quad Learn policy $ \hat{\pi}^n $ using Algorithm~\ref{alg:cfqi} on cumulative data $ \bar{\mathcal{D}}^n $.
    \State \quad Compute estimates $ \hat{\mathbf{v}}_n = \hat{\mathbf{v}}_{\mathcal{D}^{n+1}}(\hat{\pi}^n) \in \mathbb{R}^K$ and $ \widehat{\Sigma}_n = \widehat{\Sigma}_{\mathcal{D}^{n+1}}(\hat{\pi}^n) \in \mathbb{R}^{K \times K}$ using the evaluation data set $ \mathcal{D}^{n+1} $ via Equations \ref{eq:mean_estimate}--\ref{eq:variance_estimate}.
    \State \textbf{Aggregate}: Compute precision-weighted average:
    \[
    \widehat{\Sigma}^{-1/2} = \frac{1}{N-1} \sum_{n=1}^{N-1} \widehat{\Sigma}_n^{-1/2}, \quad
    \hat{\mathbf{v}} = \frac{1}{N-1} \widehat{\Sigma}^{1/2} \sum_{n=1}^{N-1} \widehat{\Sigma}_n^{-1/2} \hat{\mathbf{v}}_n.
    \]
    \State \textbf{Output}: Estimator $ \hat{\mathbf{v}} $ and covariance matrix $ \widehat{\Sigma} $.
\end{algorithmic}
\end{algorithm}

\subsection{Asymptotic Normality and Validity of Inference}
\label{sec:inference_theory}

We now turn to the theoretical analysis of the estimator $\hat{\mathbf{v}}$ produced by the ensemble procedure (Algorithm \ref{alg:ensemble_cfqi}). Although Theorem \ref{thm:finite_sample_rate} provides error bounds, they are insufficient for practical inference for two key reasons. First, the rates may be slower than \(n^{-1/2}\) without strict realizability assumptions. Second, and more importantly, the bounds depend on non-pivotal constants such as distribution coverage factors, making it impossible to construct computable confidence regions.

To overcome these limitations and establish standard \(\sqrt{n}\)-rate asymptotic normality for the ensemble estimator, we require stricter regularity conditions that control the approximation bias and variance. We adapt the framework established by \citet{shi2022statistical} to our cyclic setting, imposing specific constraints on the margin, eigenvalue, and smoothness properties.

We begin by imposing a margin condition to characterize the separation between the optimal and suboptimal action values. This condition ensures that the suboptimality gap induced by the policy estimation decays faster than the standard \(n^{-1/2}\) rate, which is essential for valid inference.

\begin{assumption}[Margin Condition]\label{assumption:margin}
For each stage \(k\), define the Q-margin \(\Delta_k(s_k)\) at state \(s_k\) as:
\[
\Delta_k(s_k) := \max_{a \in \mathcal{A}_k} Q_k^*(s_k, a) - \max_{a \in \mathcal{A}_k \setminus \{ \arg\max_{a'} Q_k^*(s_k, a') \}} Q_k^*(s_k, a).
\]
We assume \(\mathbb{P}[\Delta_k(s_k) \leq \epsilon] = \mathcal{O}(\epsilon^\alpha)\) for some \(\alpha > 0\), holding under both the initial distribution \(\eta_k\) and the Lebesgue measure.
\end{assumption}

In addition to the margin condition, we require the population covariance matrix of the features to be well-behaved. This guarantees that the sieve coefficient matrix $\widehat{\mathbf{H}}$ in the estimation procedure remains invertible with high probability.

\begin{assumption}[Minimum Eigenvalue Condition]\label{assumption:min_eigenvalue}
There exists a constant $c > 0$ such that:
\[
    \lambda_{\min} \left( \sum_{k=1}^K p_k \mathbb{E} \left[ \psi_{k} \psi_{k}^{\top} - (1-T_{k}) \mathbf{u}_{k} \mathbf{u}_{k}^{\top} - \gamma_k^2 T_{k} \mathbf{u}'_{k} (\mathbf{u}'_{k})^{\top} \right] \right) \ge c,
\]
where $p_k = \lim n_k/n$. The expectation is taken with respect to the data generating distribution for $(s_k, a_k, r_k, s'_k, T_k)$. Here, $\psi_k = \psi_{k,i}$ denotes the random feature vector, and the vectors $\mathbf{u}_k, \mathbf{u}'_k \in \mathbb{R}^{L_{\text{tot}}}$ represent the conditional expected feature embeddings defined as:
\begin{align*}
    \mathbf{u}_{k} &= \left( \mathbf{0}^{\top}, \dots, \mathbb{E}[\mathbf{U}_k(s'_k) \mid s_k, a_k]^{\top}, \dots, \mathbf{0}^{\top} \right)^{\top}, \\
    \mathbf{u}'_{k} &= \left( \mathbf{0}^{\top}, \dots, \mathbb{E}[\mathbf{U}_{[k+1]}(\phi_k(s'_k)) \mid s_k, a_k]^{\top}, \dots, \mathbf{0}^{\top} \right)^{\top}.
\end{align*}
The non-zero blocks for $\mathbf{u}_k$ and $\mathbf{u}'_k$ are located at the $k$-th and $[k+1]$-th positions, respectively.
\end{assumption}

Finally, we generalize the Besov regularity assumption. Since our inference procedure involves evaluating estimated policies \(\hat{\pi}\), we require that the Q-functions of any \(\mathcal{U}\)-constrained policy—not just the optimal one—are well-approximated by the sieve basis.

\begin{assumption}[Extended Besov Regularity]\label{assumption:besov_general}
For any \(\mathcal{U}\)-constrained policy \(\pi\), the Q-function \( Q_k^{\pi}(\cdot, a_k) \) belongs to the Besov space \( B^{s_k}_{p_k, q_k}(\mathcal{S}_k) \) with smoothness \(s_k > d_k\).
\end{assumption}

Based on these structural assumptions, we establish the main normality result.

\begin{theorem}[Asymptotic Normality]\label{thm:asymptotic}
Let \(n = \sum n_k\) be the total sample size with \(n_k/n \to r_k \in (0,1)\). Suppose that Assumptions \ref{assumption:coverage}, \ref{assumption:margin}, \ref{assumption:min_eigenvalue}, and \ref{assumption:besov_general} hold.
By choosing the sieve dimension \(L_k \asymp n_k^{d_k / (2s_k + d_k)}\), and provided that the policy estimation error satisfies \(\|\hat{Q} - Q^*\|_{2, \nu} = O_p(n^{-b_*})\) with \(b_* > 1/4\) and the margin exponent satisfies \(\alpha > \frac{2 - 4b_*}{4b_* - 1}\), the estimators \((\hat{\mathbf{v}}, \widehat{\Sigma})\) produced by Algorithm \ref{alg:ensemble_cfqi} satisfy:
\begin{align*}
\sqrt{n(N-1)/N} \: \widehat{\Sigma}^{-1/2} (\hat{\mathbf{v}} - \mathbf{v}^{(n)}) &\xrightarrow{d} \mathcal{N}(\mathbf{0}_K, \mathbf{I}_K), \\
\sqrt{n(N-1)/N} \: \widehat{\Sigma}^{-1/2} (\hat{\mathbf{v}} - \mathbf{v}^*) &\xrightarrow{d} \mathcal{N}(\mathbf{0}_K, \mathbf{I}_K),
\end{align*}
where \(\mathbf{v}^{(n)}\) is the value of the estimated policy and \(\mathbf{v}^*\) is the optimal value.
\end{theorem}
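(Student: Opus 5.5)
The plan follows the architecture of \citet{shi2022statistical}: analyse a single fold, then aggregate across folds with a martingale central limit theorem. Fix a fold $n$ and condition on $\bar{\mathcal{D}}^n$, so that $\hat{\pi}^n$ is a fixed $\mathcal{U}$-constrained policy independent of the evaluation fold $\mathcal{D}^{n+1}$.

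\textbf{Step 1 (single-fold linearization).} For a fixed policy $\pi$, let $\beta^*(\pi)$ be the population solution $\mathbf{H}\beta^* = \mathbb{E}[b]$, where $\mathbf{H}$ is the population version of \eqref{eq:H_matrix_def}. Write
\[
\hat{\beta}-\beta^* = \widehat{\mathbf{H}}^{-1}\Bigl(\tfrac1n\textstyle\sum_{k,i}\psi_{k,i}\,e_{k,i}(\pi,\beta^*)\Bigr).
\]
Invertibility of $\widehat{\mathbf{H}}$ comes from Assumption~\ref{assumption:min_eigenvalue}. By Cauchy--Schwarz and $\gamma_k\le 1$, that assumption gives $v^\top\mathbf{H}v \ge \tfrac{c}{2}\|v\|^2$ along the lines of Shi et al.'s Lemma. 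A matrix Bernstein bound then gives $\|\widehat{\mathbf{H}}-\mathbf{H}\|_2=O_p(\sqrt{L_{\text{tot}}\log n/n})$, which is $o_p(1)$ under the choice $L_k\asymp n_k^{d_k/(2s_k+d_k)}$ with $s_k>d_k$.

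The Bellman error at $\beta^*$ splits into two parts. The first is a martingale-difference noise term, $r-\mathbb{E}r$ plus next-state fluctuations. The second is a sieve bias controlled by Assumption~\ref{assumption:besov_general}: $\sup|Q_k^\pi-\Phi_k^\top\beta_k|=O(L_k^{-s_k/d_k})$, which is $o(n^{-1/2})$ after projection by $\mathbb{E}[\mathbf{U}]$, provided $s_k>d_k$.

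Combining these gives
\[
\hat{\mathbf{v}}_{\mathcal{D}^{n+1}}(\pi)-\mathbf{v}(\pi)=\mathbb{E}[\mathbf{U}]^\top\mathbf{H}^{-1}\tfrac1{|\mathcal{D}^{n+1}|}\textstyle\sum\psi_{k,i}\varepsilon_{k,i}+o_p(n^{-1/2}).
\]
The same matrix concentration, together with consistency of $\hat{e}_{k,i}$, shows $\widehat{\Sigma}_n-\Sigma(\hat{\pi}^n)\to0$ with $\Sigma$ bounded away from zero.

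\textbf{Step 2 (aggregation).} Form $\sum_n\widehat{\Sigma}_n^{-1/2}(\hat{\mathbf{v}}_n-\mathbf{v}(\hat{\pi}^n))$. The fold-wise leading terms are martingale differences with respect to the filtration generated by $\bar{\mathcal{D}}^n$, each with conditional covariance $\to\mathbf{I}_K$. Applying the multivariate martingale CLT (Cramér--Wold plus Lindeberg, using boundedness by $Y$ and $\|\psi\|\lesssim\sqrt{L}$ with $L^2/n\to0$) gives normality at rate $\sqrt{n(N-1)/N}$ for the target $\mathbf{v}^{(n)}$. We then replace $\widehat{\Sigma}^{-1/2}$ by its limit using Slutsky. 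This yields the first display.

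\textbf{Step 3 (bias from policy estimation), expected to be the main obstacle.} It remains to show $\sqrt n\,\|\mathbf{v}(\hat{\pi}^n)-\mathbf{v}^*\|=o_p(1)$. Here the cyclic structure enters, since suboptimality at stage $k$ shifts the distributions of all later stages. The plan is to use the cyclic performance-difference lemma underlying Theorem~\ref{thm:main_convergence}. It writes
\[
v_k^*-v_k(\hat\pi)=\sum_j\mathbb{E}_{d_j^{\hat\pi}}\bigl[Q_j^*(s,\pi_j^*)-Q_j^*(s,\hat\pi_j)\bigr],
\]
summed over $j\in\mathcal{U}$ only, since non-adaptable stages contribute zero. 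It also carries the factor $H/(1-\gamma_{\mathrm{cycle}})$, and Assumption~\ref{assumption:coverage} lets us pass from $d_j^{\hat\pi}$ to $\nu_j$ (and Lebesgue/$\eta$ for the margin).

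Each summand is nonzero only where $\Delta_j(s)\le 2|\hat Q_j-Q_j^*|(s)$. So for a threshold $t$, the summand is bounded by
\[
\mathbb{E}\bigl[\Delta_j\,\mathbb{I}(\Delta_j\le t)\bigr]+\mathbb{E}\bigl[|\hat Q-Q^*|\,\mathbb{I}(|\hat Q-Q^*|>t/2)\bigr]\lesssim t^{1+\alpha}+\|\hat Q-Q^*\|_2^2/t.
\]
Choosing $t\asymp n^{-b_*}$, with a slightly refined Hölder split as in Shi et al. that yields the $(1+\alpha)$ versus $2b_*$ trade-off, gives order $n^{-b_*(2+\alpha)/(... )}$. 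This is $o(n^{-1/2})$ exactly under $\alpha>(2-4b_*)/(4b_*-1)$ and $b_*>1/4$.

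The delicate points are two. The first is verifying that the coverage constant is uniform over the non-stationary mixtures induced by partially greedy policies across cycles; this is where Assumption~\ref{assumption:coverage} is invoked with its supremum over $\mathcal{B}_k$. The second is tracking the exponent arithmetic carefully. Combining Step 3 with Step 2 by Slutsky gives the second display.
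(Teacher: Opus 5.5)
Your outline matches the paper's proof: fold-wise linearization through $\widehat{\mathbf{H}}^{-1}$ with $\|\mathbf{H}^{-1}\|_2\le 2/c$ and matrix concentration, a martingale CLT, a margin-based policy-bias rate $b_0>1/2$, then Slutsky. However, the sieve-bias step in your Step 1 fails as written. With $L_k\asymp n_k^{d_k/(2s_k+d_k)}$ the approximation error is $L_k^{-s_k/d_k}\asymp n_k^{-s_k/(2s_k+d_k)}$. Because $s_k/(2s_k+d_k)<1/2$ for every $s_k$, this is never $o(n^{-1/2})$. That choice of $L_k$ balances bias against the nonparametric variance, so at the $\sqrt{n}$ scale of the value functional the bias dominates. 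Neither $s_k>d_k$ nor projecting by $\mathbb{E}[\mathbf{U}]$ fixes this in general; a product-type bias would require smoothness of the occupancy density ratio, which is not assumed. What is needed is undersmoothing, $n_k^{d_k/(2s_k)}\ll L_k\ll \sqrt{n_k}/\log n_k$. This window is nonempty exactly when $s_k>d_k$, and that is the actual role of the condition. The paper's proof in fact uses $L_k\asymp n_k^{d_k/(s_k+d_k)}$ (its Step 1), for which $L_k^{-s_k/d_k}=n_k^{-s_k/(s_k+d_k)}=o(n_k^{-1/2})$ and $L_k=o(\sqrt{n_k}/\log n_k)$ both hold. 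The exponent $d_k/(2s_k+d_k)$ in the statement is inconsistent with this, and your argument goes through once $L_k$ is taken in that window.

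Two further repairs are needed. First, in Step 3 your bound $t^{1+\alpha}+\|\hat Q-Q^*\|_2^2/t$ is the right one. On $\{\max_a|\hat Q_j-Q_j^*|\le t/2\}$ the loss is bounded by $t$ (not by $\Delta_j$, which is a lower bound for it) and is nonzero only if $\Delta_j\le t$, giving $t\,\mathbb{P}(\Delta_j\le t)\lesssim t^{1+\alpha}$. But $t\asymp n^{-b_*}$ yields only $n^{-b_*}$. Balancing instead with $t\asymp\|\hat Q-Q^*\|_2^{2/(2+\alpha)}$ gives $n^{-2b_*(1+\alpha)/(2+\alpha)}$, which is exactly the paper's $b_0=b_*(2+2\alpha)/(2+\alpha)$. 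Moreover $b_0>1/2$ iff $\alpha>(2-4b_*)/(4b_*-1)$, so no refined H\"older split is needed.

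Second, Step 2 alone does not give the first display. Note that $\widehat\Sigma^{-1/2}(\hat{\mathbf{v}}-\mathbf{c})=\frac{1}{N-1}\sum_m\widehat\Sigma_m^{-1/2}(\hat{\mathbf{v}}_m-\mathbf{c})$, and the folds evaluate different policies $\hat\pi^1,\dots,\hat\pi^{N-1}$. Centering at the value of a single estimated policy (the paper uses $\hat\pi=\hat\pi^N$, fit on all of $\mathcal{D}$) therefore requires $\sqrt{n}\,\|\mathbf{v}^{\hat\pi^m}-\mathbf{v}^{\hat\pi^N}\|_\infty=o_p(1)$. The paper gets this from the same margin rate applied to every $\hat\pi^m$, $m\le N$, so your Step 3 must precede the first display, not only the second.

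Finally, with $N$ fixed there are only $N-1$ fold-level increments, so the martingale must be indexed by individual samples, as in the paper's $(m,k,i)$ ordering. Your Lindeberg check is already at that level.
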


Theorem \ref{thm:asymptotic} enables the construction of a valid \((1-\delta)\) simultaneous confidence region for the constrained optimal value vector \(\mathbf{v}^*\), given by:
\[
\mathcal{C}_{1-\delta} = \left\{ \mathbf{v} \in \mathbb{R}^K : n ( \mathbf{v} - \hat{\mathbf{v}} )^{\top} \widehat{\Sigma}^{-1} ( \mathbf{v} - \hat{\mathbf{v}} ) \leq \chi^2_{K, 1-\delta} \right\}.
\]
The validity of this region relies on the rate conditions \(b_* > 1/4\) and \(\alpha\). These conditions are not arbitrary constraints but standard requirements in non-parametric inference to ensure that the bias induced by policy optimization vanishes asymptotically relative to the variance ($n^{-1/2}$), thereby allowing the estimation error to be safely ignored in the limit.

\section{Experimental Results}
To corroborate our theoretical findings and demonstrate practical utility, we evaluate our proposed algorithm, \(\algname\), against FQI on synthetic and real-world Type 1 Diabetes (T1D) data sets for multi-stage adaptive glucose management. Both methods use random forest regressors, trained for 100 iterations, with tree counts tuned from \{100, 200, 300\}. \(\algname\) trains a separate model per stage, leveraging the cyclic MDP structure. Standard FQI, designed for single MDPs, is adapted by concatenating stage-wise state-action spaces and zero-padding irrelevant entries. This inflates the state-action space, complicating learning and potentially obscuring stage-specific dynamics that \(\algname\) captures. Full experimental details are provided in Appendix~\ref{appendix:experiment}.

\subsection{Simulation Study}\label{subsection:simulation}
\subsubsection{Policy Optimization Performance}
\label{sec:sim_performance}
We simulate a four-stage cyclic MDP to model daily glucose dynamics, with stages defined as morning (6:00--11:00), day (11:00--17:00), evening (17:00--22:00), and night (22:00--6:00). Negative rewards penalize hyperglycemia and hypoglycemia to maintain in-range glucose levels. Both \(\algname\) and FQI are trained on two update sets (All stages, Day/Evening only), using fixed policies that select actions uniformly at random. A discount factor of 1 is applied for morning, day, and evening, and 0.9 for night to reflect daily cyclicity.  See Appendix~\ref{appendix:simulation} for complete simulation details.

Table~\ref{table:synthetic_result} reports cumulative rewards over 50 simulated days, averaged across 100 trials with stage-wise sample sizes of \{100, 200, 500\}. \(\algname\) consistently outperforms FQI and a random policy, adeptly adapting to stage-specific dynamics. Conversely, FQI struggles with stage-wise dynamics, occasionally performing worse than the random policy.

\begin{table}[ht]
\centering
{\fontsize{9}{11}\selectfont
\begin{tabular}{ccccccc} 
\toprule
Method & \multicolumn{2}{c}{$n=100$} & \multicolumn{2}{c}{$n=200$} & \multicolumn{2}{c}{$n=500$} \\ 
\cmidrule(lr){2-3} \cmidrule(lr){4-5} \cmidrule(lr){6-7}  
 & All & Day/Evening & All & Day/Evening & All & Day/Evening \\
\midrule
\(\algname\) & -41.7 (2.7) & \textbf{-39.8} (1.7) & -54.0 (2.1) & -93.8 (2.3) & \textbf{-41.6} (3.4) & -57.0 (4.0) \\
FQI & -350.7 (14.0) & -187.7 (10.8) & -119.6 (6.2) & -96.7 (2.3) & -336.5 (5.0) & -130.6 (3.6) \\
Random & \multicolumn{6}{c}{-259.5 (3.8)} \\
\bottomrule
\end{tabular}
}
\caption{Mean (Standard Error) of simulated cumulative rewards over 50 days for \(\algname\), FQI, and a random policy across stages, over 100 repeats, with sample size \(n\) and update sets.}
\label{table:synthetic_result}
\end{table}

\subsubsection{Validity of Statistical Inference}
\label{sec:sim_inference}

To validate the asymptotic normality results presented in Theorem \ref{thm:asymptotic} and assess the coverage properties of the confidence regions, we conducted a simulation study on a heterogeneous cyclic environment. This experiment is designed to verify the statistical inference procedure in a controlled setting where the convergence rate of the nuisance parameters satisfies the theoretical requirement of $O_p(n^{-\beta})$ for $\beta > 1/4$.

The environment is a 3-stage ($K=3$) cyclic MDP with heterogeneous state dimensions cycling as $d_1=1, d_2=2, d_3=2$. The transition dynamics and rewards follow a linear structure:
\[
    s_{k+1} = A_k s_k + B_k a_k + \xi_k, \quad r_k(s, a) = w_k^\top s + u_{k,a}.
\]
To ensure the system is well-defined, the model coefficients were sampled from fixed distributions as follows:
\begin{align*}
    A_k &\sim \mathcal{U}[-0.3, 0.3]^{d_{k+1} \times d_k}, & B_k &\sim \mathcal{U}[-0.3, 0.3]^{d_{k+1}}, \\
    w_k &\sim \mathcal{U}[-0.5, 0.5]^{d_k}, & u_{k, a} &\sim \mathcal{N}(0, 0.5^2).
\end{align*}
The noise is Gaussian $\xi_k \sim \mathcal{N}(0, 0.1^2 I_{d_{k+1}})$. We employed a quadratic basis 
\[
    \Phi_k(s) = [1, s^\top, \text{vec}(s s^\top)^\top]^\top
\]
for both the policy learning step (Algorithm~\ref{alg:cfqi}) and the subsequent evaluation step (Algorithm~\ref{alg:ensemble_cfqi}). This choice ensures sufficient representation capacity for the value function, thereby controlling the approximation bias as required in Theorem \ref{thm:asymptotic}.

We constructed the data set $\mathcal{D}$ of total size $n$ by independently collecting $n/K$ transition tuples for each stage $k \in \{1, \dots, K\}$. For each sample, the state was first drawn uniformly from a bounded domain, and the action was subsequently selected according to a uniform behavior policy:
\[
    s_k \sim \mathcal{U}[-2, 2]^{d_k}, \quad a_k \sim \text{Bernoulli}(0.5).
\]
We then applied an $N=2$ fold sample splitting procedure to these stage-wise data sets. For every stage, the first fold was used to estimate the target policy, while the second fold was reserved for the inference task. 

The estimation target is the value vector $\mathbf{v}^* = [v^*_1, v^*_2, v^*_3]^\top$, where each component $v^*_k = \mathbb{E}_{s \sim \eta_k}[V^\pi(s)]$ is defined with respect to a specific initial distribution $\eta_k$. In this experiment, we set the evaluation distribution identical to the data sampling distribution, given as $\eta_k = \mathcal{U}[-2, 2]^{d_k}$. To evaluate the accuracy of our estimators, the ground truth $\mathbf{v}^*$ was computed via Monte Carlo estimation using $M=50,000$ trajectories drawn from $\eta_k$.

We examined the joint asymptotic normality of the estimator $\hat{\mathbf{v}} \in \mathbb{R}^3$ over $T=200$ independent trials. The effective sample size used for variance estimation is given by $n (N-1)/N$, which amounts to $n/2$ in this configuration. Accordingly, we computed the Mahalanobis distance statistic:
\[
    D^2 = \frac{n(N-1)}{N} (\hat{\mathbf{v}} - \mathbf{v}^*)^\top \widehat{\Sigma}^{-1} (\hat{\mathbf{v}} - \mathbf{v}^*),
\]
where $\widehat{\Sigma}$ is the estimated asymptotic covariance matrix. Under the null hypothesis, $D^2$ follows a Chi-squared distribution with $K=3$ degrees of freedom ($\chi^2_3$).

Figure \ref{fig:inference_plots} presents the diagnostic plots for the statistical inference with a sample size of $n=2400$. The left panel displays the Q-Q plot of the empirical $D^2$ values against the theoretical quantiles of the $\chi^2_3$ distribution. The plot shows a close alignment with the diagonal line, confirming that the sampling distribution of the estimator is well-approximated by the predicted Gaussian distribution. The right panel shows the scatter plot of the estimated values $\hat{\mathbf{v}}$ projected onto the first two dimensions ($v_1, v_2$). The estimates are distributed around the ground truth $\mathbf{v}^*$ (blue star), and the empirical mean (green star) coincides closely with the true parameter, illustrating the unbiasedness of the estimator.

Table \ref{tab:coverage_results} summarizes the empirical joint coverage probabilities and Mean Squared Errors (MSE) for total sample sizes $n \in \{600, 1200, 1500, 2400\}$. We observe that the empirical coverage, calculated as the proportion of trials where $D^2 \le \chi^2_{3, 0.95}$, consistently approaches the nominal 95\% confidence level as the sample size increases. Furthermore, the MSE, defined as the squared Euclidean norm of the estimation error $\|\hat{\mathbf{v}} - \mathbf{v}^*\|_2^2$, exhibits a steady decrease as the sample size grows, demonstrating the consistency of our estimator.

\begin{figure}[t]
    \centering
    \includegraphics[width=0.95\textwidth]{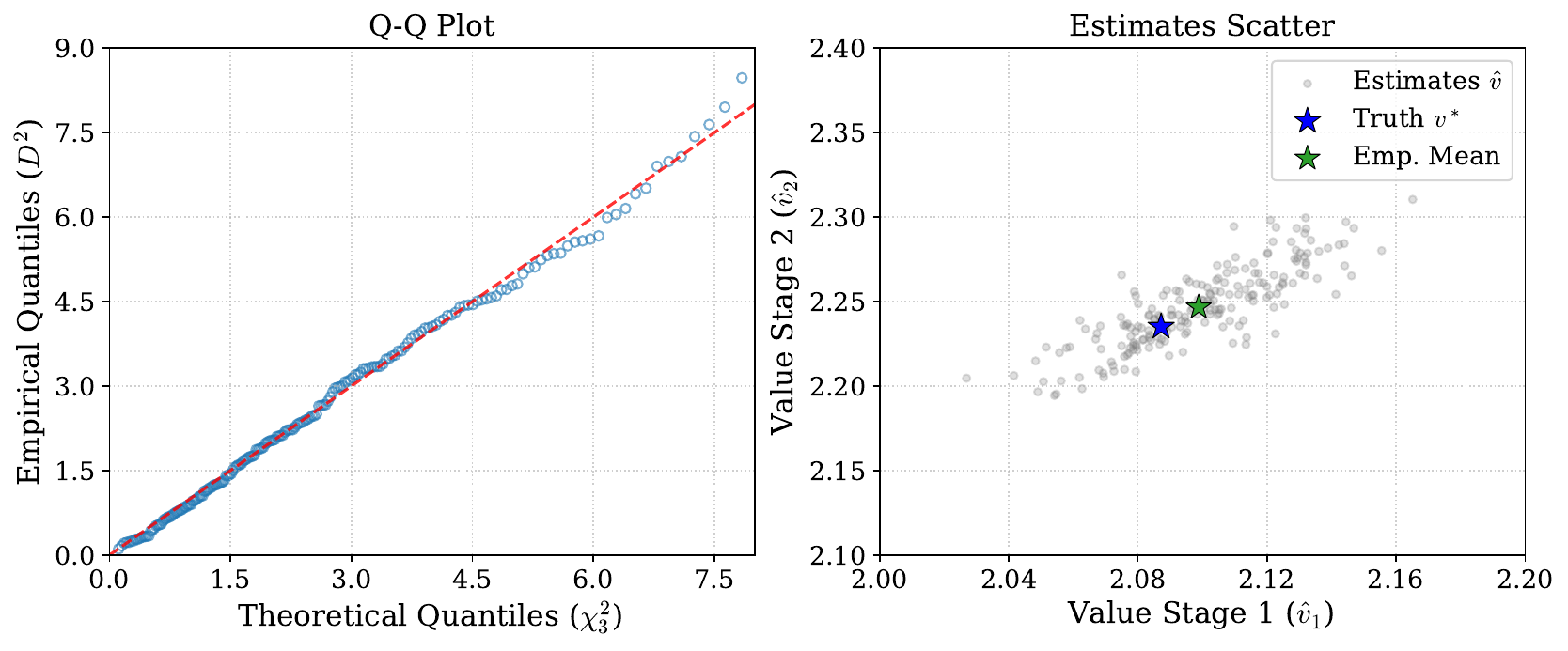}
    \caption{Diagnostic plots for statistical inference based on $n=2400$ samples, each estimated across 200 independent trials. The left panel presents a Q-Q plot of the empirical squared error statistic $D^2$ against the theoretical $\chi^2_3$ quantiles. The right panel shows a scatter plot of the estimates $(\hat{v}_1, \hat{v}_2)$ in relation to the ground truth $\mathbf{v}^*$ (blue star) and the empirical mean (green star).}

    \label{fig:inference_plots}
\end{figure}

\begin{table}[ht]
\centering
\setlength{\tabcolsep}{18pt} 
\renewcommand{\arraystretch}{1.2} 
\begin{tabular}{ccc}
\toprule
Total Sample Size ($n$) & Coverage (\%) & MSE ($\times 10^{-3}$) \\ 
\midrule
600  & 99.5 & 2.8225 \\
1200 & 97.0 & 1.5138 \\
1500 & 96.0 & 1.2529 \\
2400 & 95.0 & 0.7117 \\
\bottomrule
\end{tabular}
\caption{Empirical joint coverage probabilities and Mean Squared Error (MSE) under varying sample sizes, computed over $T=200$ independent trials. Coverage reports the proportion of trials where the ground truth falls within the 95\% confidence region ($D^2 \le \chi^2_{3, 0.95}$), and MSE measures the estimation accuracy of the value vector.}
\label{tab:coverage_results}
\end{table}

\subsection{Real-World Experiments: T1D Behavioral and Clinical data set}\label{subsection:real_experiment}
To assess our method on real-world data, we used a de-identified data set from an IRB-approved study at an anonymous U.S. medical center, which collected data from 25 elderly T1D patients over 33 non-consecutive days. The data set includes 24-hour dietary and activity recalls, glucose and insulin levels, accelerometer readings, and clinical and demographic information.

For offline RL, we divide each day into four stages (as in the simulation) with a discount factor of 0.9 at night and negative rewards for out-of-range glucose. Stage-specific states include demographic (sex, weight), CGM readings (glucose level, rate of change), macronutrient intake, and actions (meals, insulin doses, physical activity above MET 3). Decision points occur every 30 minutes for morning, day, and evening (actions affecting 1 hour), and once at 22:00 for night, transitioning to 6:00 the next day. The data set is split 8:2 into training and test sets. Details are described in Appendix \ref{appendix:real}.

Figure~\ref{fig:experiment_values} shows box-plots comparing the distribution of estimated values \(\widehat{V}_k(s_k)\) for \(\algname\) and Flattened FQI, trained on the training set and evaluated on test set states across four stages under four update sets: All, Morning/Day, Evening/Night, and None (non-updated stages follow random policies). \(\algname\) achieves higher estimated values, particularly in the `All' update setting, while Flattened FQI converges to lower values, indicating suboptimality in finite-sample real-world scenarios.

To evaluate \(\algname\)'s potential despite limited samples and no online testing, we conduct 100 bootstrap iterations on the training set. Figure~\ref{fig:bootstrap_histogram} displays the distribution of estimated values \(\widehat{V}_k\) on test samples, with initial state distributions \(\eta_k\) set to stage start times (e.g., 6:00 for morning) and other variables matching the test set. We calculate average stage-wise rewards from the test set, applying discounted sums to estimate cumulative rewards under clinically guided human policies (red line). The bootstrap mean (green line) exceeds test rewards, with 90\% intervals (black lines) suggesting potential to improve beyond clinical baselines, except the morning stage has greater variability.

\begin{figure}[t!]
    \centering
    \includegraphics[width=0.99\textwidth]{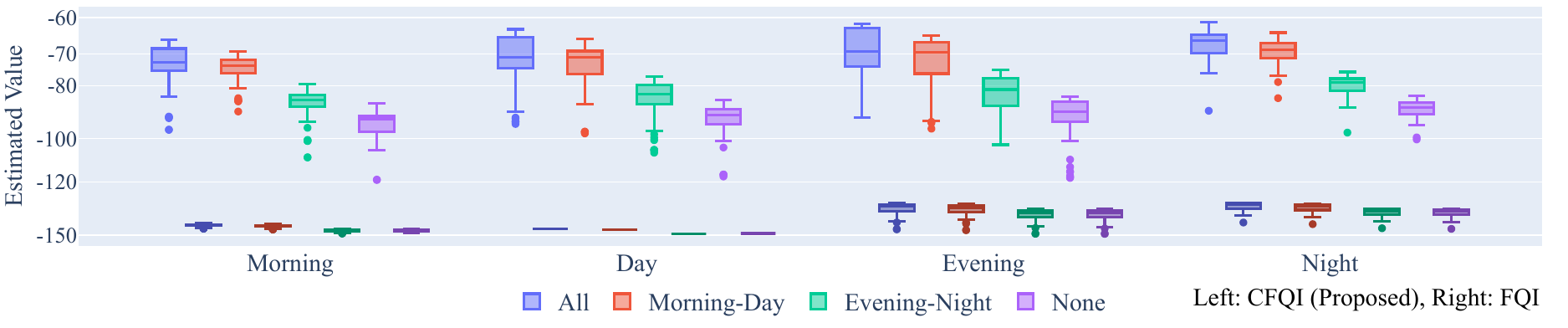}
    \caption{Box plots of estimated values $\widehat{V}_k(s_k)$ at test sample states for each stage under four update sets in the T1D data set. 
Colors indicate update sets. For each time period and update set, two box plots are shown for comparison; the algorithm corresponding to each box is indicated in the legend (left: $\algname$; right: flattened FQI).}
    \label{fig:experiment_values}
\end{figure}

\begin{figure}[t!]
    \centering
    \includegraphics[width=0.99\textwidth]{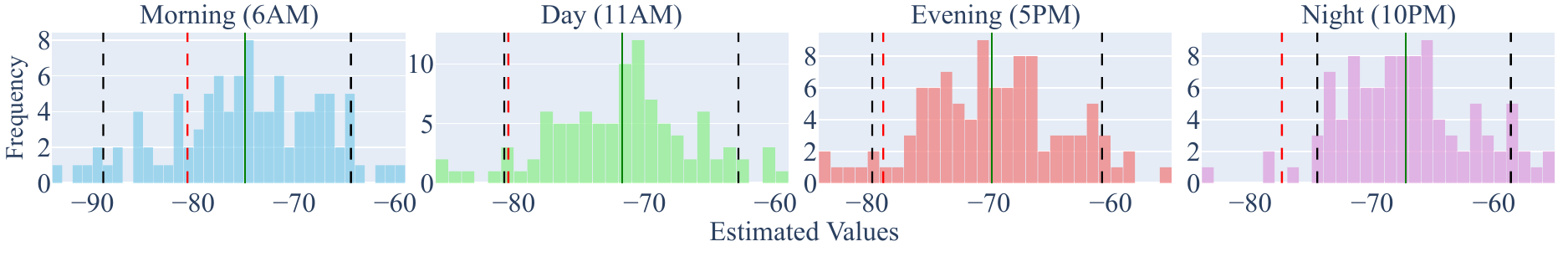}
    \caption{Bootstrap distribution of estimated values \(\widehat{V}_k\) at stage starts in T1D analysis. Red line: average observed test cumulative reward; green line: bootstrap mean; black lines: 90\% intervals.}
    \label{fig:bootstrap_histogram}
\end{figure}

\section{Discussion and Conclusion}

Our analysis relies on the uniform coverage assumption (Assumption \ref{assumption:coverage}), which generalizes standard concentrability coefficients widely established in the offline RL literature \citep{chen2019information, munos2003error, munos2008finite} to the cyclic setting. This assumption serves as a canonical baseline to characterize the fundamental learnability of Cyclic MDPs without the confounding factors of exploration. While recent advances in single-stage RL have explored relaxing coverage requirements via pessimism, establishing the minimax rates under this standard assumption provides the necessary theoretical foundation for the proposed framework. Future work may extend this analysis by incorporating pessimistic penalties to address scenarios with partial coverage, leveraging the modular structure of \(\algname\).

In this work, we introduced \(\algname\), an offline reinforcement learning algorithm designed for a newly formalized class of structured cyclic Markov Decision Processes (MDPs), characterized by heterogeneous stages with distinct dynamics, rewards, and state-action spaces. \(\algname\) employs a vector of interconnected, stage-specific Q-functions optimized through a set of coupled Bellman equations. This architecture provides the flexibility to learn policies for all stages simultaneously or to target specific subsets while conditioning on fixed policies for the remaining stages, thereby improving practical applicability in settings with partial system controllability.

We provided comprehensive theoretical guarantees for \(\algname\), including finite-sample suboptimality bounds and convergence rates under Besov regularity assumptions for various function approximators, such as random forests. Additionally, we extended a sieve-based methodology for asymptotic inference to our setting, establishing the normality of value estimates and enabling the construction of confidence regions. Empirical evaluations on both synthetic and real-world Type 1 Diabetes data sets underscore the effectiveness and adaptability of \(\algname\) in capturing complex cyclic dependencies and optimizing policies within multi-stage environments.



\acks{The first and last authors were supported in part by the National Center for Advancing Translational Sciences (NCATS) of the National Institutes of Health under Grant Award Number UM1TR004406. The second author was supported by the National Institute of Diabetes and Digestive and Kidney Diseases under Grant Award Number 1R21DK125033. The last author was also supported in part by the National Science Foundation under Grant DMS-2210659. The content is solely the responsibility of the authors and does not necessarily represent the official views of the National Institutes of Health or the National Science Foundation.}



\newpage
\appendix

\section{Discussion on Approximation Error \texorpdfstring{\(\epsilon_{\mathrm{approx},k}\)}{e approx,k} and Table \ref{tab:approx_rates_covering}}\label{sec:discussion_approx}

\textbf{Approximation Error \(\epsilon_{\mathrm{approx},k}\).}
This section discusses the context and implications of the approximation error analysis and the rates presented in Table~\ref{tab:approx_rates_covering}. The \(L_2\)-approximation error is defined in Equation~\ref{eq:approx_error} as
\[
\epsilon_{\mathrm{approx}, k} := \inf_{f \in \mathcal{F}_k} \normtwo[,\nu_k]{Q_k^* - f},
\]
where \(\nu_k\) denotes the stage-specific data distribution. The dependence on this distribution \(\nu_k\) makes direct analysis using standard approximation theory challenging.

To facilitate theoretical analysis, we therefore shift our focus towards controlling a distribution-independent uniform error over the state space, across all actions. Specifically, we often seek to bound the best uniform approximation achievable by a function \(f\) within the class \(\mathcal{F}_k\), represented by quantities like
\[
\inf_{f \in \mathcal{F}_k} \sup_{a_k \in \action_k} \norminf{f(\cdot , a_k) - Q_k^*(\cdot, a_k)}.
\]
This type of uniform control is feasible because the state space \(\state_k \subseteq [0,1]^{d_k}\) is compact and the action space \(\action_k\) is finite.

Assumption~\ref{assumption:besov} is crucial for this approach. It ensures that for each fixed action \(a_k\), the function \(s_k \mapsto Q_k^*(s_k, a_k)\) possesses sufficient smoothness, lying in a Besov space \(B^{s_k}_{p_k, q_k}(\state_k)\). To rigorously justify relating sup-norm approximation results to \(L_\infty\) control via function space embeddings, we typically require that the smoothness and integrability parameters satisfy the Sobolev embedding condition \(\dfrac{1}{p_k} < \dfrac{s_k}{d_k}\)~\citep{adams2003sobolev}.

\textbf{\(\epsilon\)-covering Number \(\mathcal{N}_{\mathcal{F}_k}(\epsilon)\).} Assumption \ref{assumption:besov} enables bounding the covering number for the functions for a fixed action \(a_k\), considering them as functions on the state space, applied for the function approximation methods described in Table \ref{tab:approx_rates_covering}.

To determine the covering number for the entire function class \(\mathcal{F}_k\) over the state-action space, we then combine these covering number bounds derived for each fixed action. Given the finite number of actions at stage \(k\), the log-covering number bound for \(\mathcal{F}_k\) is obtained by multiplying the corresponding bound applicable to fixed-action functions defined on the state space by the number of actions at stage \(k\).

\textbf{Discussion on Table \ref{tab:approx_rates_covering}.}
The approximation rates \(\epsilon_{\mathrm{approx},k} = \mathcal{O}(D_k^{-\alpha_k})\) presented in Table~\ref{tab:approx_rates_covering} are typically derived under such uniform error control frameworks. Here, \(D_k\) refers to a complexity parameter of the function class \(\mathcal{F}_k\), such as its degrees of freedom or model size, for example, the number of basis functions or network parameters. The term \(\alpha_k\) represents the rate exponent determined by the interplay of function smoothness (related to \(s_k, p_k\)), input dimension (\(d_k\)), and the chosen approximation method.

For classical approximation techniques such as algebraic or trigonometric polynomials, B-splines, and wavelets, the approximation rate in Besov or Sobolev spaces is well-studied~\citep{ciesielski1982spline, daubechies1992ten, devore1993constructive, hardle2012wavelets, lorentz1996constructive, oswald1990degree}. These methods typically yield rates for the sup-norm error of the form \(\mathcal{O}(D_k^{-s_k/d_k})\), where \(D_k\) often corresponds to the number of basis functions. The log-covering number for such function classes is often proportional to \(D_k\), assuming bounded coefficients for the basis expansion.

This classical approximation theory also provides the foundation for the linear sieve approximators used for Q-functions within our inference procedure detailed in Section~\ref{sec:inference}.

Gaussian radial basis functions (RBFs) offer an alternative approximation approach. These methods allow flexible placement of basis centers and control over scales within compact domains. Approximation guarantees for RBFs are sometimes derived via connections to wavelet theory~\citep{hangelbroek2010nonlinear}, and their log-covering numbers also typically grow linearly in the number of basis functions \(D_k\).

Recent studies have extended approximation error analysis to encompass deep learning models. For fully connected networks using the ReLU activation function, reference \citep{siegel2023optimal} demonstrates approximation capabilities for Besov functions, potentially achieving rate exponents \(\alpha_k = 2s_k/d_k\), influenced by network depth. Complementary work by \citep{ou2024coveringnumbersdeeprelu} analyzes the log-covering numbers of these ReLU networks, finding scaling proportional to the network width squared times its depth, a quantity related to the total number of parameters \(D_k\).

Transformer architectures have also been analyzed within this approximation theory framework. Research such as \citep{takakura2023approximation, wada2023approximation} investigates their approximation power and covering numbers on Besov spaces, typically under specific assumptions regarding data sparsity, coefficient norm constraints, network width, and depth. It is noteworthy that when configured appropriately, for instance by setting the input sequence length to 1, Transformer networks can function as powerful universal approximators, aligning with the approximation regimes discussed in Table~\ref{tab:approx_rates_covering}.

\textbf{Extension beyond deterministic approximators.}
Although our core analysis focuses on deterministic approximation using functions from well-defined classes \(\mathcal{F}_k\), alternative stochastic approximation methods like Random Forests can also be employed in practice. These methods often operate without explicit reliance on a pre-specified function class structure and may still achieve consistent policy learning under appropriate conditions. As noted following Theorem~\ref{thm:random_forest}, the proposed \(\algname\) algorithm maintains compatibility with such flexible learners, enhancing its practical applicability.

\section{Experimental Details}\label{appendix:experiment}

\subsection{Simualtional Study Environment Details}\label{appendix:simulation}
The simulational study from subsection \ref{subsection:simulation} uses a synthetic environment simulating a physiological process with \( K = 4 \) daily stages (Morning, Day, Evening, Night) to evaluate reinforcement learning algorithms for glycemic control. The environment is characterized by stage-specific state-action dynamics, designed to be complex and nonlinear, to assess algorithm performance under these varied conditions. While grounded in physiological principles, certain glucose-related physiological responses are simplified, or their fluctuations are intentionally magnified. This design choice aims to establish pronounced scenarios, such as clear hypoglycemic or hyperglycemic risks, thereby enabling a examination of the algorithms' performance in managing glycemic variability.

The following sections describe the state variables, action variables, meal definitions, stage-specific glucose dynamics, and reward function.

\subsubsection{State Variables}
The state at time \( t \) is defined by variables listed in Table \ref{tab:state_variables}. Cumulative nutrient values (e.g., \( C_M \)) reset to 0 at 6 AM daily.

\begin{table}[htbp]
\centering
\caption{State Variables for Simulational Study}
\label{tab:state_variables}
\begin{tabular}{@{}lll@{}}
\toprule
Symbol & Description & Unit / Range \\ \midrule
\( t \) & Time of day & 6--22 (hours) \\
\( G_t \) & Glucose level & mg/dL \\
\( \Delta G_t \) & Glucose rate of change (past hour) & mg/dL/hr \\
\( C_M, P_M, F_M \) & Morning carbohydrates, protein, fat & grams \\
\( C_D, P_D, F_D \) & Day carbohydrates, protein, fat & grams \\
\( C_E, P_E, F_E \) & Evening carbohydrates, protein, fat & grams \\
\( S_{sex} \) & Sex (0 or 1) & Categorical \\
\( W \) & Weight & kg \\ \bottomrule
\end{tabular}
\end{table}

\subsubsection{Action Variables for Simulational Study}
The agent selects actions at each time step from a 6-component vector, detailed in Table \ref{tab:action_variables}. Not all components affect every stage; unused components have no impact on that stage’s glucose dynamics.

\begin{table}[htbp]
\centering
\caption{Action Variables}
\label{tab:action_variables}
\begin{tabular}{@{}lll@{}}
\toprule
Symbol & Description & Values \\ \midrule
\( A_I \) & Insulin action & 0 (no dose), 1 (dose) \\
\( A_M \) & Meal type & 0 (no meal), 1 (low calorie), 2 (high calorie) \\
\( A_P \) & Physical activity & 0 (none), 1 (light), 2 (moderate/intense) \\
\( A_S \) & Sleep time (night only) & 22.0--24.0 (hours) \\
\( A_{SR} \) & Stress reduction & 0 (no), 1 (yes) \\
\( A_H \) & Hydration & 0 (no/low), 1 (yes/adequate) \\ \bottomrule
\end{tabular}
\end{table}

\subsubsection{Meal Definitions}
Meal types (\( A_M \)) have the following nutritional content:
\begin{itemize}[noitemsep]
    \item No Meal (Type 0): 0g carbohydrates, 0g protein, 0g fat.
    \item Low Calorie (Type 1): 30g carbohydrates, 10g protein, 10g fat.
    \item High Calorie (Type 2): 70g carbohydrates, 25g protein, 25g fat.
\end{itemize}

\subsubsection{Stage-Specific Glucose Dynamics}
Glucose levels (\( G_{t+1} \)) are computed using stage-specific equations based on the current state \( s_t \) and action \( a_t \).  Weight (\( W \)) is in kg, and current nutrients from \( A_M \) are \( C_{curr}, P_{curr}, F_{curr} \). Cumulative nutrients (e.g., \( C_M(t) \)) reflect amounts at the start of hour \( t \).

\textit{1.4.1 Morning Dynamics (6 AM -- 10 AM)}
Morning features higher insulin resistance and sensitivity to food intake. The state update is \( s'_{M} = \text{morning\_dynamic}(s_t, A_I, A_M, A_P, A_S, A_{SR}, A_H) \), where \( A_P \) and \( A_S \) have no effect.

\begin{align*}
    G_{t+1} = & \min(\max(G_{calc}, 50), 450) \\
    G_{calc} = & (10.0 + \text{early\_boost}) + 0.93 \cdot G_t \\
    & + (0.50 + 0.002 \cdot \max(0, G_t - 120)) \cdot \text{carb\_mod} \cdot C_{curr} \cdot (70/W) \\
     & + 0.10 \cdot P_{curr} + 0.03 \cdot F_{curr} \\
    & - 55.0 \cdot A_I \cdot \text{clip}(\text{eff\_mod}_{I,M}, 0.05, 1.5) \\
    & - 4.0 \cdot A_{SR} - 2.0 \cdot A_H \\
    & + 0.0018 \cdot C_M(t) \cdot W + 0.04 \cdot P_M(t) - 0.02 \cdot F_M(t) \\
    & + N(0, 5.5^2)
\end{align*}
\textbf{Details:}
\begin{itemize}[noitemsep]
    \item \( \text{early\_boost} = 5.0 \) if \( t < 8 \), else 0 (higher baseline glucose before 8 AM).
    \item \( \text{carb\_mod} = (1.1 \text{ if } A_H=1 \text{ else } 0.8) \cdot (1.1 \text{ if } A_{SR}=1 \text{ else } 0.7) \) adjusts carbohydrate impact based on hydration and stress reduction.
    \item Insulin effectiveness: \( \text{eff\_mod}_{I,M} = \text{time\_6am\_factor} \cdot \text{h\_factor} \cdot \text{sr\_factor} / (\text{fat\_res} \cdot \text{carb\_res}) \), where:
        \begin{itemize}[noitemsep]
            \item \( \text{time\_6am\_factor} = 0.25 \) if \( t=6 \) and \( A_M=0 \), else 1.0 (reduced insulin effect at 6 AM without a meal).
            \item \( \text{h\_factor} = 1.1 \) if \( A_H=1 \), else 0.8.
            \item \( \text{sr\_factor} = 1.1 \) if \( A_{SR}=1 \), else 0.7.
            \item \( \text{fat\_res} = 1.0 + 0.07 \cdot F_M(t) \) (morning fat increases insulin resistance).
            \item \( \text{carb\_res} = 1.0 + 0.005 \cdot C_{curr} \) (current carbohydrates slightly reduce insulin effect).
        \end{itemize}
    \item Cumulative morning nutrients (\( C_M, P_M, F_M \)) moderately affect glucose. New nutrients update \( C_M(t+1) \), etc. Day and evening cumulatives remain 0.
\end{itemize}

\textit{1.4.2 Day Dynamics (11 AM -- 4 PM)}
Daytime has higher insulin sensitivity and significant physical activity effects. The state update is 
\[ s'_{D} = \text{day\_dynamic}(s_t, A_I, A_M, A_P, A_S, A_{SR}, A_H), \]
where \( A_S \) is unused.

\begin{align*}
    G_{t+1} = & \min(\max(G_{calc}, 50), 450) \\
    G_{calc} = & 5.0 + 0.94 \cdot G_t \\
    & + (0.42 \cdot \text{carb\_proc\_eff}) \cdot C_{curr} \cdot (70/W) + 0.09 \cdot P_{curr} \\
    & - 80.0 \cdot A_I \cdot \text{clip}(\text{eff\_mod}_{I,D}, 0.1, 1.3) \\
    & + (\text{activity\_base} + \text{act\_carb\_syn}) \cdot A_P \cdot \text{act\_sr\_h\_mod} \\
    & - 8.0 \cdot A_I \cdot \text{bool}(A_P>0) - 6.0 \cdot A_{SR} - 4.0 \cdot A_H \\
    & + 0.0013 \cdot C_D(t) \cdot W + 0.020 \cdot C_M(t) + \text{other\_cumulatives} \\
    & + N(0, 4.5^2)
\end{align*}
\textbf{Details:}
\begin{itemize}[noitemsep]
    \item \( \text{carb\_proc\_eff} = (0.8 \text{ if } A_{SR}=0 \text{ else } 1.0) \cdot (0.85 \text{ if } A_H=0 \text{ else } 1.0) \) reflects reduced carbohydrate processing without stress reduction or hydration.
    \item Insulin effectiveness: \( \text{eff\_mod}_{I,D} = (1.0 - (0.002 C_M(t) + 0.004 F_M(t))) \cdot \text{sr\_h\_mod} \), where \( \text{sr\_h\_mod} = (1.1 \text{ if } A_{SR}=1 \text{ else } 0.8) \cdot (1.15 \text{ if } A_H=1 \text{ else } 0.85) \).
    \item Activity effects: \( \text{activity\_base} = -20.0 \), \( \text{act\_carb\_syn} = -0.15 \cdot (C_{curr}/50.0) \cdot A_P \), enhanced by \( \text{act\_sr\_h\_mod} = (1.1 \text{ if } A_{SR}=1 \text{ else } 0.9) \cdot (1.2 \text{ if } A_H=1 \text{ else } 0.8) \).
    \item The term \( -8.0 \cdot A_I \cdot \text{bool}(A_P>0) \) adds a synergistic glucose drop with insulin and activity.
    \item \( \text{other\_cumulatives} \) includes minor effects from \( P_D(t), F_D(t), P_M(t), F_M(t) \). Updates \( C_D, P_D, F_D \); evening cumulatives remain 0.
\end{itemize}

\textit{1.4.3 Evening Dynamics (5 PM -- 9 PM)}
Evening metabolism slows, with dietary fat having a stronger impact. The state update is
\[ s'_{E} = \text{evening\_dynamic}(s_t, A_I, A_M, A_P, A_S, A_{SR}, A_H), \]
where \( A_S \) is unused.

\begin{align*}
    G_{t+1} = & \min(\max(G_{calc}, 50), 450) \\
    G_{calc} = & (10.0 + 0.003(C_M(t)+C_D(t))) + 0.94 \cdot G_t \\
    & + (0.40 \cdot \text{carb\_proc\_eff}) \cdot C_{curr} \cdot (70/W) + 0.10 \cdot P_{curr} + 0.18 \cdot F_{curr} \\
    & - 55.0 \cdot A_I \cdot \text{clip}(\text{eff\_mod}_{I,E}, 0.1, 1.0) \\
    & + \text{activity\_eff}_{E} - 3.0 \cdot A_{SR} - 1.5 \cdot A_H \\
    & + 0.0016 \cdot C_E(t) \cdot W + \text{other\_cumulatives} \\
    & + N(0, 6.0^2)
\end{align*}
\textbf{Details:}
\begin{itemize}[noitemsep]
    \item Baseline glucose increases slightly with morning and day carbohydrates.
    \item Fat (\( F_{curr} \)) raises glucose more significantly.
    \item Insulin effectiveness: \( \text{eff\_mod}_{I,E} = \text{sr\_h\_mod} / (1.0 + \text{fat\_res}) \), where:
        \begin{itemize}[noitemsep]
            \item \( \text{sr\_h\_mod} = (0.8 \text{ if } A_{SR}=0 \text{ else } 1.0) \cdot (0.85 \text{ if } A_H=0 \text{ else } 1.0) \).
            \item \( \text{fat\_res} = 0.18 \cdot F_{curr} + 0.010 \cdot F_D(t) + 0.008 \cdot F_M(t) \).
        \end{itemize}
    \item Activity: \( \text{activity\_eff}_{E} = (-10.0 \cdot A_P \cdot \text{act\_sr\_h\_mod}) \), where \( \text{act\_sr\_h\_mod} = (0.7 \text{ if } A_{SR}=0 \text{ else } 1.0) \cdot (0.75 \text{ if } A_H=0 \text{ else } 1.0) \). If \( A_P > 0 \) and (\( C_{curr} > 45 \) or \( F_{curr} > 12 \)), this effect is multiplied by 0.4.
    \item \( \text{other\_cumulatives} \) includes effects from \( P_E(t), F_E(t), C_D(t), F_D(t), C_M(t), P_D(t) \). Updates \( C_E, P_E, F_E \).
\end{itemize}

\textit{1.4.4 Night Dynamics (10 PM -- 6 AM)}
Night models overnight glucose changes. The state update is \( s'_{N} = \text{night\_dynamic}(s_t, A_I, A_M, A_P, A_S, A_{SR}, A_H) \), where \( A_M, A_P \) are unused, and \( A_S \) (sleep time) is primary.

\begin{align*}
    G_{t+1} = & \min(\max(G_{calc}, 50), 450) \\
    G_{calc} = & (5.0 + \text{eod\_push}) + 0.97 \cdot G_t \\
    & - 30.0 \cdot A_I \cdot \text{clip}(\text{eff\_mod}_{I,N}, 0.05, 1.0) \\
    & + (\text{sleep\_eff\_val} \cdot \text{eff\_sleep\_hrs} \cdot \text{sleep\_qual}) \\
    & + \text{linger\_effects} - 0.5 \cdot A_{SR} - 0.2 \cdot A_H \\
    & + N(0, 6.0^2)
\end{align*}
\textbf{Details:}
\begin{itemize}[noitemsep]
    \item \( \text{eod\_push} = 0.15 F_E(t) - 0.08 P_E(t) + (1.5 \text{ if } A_{SR}=0 \text{ else } -1.0) + (1.0 \text{ if } A_H=0 \text{ else } -0.5) \).
    \item Insulin effectiveness: \( \text{eff\_mod}_{I,N} = (1.0 - \text{resist}) \), where \( \text{resist} = 0.003 \cdot \max(0, G_t-150) + 0.01 \cdot F_E(t) + 0.002 \cdot C_E(t) + (0.3 \text{ if } A_{SR}=0 \text{ else } -0.1) + (0.2 \text{ if } A_H=0 \text{ else } -0.05) \).
    \item Sleep: \( \text{sleep\_eff\_val} = -2.5 \), \( \text{eff\_sleep\_hrs} = \max(0, A_S - 22.0) \), \( \text{sleep\_qual} = (1 \text{ if } A_{SR}=1 \text{ else } 0.7) \cdot (1 \text{ if } A_H=1 \text{ else } 0.8) \cdot \max(0.5, 1 - 0.004 \cdot \max(0, G_t-140)) \).
    \item \( \text{linger\_effects} = 0.03 C_E(t) + 0.025 F_E(t) + 0.02 P_E(t) + 0.001 C_M(t) + 0.002 F_D(t) \).
    \item At 6 AM, \( t \) resets to 6, and all cumulative nutrients reset to 0.
\end{itemize}

\subsubsection{Reward Function}
The reward \( R_t \) penalizes time spent in undesirable glucose ranges during the transition from \( G_t \) to \( G_{t+1} \) over 1 hour (morning, day, evening) or 8 hours (night). Times \( T_{<70} \), \( T_{180-250} \), and \( T_{>250} \) (in mg/dL) are estimated by sampling 10 uniform points between \( G_t \) and \( G_{t+1} \):
\[ R_t = -3 \cdot T_{<70} - 2 \cdot T_{>250} - 1 \cdot T_{180-250} \]
Hypoglycemia (\( <70 \)) is penalized most, followed by severe hyperglycemia (\( >250 \)), then mild hyperglycemia (\( 180-250 \)).

\subsubsection{Hyperparameter Tuning and Evaluation}\label{appendix:hyperparameter}
We tuned and evaluated two reinforcement learning methods, the proposed \(\algname\) and FQI, using random forests as the function approximator. The hyperparameter tuned was the number of trees in the random forest, tested at 100, 200, and 300.

\textit{Tuning Process}
For both \(\algname\) and FQI, tuning was conducted with a fixed seed of 0. We generated 200 samples per stage (Morning, Day, Evening, Night), resulting in 800 samples total. Training ran for 100 iterations, with update set
\[\mathcal{U}=\{\mathrm{morning, day, evening, night}\}.\]
Hyperparameter selection was based on evaluation with a seed of 1. We generated 100 trajectories, each spanning 10 days, and computed the average cumulative reward over these trajectories. The Random Forest tree count yielding the highest average cumulative reward was selected as the optimal hyperparameter for each method. The results are as below.

\begin{table}[htbp]
\centering
\caption{Hyperparameter Tuning Results: Average Cumulative Reward with Standard Deviation over 10-Day Trajectories with seed 1 and 100 Trajectories.}
\label{tab:tuning_results}
\begin{tabular}{@{}cccc@{}}
\toprule
Number of Trees & \multicolumn{1}{c}{100} & \multicolumn{1}{c}{200} & \multicolumn{1}{c}{300} \\ \midrule
\(\algname\) & \textbf{-104.86} (36.57) & -133.19 (53.73) & -248.04 (24.23) \\
FQI & \textbf{-198.69} (69.82) & -373.97 (98.78) & -277.06 (17.87) \\ \bottomrule
\end{tabular}
\end{table}

\textit{Final Training and Validation}
Using the selected hyperparameters, we trained \(\algname\) and FQI with a seed of 1000, generating 100, 200 and 500 samples across all stages. For validation, we used a seed of 1001 to generate samples and computed the distribution of cumulative rewards over 50-day trajectories. 

\subsection{Type 1 Diabetes Patient Study Details}\label{appendix:real}
This section details the framework for analyzing real-world patient data from  subsection~\ref{subsection:real_experiment}. The data is organized into state-action-reward tuples to evaluate glycemic control across four daily stages: Morning, Day, Evening, and Night.

\subsubsection{Data Collection}\label{appendix:real_data_collection}
Participants were recruited through an anomymous medical center’s electronic health records and endocrinology clinic using purposive sampling to ensure diversity in demographic and health-related characteristics. Additional clinical and demographic data were collected to support the analysis.

\subsubsection{State Variables}\label{appendix:real_state_variables}
State variables are defined for each time point and are consistent across all stages, with stage-specific extensions. The common state features include:
\begin{itemize}
    \item Sex
    \item Age
    \item Current blood glucose level (mg/dL)
    \item Rate of change of blood glucose over the preceding hour (mg/dL/hr)
    \item Current hour of the day
    \item Cumulative carbohydrate, protein, and fat intake since the start of the morning period (in grams)
\end{itemize}

Stage-specific states incorporate additional cumulative macronutrient information, reset daily at 6 AM:
\begin{itemize}
    \item \textbf{Morning State:} Includes only the common state features.
    \item \textbf{Day State:} Extends the common features with cumulative carbohydrate, protein, and fat intake since the start of the Day period.
    \item \textbf{Evening State:} Includes the common features, cumulative macronutrients from the Day period, and cumulative carbohydrate, protein, and fat intake since the start of the Evening period.
    \item \textbf{Night State:} Identical to the Evening State.
\end{itemize}

\subsubsection{Action Definitions}\label{appendix:real_action_definitions}
Actions are defined for each stage based on logged or inferred patient behaviors, with combinations varying by stage.

\textbf{Morning Actions (6 AM -- 10 AM):}
Two binary actions yield $2^2 = 4$ combinations:
\begin{itemize}
    \item Insulin administered (Yes/No)
    \item Meal consumed (Yes/No)
\end{itemize}

\textbf{Day Actions (11 AM -- 4 PM):}
Three binary actions yield $2^3 = 8$ combinations:
\begin{itemize}
    \item Insulin administered (Yes/No)
    \item Meal consumed (Yes/No)
    \item Physical activity (METs $> 3$) performed (Yes/No)
\end{itemize}

\textbf{Evening Actions (5 PM -- 9 PM):}
Identical to Day actions, yielding $2^3 = 8$ combinations:
\begin{itemize}
    \item Insulin administered (Yes/No)
    \item Meal consumed (Yes/No)
    \item Physical activity (METs $> 3$) performed (Yes/No)
\end{itemize}

\textbf{Night Actions (Decision at 10 PM):}
One categorical and one binary action yield $3 \times 2 = 6$ combinations:
\begin{itemize}
    \item Chosen bedtime (10 PM, 11 PM, or 0 AM)
    \item Insulin administered around bedtime (Yes/No)
\end{itemize}

\subsubsection{Reward Function}\label{appendix:real_reward_function}
The reward function mirrors that of the simulation study, penalizing time spent in undesirable glycemic ranges. The penalties are:
\begin{itemize}
    \item Time with glucose $< 70$ mg/dL ($T_{<70}$): $-3$ per hour
    \item Time with glucose $> 250$ mg/dL ($T_{>250}$): $-2$ per hour
    \item Time with glucose between $180$ and $250$ mg/dL (inclusive, $T_{180-250}$): $-1$ per hour
\end{itemize}
The duration is 1 hour for daytime stages and approximately 8 hours for the Night stage, assuming linear glucose changes between observed points. The total reward $R_t$ is:
\[
R_t = -3 \cdot T_{<70} - 2 \cdot T_{>250} - 1 \cdot T_{180-250}
\]

\section{Limitations and Broader Impacts}
This study proposes an off-policy algorithm tailored for cyclic multi-stage decision-making. While it does not directly target online learning scenarios, the foundational structure and theoretical guarantees laid out here provide a natural pathway for future extensions to such settings. The analysis centers on least-squares regression over a broad class of function approximators, which is computationally efficient in most cases. For highly expressive models, additional refinements may further enhance scalability, presenting a promising direction for future work.

This research is primarily focused on methodological contributions and theoretical insights, with no foreseeable negative societal impacts. Rather, it contributes positively by enabling more effective sequential decision-making in domains characterized by cyclic dynamics, such as healthcare and urban systems, supporting the development of adaptive and data-driven solutions.

\section{Technical Lemmas}

\begin{lemma}[Regression Error Bound]\label{lemma:regression_bound}
Let \(\mathcal{D} = \{ (x_i, y_i) \}_{i=1}^n\) be a data set where each pair \((x_i, y_i)\) is independently drawn from a distribution \(\nu\).  

Assume \(f^*(x) = \mathbb{E}[y \mid x]\), and that \(y_i \in [0, Y]\) almost surely,  
with \(f(x) \in [0, Y]\) for all \(f \in \mathcal{F} \cup \{ f^* \}\).  

Let \(\epsilon_{\mathrm{approx}} \ge 0\) satisfy
\[
\inf_{f \in \mathcal{F}} \normtwo[,\nu]{f^* - f} \le \epsilon_{\mathrm{approx}}.
\]

Define the least-squares empirical risk minimizer
\[
\hat{f} = \argmin_{f \in \mathcal{F}} \frac{1}{n} \sum_{i=1}^n (f(x_i) - y_i)^2,
\]
and let \(\mathcal{N}(\epsilon) = \mathcal{N}(\epsilon, \mathcal{F}, \norminf{\cdot})\)  
denote the \(\epsilon\)-covering number of \(\mathcal{F}\) with respect to the supremum norm.

Then, for any \(\delta \in (0, 1)\), with probability at least \(1 - \delta\),
\[
\normtwo[,\nu]{\hat{f} - f^*}^2 \le 
\frac{45Y^2}{n} \log\left( \frac{2\mathcal{N}(1/n)}{\delta} \right)
+ 40 \epsilon_{\mathrm{approx}}^2.
\]
\end{lemma}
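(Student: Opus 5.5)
We follow the standard least-squares analysis from the FQI literature (e.g., \citet{chen2019information}), adapted to an infinite class through a sup-norm cover. For any $f\in\mathcal{F}$, consider the excess loss
\[
\ell_f(x,y) = (f(x)-y)^2 - (f^*(x)-y)^2 .
\]
Because $f^*(x)=\mathbb{E}[y\mid x]$, its expectation is
\[
\mathbb{E}_\nu[\ell_f] = \normtwo[,\nu]{f-f^*}^2 .
\]
Since all quantities lie in $[0,Y]$, we have $|\ell_f|\le Y^2$. The variance satisfies
\[
\mathrm{Var}(\ell_f) \le \mathbb{E}\big[(f-f^*)^2(f+f^*-2y)^2\big] \le 4Y^2\,\mathbb{E}_\nu[\ell_f].
\]
This self-bounding variance property is what gives a fast $1/n$ rate instead of $1/\sqrt n$.

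\textbf{Step 1: uniform concentration over a cover.} Fix a $1/n$-cover $\mathcal{G}$ of $\mathcal{F}$ in $\norminf{\cdot}$ with $|\mathcal{G}|=\mathcal{N}(1/n)$. For each $g\in\mathcal{G}$ and each direction, apply Bernstein's inequality to the i.i.d.\ variables $\ell_g(x_i,y_i)$. Then take a union bound over $2\mathcal{N}(1/n)$ events, which produces the $\log(2\mathcal{N}(1/n)/\delta)$ term. With probability $1-\delta$, simultaneously for all $g\in\mathcal{G}$,
\[
\big|\mathbb{E}_\nu\ell_g - \widehat{\mathbb{E}}_n\ell_g\big| \le \sqrt{\frac{8Y^2\,\mathbb{E}_\nu\ell_g\,\iota}{n}} + \frac{2Y^2\iota}{3n},
\qquad \iota=\log(2\mathcal{N}(1/n)/\delta).
\]

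\textbf{Step 2: the comparator.} Let $\tilde f\in\mathcal{F}$ satisfy $\normtwo[,\nu]{\tilde f-f^*}\le\epsilon_{\mathrm{approx}}$; if the infimum is not attained, take a near-minimizer and pass to a limit. Let $\tilde g,\hat g\in\mathcal{G}$ be cover elements within $1/n$ in sup-norm of $\tilde f$ and $\hat f$, respectively. Because $\ell$ is $4Y$-Lipschitz in $f$ under the sup norm, replacing a function by its cover element changes both the empirical and the population excess loss by at most $4Y/n$. This discretization error is of lower order $Y/n\le Y^2\iota/n$ after adjusting constants, assuming $Y\ge1$ or absorbing the mismatch into the constant.

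\textbf{Step 3: chaining the inequalities.} By optimality of $\hat f$,
\[
\widehat{\mathbb{E}}_n\ell_{\hat f}\le\widehat{\mathbb{E}}_n\ell_{\tilde f}.
\]
Apply the upper-tail bound to $\tilde g$ and use $\sqrt{ab}\le (a+b)/2$ to get
\[
\widehat{\mathbb{E}}_n\ell_{\tilde g}\le 2\epsilon_{\mathrm{approx}}^2 + c\,Y^2\iota/n .
\]
Apply the lower-tail bound to $\hat g$. Writing $z=\mathbb{E}_\nu\ell_{\hat g}$, this gives
\[
z - \sqrt{8Y^2 z\iota/n} \le 2\epsilon_{\mathrm{approx}}^2 + c' Y^2\iota/n .
\]
Solving this quadratic inequality in $\sqrt z$ yields $z\le C_1\epsilon_{\mathrm{approx}}^2 + C_2 Y^2\iota/n$. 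Transferring back to $\hat f$ via Step 2 gives the bound on $\normtwo[,\nu]{\hat f-f^*}^2$.

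\textbf{Main obstacle: constant bookkeeping.} The crude AM-GM splits must be tracked so that the final constants come out at most $45$ and $40$. I would use AM-GM with a free parameter $\lambda$ (for example $\sqrt{ab}\le \lambda a + b/(4\lambda)$) in both Bernstein applications and tune it, accepting loose constants. The approximation term will pick up a factor from both the comparator's upper deviation and the quadratic solve, which plausibly lands near $40$. A secondary subtlety is that the $1/n$ discretization error is linear in $Y$ rather than in $Y^2$. I would absorb it into the $Y^2\iota/n$ term using $\iota\ge\log 2$ and the generous constant $45$, noting the implicit normalization if needed.
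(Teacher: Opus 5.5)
Your proposal is correct and follows essentially the same route as the paper's proof. Both arguments apply Bernstein's inequality to the excess loss with the self-bounding variance $4Y^2\normtwo[,\nu]{f-f^*}^2$, take a union bound over a sup-norm $1/n$-cover, compare the ERM against the best approximator, and solve the resulting quadratic inequality. The differences are cosmetic: you also discretize the comparator and use a single union bound over $2\mathcal{N}(1/n)$ events, whereas the paper applies Bernstein directly to the fixed best approximator with $\delta/2$. The linear-in-$Y$ discretization term you flag is absorbed in the paper through the same implicit normalization, namely its ``for large $n$'' step when choosing $\epsilon = 1/n$.
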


\bigskip

\begin{lemma}[Uniform Error Bounds for Iterative Vector Regression]\label{lemma:iterative_regression}
Let $\{ \mathbf{f}^j \}_{j=0}^M$, where $\mathbf{f}^j = (f_1^j, \dots, f_K^j) \in \mathcal{F}_{\mathrm{vec}} := \prod_{k=1}^K \mathcal{F}_k$, be a sequence generated iteratively starting from $\mathbf{f}^0$.

Let $\mathbf{T}: \mathcal{F}_{\mathrm{vec}} \to \mathcal{F}_{\mathrm{vec}}$ be an operator such that for each $j = 0, \dots, M-1$ and $k = 1, \dots, K$, the target function $(\mathbf{T}\mathbf{f}^j)_k \in [0, Y]$ almost surely with respect to distribution $\nu_k$.

For each $j = 0, \dots, M-1$ and $k = 1, \dots, K$, the function $f_k^{j+1} \in \mathcal{F}_k$ is obtained as the least-squares empirical risk minimizer over $\mathcal{F}_k$, targeting the conditional expectation defined by $(\mathbf{T}\mathbf{f}^j)_k$, using $n_k$ i.i.d. samples drawn from $\nu_k$. Assume that functions in $\mathcal{F}_k$ satisfy $f(x) \in [0, Y]$ for all $x$ in the support of $\nu_k$.

Assume that for each $j = 0, \dots, M-1$ and $k = 1, \dots, K$, the regression problem satisfies the conditions in Lemma~\ref{lemma:regression_bound}. Specifically, there exists an approximation error $\epsilon_{\mathrm{approx}, k} \ge 0$ such that:
\[
\inf_{f \in \mathcal{F}_k} \normtwo[,\nu_k]{(\mathbf{T}\mathbf{f}^j)_k - f} \le \epsilon_{\mathrm{approx}, k}.
\]

Let $\mathcal{N}_k(\epsilon) := \mathcal{N}(\epsilon, \mathcal{F}_k, \norminf{\cdot})$ denote the $\epsilon$-covering number of $\mathcal{F}_k$ with respect to the supremum norm. Then, for any $\delta \in (0, 1)$, with probability at least $1 - \delta$, the following bound holds uniformly over all $j = 0, \dots, M-1$ and $k = 1, \dots, K$:
\[
\normtwo[,\nu_k]{f_k^{j+1} - (\mathbf{T}\mathbf{f}^j)_k}^2 \le \frac{45Y^2}{n_k} \log\left( \frac{2MK \mathcal{N}_k(1/n_k)}{\delta} \right) + 40 \epsilon_{\mathrm{approx}, k}^2 := \epsilon_k^2.
\]
\end{lemma}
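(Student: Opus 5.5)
The plan is to reduce the statement to Lemma~\ref{lemma:regression_bound}, applied once for each fixed pair $(j,k)$ at confidence level $\delta/(MK)$, and then combine the $MK$ events with a union bound.

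\textbf{Step 1 (single regression).} Fix $j\in\{0,\dots,M-1\}$ and $k\in\{1,\dots,K\}$ and condition on $\mathbf{f}^j$. In the sample-splitting idealization implicit in the statement, the $n_k$ samples used at iteration $j+1$ are i.i.d.\ from $\nu_k$ and independent of $\mathbf{f}^j$. Conditionally on $\mathbf{f}^j$, the data $\{(x_i,y_i)\}$ then form an i.i.d.\ sample with $\mathbb{E}[y\mid x]=(\mathbf{T}\mathbf{f}^j)_k(x)$. Each $y_i$ is a reward plus a (possibly $\gamma_k$-discounted) value of $\mathbf{f}^j$, so it lies in $[0,Y]$. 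We also have $f\in[0,Y]$ on the support of $\nu_k$ for every $f\in\mathcal{F}_k$, and the assumed approximation bound holds. Lemma~\ref{lemma:regression_bound} therefore applies with $f^*=(\mathbf{T}\mathbf{f}^j)_k$, $\mathcal{F}=\mathcal{F}_k$, $n=n_k$, and confidence parameter $\delta'=\delta/(MK)$. It gives, with conditional probability at least $1-\delta/(MK)$,
\[
\normtwo[,\nu_k]{f_k^{j+1}-(\mathbf{T}\mathbf{f}^j)_k}^2\le \frac{45Y^2}{n_k}\log\!\left(\frac{2MK\,\mathcal{N}_k(1/n_k)}{\delta}\right)+40\,\epsilon_{\mathrm{approx},k}^2 .
\]
The conditional failure probability is at most $\delta/(MK)$ for every realization of $\mathbf{f}^j$, so integrating over $\mathbf{f}^j$ gives the same unconditional bound.

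\textbf{Step 2 (union bound).} Let $E_{j,k}$ be the failure event from Step 1. Then
\[
\mathbb{P}\Big(\bigcup_{j,k}E_{j,k}\Big)\le \sum_{j=0}^{M-1}\sum_{k=1}^K\frac{\delta}{MK}=\delta,
\]
which yields the uniform bound $\epsilon_k^2$ simultaneously for all $(j,k)$.

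\textbf{Main obstacle.} The hard step is justifying the conditional application in Step 1. The target depends on $\mathbf{f}^j$, which is itself data-dependent, so Lemma~\ref{lemma:regression_bound} requires the regression targets at iteration $j+1$ to be conditionally i.i.d.\ with the right conditional mean. I would handle this in one of two ways.
\begin{itemize}
\item Adopt the statement's phrasing ("using $n_k$ i.i.d.\ samples drawn from $\nu_k$" at each iteration), i.e.\ fresh samples per iteration, for which the argument above is exact.
\item If the same data are reused across iterations, make the bound uniform over the possible targets. This would mean covering $\{(\mathbf{T}\mathbf{g})_k:\mathbf{g}\in\mathcal{F}_{\mathrm{vec}}\}$ in sup-norm and using that $\mathbf{T}$ is $1$-Lipschitz in $\norminf{\cdot}$ (Proposition~\ref{prop:contraction}). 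That route would add covering numbers of the other stages inside the logarithm, so it cannot recover the stated constant exactly.
\end{itemize}
I would therefore present the first route as the proof.
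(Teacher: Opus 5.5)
Your proposal is correct and follows the paper's proof essentially exactly: apply Lemma~\ref{lemma:regression_bound} to each pair $(j,k)$ with failure probability $\delta/(MK)$, so the logarithm becomes $\log(2MK\mathcal{N}_k(1/n_k)/\delta)$, and then union-bound over the $MK$ pairs. Your explicit conditioning on $\mathbf{f}^j$ with fresh samples per iteration makes precise an independence point the paper passes over (Algorithm~\ref{alg:cfqi} in fact reuses $\mathcal{D}_k$). One small caution: take $y_i\in[0,Y]$ from the lemma's hypothesis that the conditions of Lemma~\ref{lemma:regression_bound} hold, because ``reward plus value'' alone only gives $y_i\in[0,R_{\max,k}+Y]$.
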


\bigskip

\begin{lemma}[Error Bound for Q-Function Estimation]\label{lemma:q_error_bound}
Let \(\beta_k \in \Delta(\mathcal{S}_k \times \mathcal{A}_k)\) be any admissible distribution induced by some policy for stage \(k\).Let \(\mathbf{Q}^{(m)} = (Q_1^{(m)}, \dots, Q_K^{(m)})\) denote the Q-functions estimated by \(\algname\) after \(m\) iterations for a given update set $\mathcal{U}$. Let $Q_k^*$ be the corresponding $\mathcal{U}$-constrained optimal Q-function for stage $k$.

Under Assumption~\ref{assumption:coverage}, the following uniform error bound holds with  probability at least \(1 - \delta\):
\[
\normtwo[,\beta_k]{Q_k^* - Q_k^{(m)}} \le \frac{\sqrt{C}H}{1 - \gamma_{\mathrm{cycle}}} \varepsilon + \gamma_{\mathrm{cycle}}^{\lfloor m/H \rfloor} Y,
\]
where \(H = \sum_{k=1}^K H_k\), \(\varepsilon = \max_k \epsilon_k\) with $\epsilon_k$ being the per-stage regression error bound derived from Lemma~\ref{lemma:iterative_regression}, and \(\gamma_{\mathrm{cycle}} = \prod_{k=1}^K \gamma_k\).
\end{lemma}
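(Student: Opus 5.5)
We will prove Lemma~\ref{lemma:q_error_bound} by the standard FQI error-propagation argument, adapted so that one contraction "unit" is a block of $H$ consecutive iterations instead of a single Bellman step. Write the per-iteration residual as $\boldsymbol{\xi}^{(j)} := \mathbf{Q}^{(j+1)} - \mathbf{T}_{\mathcal{U}}\mathbf{Q}^{(j)}$. By Lemma~\ref{lemma:iterative_regression}, applied with $\mathbf{T} = \mathbf{T}_{\mathcal{U}}$, with probability at least $1-\delta$ we have
\[
\normtwo[,\nu_k]{\xi^{(j)}_k} \le \epsilon_k \le \varepsilon
\]
simultaneously for all $j < m \le M$ and all $k$. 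The targets $y_k^i$ lie in $[0,Y]$, so the lemma applies, and the approximation term in $\epsilon_k$ is read as covering the targets $(\mathbf{T}_{\mathcal{U}}\mathbf{Q}^{(j)})_k$. We work on this event for the rest of the argument.

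\textbf{Step 1 (pointwise recursion).} We unroll
\[
Q^*_k - Q^{(m)}_k = (\mathbf{T}_{\mathcal{U}}\mathbf{Q}^* - \mathbf{T}_{\mathcal{U}}\mathbf{Q}^{(m-1)})_k - \xi^{(m-1)}_k .
\]
For $k\notin\mathcal{U}$ the operator is linear in $\mathbf{Q}$, given by the kernel $P_k$ composed with $\pi_k^\circ$ or with $\phi_k$ and $\pi^\circ_{[k+1]}$. For $k\in\mathcal{U}$ the max is sandwiched between the greedy policies of $\mathbf{Q}^*$ and of $\mathbf{Q}^{(m-1)}$, which gives
\[
P^{\pi^{(m-1)}}(\mathbf{Q}^*-\mathbf{Q}^{(m-1)}) \le \mathbf{T}_{\mathcal{U}}\mathbf{Q}^*-\mathbf{T}_{\mathcal{U}}\mathbf{Q}^{(m-1)} \le P^{\pi^*}(\mathbf{Q}^*-\mathbf{Q}^{(m-1)}).
\]
Here $P^\pi$ is the stage-coupled transition operator, and it carries the factor $\gamma_k$ on terminal transitions. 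Iterating this sandwich bounds $|Q^*_k-Q^{(m)}_k|$ by a sum over $j$ of products of these (sub)stochastic kernels applied to $|\boldsymbol{\xi}^{(j)}|$, plus a kernel product applied to $|\mathbf{Q}^*-\mathbf{Q}^{(0)}|\le Y$.

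\textbf{Step 2 (discount accounting).} We need to know how much discount is collected along each kernel product. Every stage terminates within $H_k$ steps, so any path of $H=\sum_k H_k$ consecutive kernel applications must complete a full cycle and therefore collects at least a factor $\gamma_{\mathrm{cycle}}$; this is the same fact that gives Proposition~\ref{prop:contraction}. We group the $m$ residual terms into blocks of length $H$. The total mass of the kernel products in block $\ell$ is then at most $\gamma_{\mathrm{cycle}}^{\ell}$, and each block contains at most $H$ terms.

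\textbf{Step 3 (change of measure).} We take $L_2(\beta_k)$ norms and apply Jensen to each normalized kernel product. The composition $\beta_k \cdot (\text{kernel product})$ is again an admissible distribution in $\mathcal{B}_{k'}$ for the stage $k'$ that is reached, because it is generated by a non-stationary policy that mixes $\pi^*$ and the iterates $\pi^{(j)}$. Assumption~\ref{assumption:coverage} then gives
\[
\normtwo[,\beta']{\xi^{(j)}_{k'}} \le \sqrt{C}\,\normtwo[,\nu_{k'}]{\xi^{(j)}_{k'}} \le \sqrt{C}\,\varepsilon .
\]
Summing over blocks yields
\[
\sum_{\ell\ge 0} H\gamma_{\mathrm{cycle}}^{\ell}\sqrt{C}\varepsilon = \frac{\sqrt{C}H}{1-\gamma_{\mathrm{cycle}}}\varepsilon .
\]
The initialization term carries at least $\lfloor m/H\rfloor$ full cycles of discount and is bounded by $Y$ in sup norm, so it contributes $\gamma_{\mathrm{cycle}}^{\lfloor m/H\rfloor}Y$. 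Adding the two pieces gives the stated bound.

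\textbf{Main obstacle.} The hard part is Step 2 combined with Step 3. The issue is that $\gamma_k=1$ is allowed for some stages, so individual steps can be undiscounted and the within-block sums cannot simply be geometric. The blocking by $H$ is what rescues this, but it must be justified pathwise: the stochastic stopping times $\tau$ mean that the stage reached after $j$ steps is random, so the residual at iteration $j$ is evaluated under a mixture over stages. I would handle this by tracking a joint (stage, state, action) distribution and applying the coverage bound componentwise, using $\max_k\epsilon_k=\varepsilon$ to absorb the mixture. I would also handle the $\min$/$\max$ sandwich by bounding the upper and lower sides separately, each with its own policy sequence, and then combining them into $|\cdot|$ through the absolute-value kernel.
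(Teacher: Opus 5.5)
Your skeleton matches the paper's. You unroll the residual recursion and use that any $H$ consecutive steps complete a cycle, so $\Gamma_t\le\gamma_{\mathrm{cycle}}^{\lfloor t/H\rfloor}$. You then change measure stage by stage with Assumption~\ref{assumption:coverage} and Jensen, mixing over the random stage $j_t$ and absorbing it with $\max_k\epsilon_k$, and sum $\sum_t\gamma_{\mathrm{cycle}}^{\lfloor t/H\rfloor}\le H/(1-\gamma_{\mathrm{cycle}})$.

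Where you differ is the treatment of the maximization. The paper does not use a two-sided sandwich. It uses $|\max_a f-\max_a g|\le\max_a|f-g|$, and $|\mathbb{E}_{\pi^\circ}(f-g)|\le\mathbb{E}_{\pi^\circ}|f-g|$ off $\mathcal{U}$. This gives the one-sided recursion
\[
|Q_k^{(m)}-Q_k^*|(s_0,a_0)\le \epsilon_{k,m}(s_0,a_0)+\mathbb{E}\bigl[\gamma_1'\,|Q^{(m-1)}_{j_1}-Q^*_{j_1}|(s_1,a_1)\mid s_0,a_0\bigr]
\]
along a single trajectory. On $\mathcal{U}$-stages its actions are the ``max-error'' choices $a_{t+1}=\arg\max_a|Q^{(m-1-t)}_{j_{t+1}}-Q^*_{j_{t+1}}|(s_{t+1},a)$. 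That rule is just a non-stationary policy, so Assumption~\ref{assumption:coverage} covers it and no recombination step is needed.

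Your Munos-style route through $\pi^*$ and the greedy iterates only involves natural policies, so it would survive a concentrability assumption restricted to those. However, the step you defer --- ``combining them into $|\cdot|$ through the absolute-value kernel'' --- is where it costs you. Write $A$ for the upper kernel sum (via $\pi^*$) and $B$ for the lower one (via the iterates). The usual combination $|Q^*_k-Q^{(m)}_k|\le A+B$ doubles both terms. You would get $2\sqrt{C}H\varepsilon/(1-\gamma_{\mathrm{cycle}})+2\gamma_{\mathrm{cycle}}^{\lfloor m/H\rfloor}Y$, not the stated bound.

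There are two ways to recover the stated constants:
\begin{itemize}
\item Use $|Q^*_k-Q^{(m)}_k|\le\max(A,B)$ and realize $\max(A,B)$ as a single kernel, choosing as a function of $(s_0,a_0)$ whether to follow $\pi^*$ or the iterate sequence. The resulting history-dependent policy has the same time-marginals as some Markov non-stationary policy, so its visitation laws are admissible. If instead you dominate the selected law by the sum of the two laws, you lose a factor $\sqrt2$ on the $\varepsilon$ term.
\item Adopt the max-error device above.
\end{itemize}

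Two smaller points:
\begin{itemize}
\item Whether a component of $\mathbf{T}_{\mathcal{U}}$ is linear depends on the stage of the next state, not on $k$. For $k\notin\mathcal{U}$ with $[k+1]\in\mathcal{U}$, the terminal branch still contains a max, although your stage-coupled $P^\pi$ sandwich does handle this correctly.
\item In Step~3, do not treat the discount-weighted, normalized kernel as an admissible law, since reweighting paths by $\Gamma_t$ tilts the path measure. Instead, bound $\Gamma_t\le\gamma_{\mathrm{cycle}}^{\lfloor t/H\rfloor}$ almost surely first and apply Jensen to the undiscounted path law, as the paper does.
\end{itemize}
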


\bigskip

\begin{lemma}[Performance Difference for Cyclic MDPs]\label{lemma:performance_difference}
Let \(\pi\) and \(\pi'\) be any two composite policies, and fix any stage \(k\) with an initial state distribution \(\mu_k \in \Delta(\mathcal{S}_k)\).  
Let \(\tau = (s_0, a_0, s_1, a_1, \dots)\) be a trajectory generated under policy \(\pi\), where at each time step \(t \ge 0\), the following hold:
\[
s_0 \sim \mu_k, \quad
a_t \sim \pi_{j_t}(\cdot \mid s_t), \quad
s_t' \sim P_{j_t}(\cdot \mid s_t, a_t), \quad
s_{t+1} = 
\begin{cases}
\phi_{j_t}(s_t') & \text{if } (s_t, a_t) \in \mathcal{T}_{j_t}, \\
s_t' &\text{if } (s_t, a_t) \notin \mathcal{T}_{j_t}.
\end{cases}
\]
Here, \(j_t\) denotes the stage index such that the state \(s_t\) belongs to stage \(j_t\), that is, \(s_t \in \mathcal{S}_{j_t}\).

Then the performance difference is given by:
\[
v_k^\pi(\mu_k) - v_k^{\pi'}(\mu_k) 
= \sum_{t=0}^\infty \gamma_{\mathrm{cycle}}^{\lfloor t/H \rfloor} 
\mathbb{E}_{s_0 \sim \mu_k} 
\mathbb{E}_{\tau \sim \Pr^\pi(\tau \mid s_0)} 
\left[ A_{j_t}^{\pi'}(s_t, a_t) \right],
\]
where \(A_{j_t}^{\pi'}(s_t, a_t) := Q_{j_t}^{\pi'}(s_t, a_t) - V_{j_t}^{\pi'}(s_t)\), and \(\Pr^\pi(\tau \mid s_0)\) denotes the distribution over trajectories induced by policy \(\pi\) starting from initial state \(s_0\).
\end{lemma}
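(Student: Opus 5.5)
The plan is to make the time index $t$ \emph{deterministic} with respect to cycle boundaries by padding, and then run the standard telescoping argument of the performance difference lemma. With that indexing the stated weight $\gamma_{\mathrm{cycle}}^{\lfloor t/H\rfloor}$ is exact rather than an upper bound.

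\textbf{Step 1 (padded timeline).} Since $\tau_m\le H_{[m+k-1]}$, I extend each stage-$j$ episode to exactly $H_j$ slots. After a terminal pair is reached, the remaining slots are filled with dummy absorbing states that carry a single admissible action, zero reward and zero discount effect. The stage change via $\phi_j$ is placed at the last slot of the block. In this padded process the $c$-th cycle starting from stage $k$ occupies exactly the slots $t\in\{cH,\dots,(c+1)H-1\}$, and the stage $j_t$ of slot $t$ is a deterministic function of $t$. On dummy slots $Q^{\pi'}=V^{\pi'}$, so $A^{\pi'}_{j_t}=0$ there. Hence padding changes neither side of the identity, and the expectation over $\Pr^\pi(\tau\mid s_0)$ is the same as in the original process.

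\textbf{Step 2 (units of value).} The cumulative discount in front of slot $t$ is $\Gamma_{k,m(t)}=\gamma_{\mathrm{cycle}}^{\lfloor t/H\rfloor}\rho_k(t)$. Here $\rho_k(t)=\prod_{i=0}^{j-1}\gamma_{[k+i]}$ is the deterministic within-cycle partial product, with $j$ the number of stage boundaries already crossed inside the current cycle.

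I would read $Q^{\pi'}_{j_t}$ and $V^{\pi'}_{j_t}$ as the stage-$j_t$ value functions expressed in the present-value units of the stage-$k$ start. Concretely, the Bellman operator \eqref{eq:bellman_operator_action} carries the factor $\gamma_j$ on the terminal branch, and I absorb the partial product $\rho_k(t)$ into the stage-$j_t$ functions appearing in the sum. Because $\rho_k$ is periodic in $t$ with period $H$, this rescaling is the same in every cycle. After it, the only residual weight is $\gamma_{\mathrm{cycle}}^{\lfloor t/H\rfloor}$.

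\textbf{Step 3 (telescoping).} Write
\[
v_k^\pi(\mu_k)-v_k^{\pi'}(\mu_k)=\mathbb{E}^\pi\Big[\sum_{t\ge0} w_t r_t\Big]-\mathbb{E}_{s_0\sim\mu_k}V^{\pi'}_k(s_0),
\]
where $w_t$ is the deterministic weight of slot $t$. Then add and subtract $\sum_{t\ge0}\big(w_{t+1}V^{\pi'}_{j_{t+1}}(s_{t+1})-w_tV^{\pi'}_{j_t}(s_t)\big)$. This telescoping series converges to $-V^{\pi'}_k(s_0)$ because $w_t\to0$ geometrically, at least once per $H$ slots, and $V^{\pi'}\le Y$.

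For each $t$, the term $w_t r_t+w_{t+1}V^{\pi'}_{j_{t+1}}(s_{t+1})$ has conditional expectation, given $(s_t,a_t)$, equal to $w_t Q^{\pi'}_{j_t}(s_t,a_t)$. This follows from the Bellman equation for $\pi'$, in both its within-stage branch and its terminal branch, with the factor $\gamma_{j_t}$ exactly matching $w_{t+1}/w_t$. Fubini, justified by boundedness and geometric decay, lets me exchange the sum and the expectation. Each summand is then $w_t\,\mathbb{E}[A^{\pi'}_{j_t}(s_t,a_t)]$, which gives the displayed formula with $w_t=\gamma_{\mathrm{cycle}}^{\lfloor t/H\rfloor}$.

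\textbf{Main obstacle.} The delicate point is Step 2. Without padding, the stage boundaries fall at random times, so the weight is trajectory-dependent. Without the units convention, the intra-cycle factor $\rho_k(t)\le1$ remains, and the identity holds only as $\le$ for nonnegative advantages. I would first verify carefully that padding makes the boundaries deterministic. I would then check that the rescaled $\pi'$-values still satisfy the Bellman recursion used in Step 3, so that the conditional-expectation step closes exactly.
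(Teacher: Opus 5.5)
Your Step 3 is the same telescoping argument the paper uses. Step 2, however, is a genuine gap, and it cannot be closed, because the equality as literally stated does not hold with the paper's definitions.

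In the statement, $A^{\pi'}_{j}=Q^{\pi'}_{j}-V^{\pi'}_{j}$, where $Q^{\pi'}_j$ is the paper's action-value started at stage $j$ with $\Gamma_{j,1}=1$. You are not free to multiply it by $\rho_k(t)$. Doing so only rewrites $\sum_t w_t\,\mathbb{E}[A^{\pi'}_{j_t}(s_t,a_t)]$, with $w_t$ the true cumulative discount, in new notation; it is not the displayed formula.

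Here is a concrete counterexample. Take $K=2$ and $H_1=H_2=1$, so every pair is terminal, with one state per stage. Let $\gamma_1=1/2$ and $\gamma_2=1$, so $H=2$ and $\gamma_{\mathrm{cycle}}=1/2$. Stage 1 has a single action with reward $0$. Stage 2 has two actions with rewards $1$ and $0$; $\pi_2$ picks the first and $\pi'_2$ the second. Then $v_1^{\pi}-v_1^{\pi'}=\sum_{c\ge0}2^{-(c+1)}=1$. On the other side, $A^{\pi'}_1\equiv0$ and $A^{\pi'}_2=1$ at every odd $t$, so the right-hand side is $\sum_{c\ge0}\gamma_{\mathrm{cycle}}^{c}=2$. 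Padding is trivial here. The discrepancy is exactly the factor $\rho_1(t)=\gamma_1$ (at odd $t$) that your convention absorbs.

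Step 1 has a second problem: padding does change the right-hand side. The weight $\gamma_{\mathrm{cycle}}^{\lfloor t/H\rfloor}$ is evaluated at the unpadded time of the trajectory in the statement. When a stage ends before $H_j$ steps, the unpadded index of the $c$-th cycle lags behind $cH$. For example, with $H_1=2$, $H_2=1$ and stage 1 always ending after one step, the stage-2 step of cycle $c$ sits at $t=2c+1$ with weight $\gamma_{\mathrm{cycle}}^{\lfloor (2c+1)/3\rfloor}$. After padding it gets weight $\gamma_{\mathrm{cycle}}^{c}$ instead.

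What your telescoping does correctly prove, run directly on the unpadded chain, is
$v_k^\pi(\mu_k)-v_k^{\pi'}(\mu_k)=\sum_{t\ge0}\mathbb{E}[\Gamma_t A^{\pi'}_{j_t}(s_t,a_t)]$, with the random discount $\Gamma_t=\prod_{i=1}^{t}\gamma'_i$. This is also all the paper's proof establishes. It then invokes $\Gamma_{t+H}\le\gamma_{\mathrm{cycle}}\Gamma_t$, which gives only $\Gamma_t\le\gamma_{\mathrm{cycle}}^{\lfloor t/H\rfloor}$, not equality.

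Since $\Gamma_t$ is determined before $a_t$ is drawn, you can pass to the weight $\gamma_{\mathrm{cycle}}^{\lfloor t/H\rfloor}$ as a one-sided inequality whenever $\mathbb{E}_{a\sim\pi_{j_t}}[A^{\pi'}_{j_t}(s_t,a)]$ has a fixed sign. That is exactly the situation in Theorem~\ref{thm:main_convergence}, where $\pi'=\pi^*_{\mathcal U}$: this conditional advantage is nonpositive on stages in $\mathcal U$ and zero on the others. That inequality version is the statement you should aim to prove.
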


\section{Proofs of Propositions, Corollaries and Lemmas}

\subsection{Proof of Proposition \ref{prop:contraction}}

\begin{proof}
For simplicity, we denote the operator \(\mathbf{T}_{\mathcal{U}}\) by \(\mathbf{T}\) throughout.

\textbf{Step 1: Non-Expansive Property}

Define the elementwise error function \(\mathbf{h} \coloneqq |\mathbf{f} - \mathbf{g}|\), where \(\mathbf{f}, \mathbf{g} \in \prod_{k=1}^K L_\infty(\mathcal{S}_k \times \mathcal{A}_k)\), so \(\mathbf{h} \geq 0\) elementwise. Introduce the auxiliary operator \(\mathbf{P}\) via:
\[
\mathbf{P}\mathbf{f} \coloneqq \mathbf{T}|\mathbf{f}| - \mathbf{T}\mathbf{0}.
\]
This operator is monotone by its definition: if \(|\mathbf{f}| \leq |\mathbf{g}|\) elementwise, then \(\mathbf{P}\mathbf{f} \leq \mathbf{P}\mathbf{g}\).

And also from the definitions, we have:
\[
|\mathbf{T}\mathbf{f} - \mathbf{T}\mathbf{g}| \leq \mathbf{P}\mathbf{h} \quad \text{elementwise}.
\]
By definition, \(|\mathbf{P}\mathbf{h}| \leq \mathbf{h}\) elementwise, so:
\[
\|\mathbf{P}\mathbf{h}\|_\infty \leq \|\mathbf{h}\|_\infty.
\]
Thus:
\[
\|\mathbf{T}\mathbf{f} - \mathbf{T}\mathbf{g}\|_\infty \leq \|\mathbf{P}\mathbf{h}\|_\infty \leq \|\mathbf{h}\|_\infty = \|\mathbf{f} - \mathbf{g}\|_\infty.
\]
This establishes that \(\mathbf{T}\) is non-expansive:
\[
\|\mathbf{T}\mathbf{f} - \mathbf{T}\mathbf{g}\|_\infty \leq \|\mathbf{f} - \mathbf{g}\|_\infty.
\]

\medskip

\textbf{Step 2: $H$-Step Contraction Property}

We aim to show:
\[
\|\mathbf{T}^H \mathbf{f} - \mathbf{T}^H \mathbf{g}\|_\infty \leq \gamma_{\mathrm{cycle}} \|\mathbf{f} - \mathbf{g}\|_\infty,
\]
where \(H = \sum_{k=1}^K H_k\) and \(\gamma_{\mathrm{cycle}} = \prod_{k=1}^K \gamma_k\).

By induction, we prove:
\[
|\mathbf{T}^n \mathbf{f} - \mathbf{T}^n \mathbf{g}| \leq \mathbf{P}^n \mathbf{h} \quad \text{(elementwise for all } n \geq 1\text{)}.
\]
\begin{itemize}
\item \textbf{Base case (\(n=1\))}: From Step 1, \(|\mathbf{T}\mathbf{f} - \mathbf{T}\mathbf{g}| \leq \mathbf{P}\mathbf{h}\).
\item \textbf{Inductive step}: Assume the inequality holds for some \(n \geq 1\): \(|\mathbf{T}^n \mathbf{f} - \mathbf{T}^n \mathbf{g}| \leq \mathbf{P}^n \mathbf{h}\). Then:
\[
\begin{aligned}
|\mathbf{T}^{n+1}\mathbf{f} - \mathbf{T}^{n+1}\mathbf{g}| &= |\mathbf{T}(\mathbf{T}^n\mathbf{f}) - \mathbf{T}(\mathbf{T}^n\mathbf{g})| \\
&\leq \mathbf{P}\bigl(|\mathbf{T}^n\mathbf{f} - \mathbf{T}^n\mathbf{g}|\bigr) \quad (\text{using } |\mathbf{T}\mathbf{f}' - \mathbf{T}\mathbf{g}'| \leq \mathbf{P}(|\mathbf{f}' - \mathbf{g}'|)) \\
&\leq \mathbf{P}\bigl(\mathbf{P}^n \mathbf{h}\bigr) \quad (\text{by induction hypothesis and monotonicity}) \\
&= \mathbf{P}^{n+1} \mathbf{h}.
\end{aligned}
\]
\end{itemize}
Thus, by induction, the inequality holds for all \(n \geq 1\), implying:
\[
\|\mathbf{T}^n\mathbf{f} - \mathbf{T}^n\mathbf{g}\|_\infty \leq \|\mathbf{P}^n \mathbf{h}\|_\infty.
\]

We now bound \(\|\mathbf{P}^H \mathbf{h}\|_\infty\). Fix a stage \(k \in \{1, \dots, K\}\) and an initial state-action pair \((s_1, a_1) \in \mathcal{S}_k \times \mathcal{A}_k \). Construct a stochastic state-action sequence:
\begin{itemize}
\item Start with \((s_1, a_1)\).
\item For each step \(i \geq 1\), sample \(s_{i}' \sim P_k(\cdot | s_i, a_i)\).
\item If \((s_i, a_i) \in \mathcal{T}_k\), terminate the sequence.
\item If \((s_i, a_i) \notin \mathcal{T}_k\), set \(s_{i+1} = s_i'\) and select \(a_{i+1}\):
\begin{itemize}
\item If \(k \in \mathcal{U}\), choose
\(
a_{i+1} = \arg\max_{a \in \mathcal{A}_k} (\mathbf{P}^{H_k - i} h_k)(s_{i+1}, a).
\)
\item If \(k \notin \mathcal{U}\), sample \(a_{i+1} \sim \pi_k^{\circ}(\cdot \mid s_{i+1})\).
\end{itemize}
\item Define the stopping time \(\tau = \min\{n \geq 1 : (s_n, a_n) \in \mathcal{T}_k\}\).
\end{itemize}
The finite-horizon assumption ensures \(\tau \leq H_k\) almost surely.

For \(k \in \mathcal{U}\), expand \((\mathbf{P}^{H_k} h)_k(s_1, a_1)\):
\[
\begin{aligned}
(\mathbf{P}^{H_k} h)_k(s_1, a_1) &\leq \mathbb{E}_{s_1' \sim P_k(\cdot \mid s_1, a_1)} \Biggl[ \max_{a_2 \in \mathcal{A}_k}(\mathbf{P}^{H_k-1} h_k)(s_2, a_2) \, \mathbb{I}((s_1, a_1) \notin \mathcal{T}_k) \\
&\quad + \gamma_k \max_{a' \in \mathcal{A}_{[k+1]}} (\mathbf{P}^{H_k-1} h_{[k+1]})(\phi_k(s_1'), a') \, \mathbb{I}((s_1, a_1) \in \mathcal{T}_k) \Biggr].
\end{aligned}
\]
Consider the two cases:
\begin{itemize}
\item \textbf{Terminal case (\((s_1, a_1) \in \mathcal{T}_k\), \(\tau = 1\))}:

The first term is zero, and the second term is:
\begin{multline*}
\mathbb{E}_{s_1'} \bigl[ \gamma_k \max_{a_2 \in \action_{[k+1]}} (\mathbf{P}^{H_k-1} h_{[k+1]})(\phi_k(s_1'), a_2) \bigr] \mathbb{I}(\tau=1) \\
\leq \gamma_k \|\mathbf{P}^{H_k-1} h_{[k+1]}\|_\infty \mathbb{P}(\tau = 1) \leq \gamma_k \|h_{[k+1]}\|_\infty \mathbb{P}(\tau = 1),    
\end{multline*}
since \(\|\mathbf{P}^n \mathbf{h}\|_\infty \leq \|\mathbf{h}\|_\infty\).
\item \textbf{Non-terminal case (\((s_1, a_1) \notin \mathcal{T}_k\), \(\tau > 1\))}:

The second term is zero, and the first term is:
\[
\mathbb{E}_{s_2 \sim P_k(\cdot \mid s_1, a_1)} \left[ \max_{a_2 \in \mathcal{A}_k} (\mathbf{P}^{H_k-1} h_k)(s_2, a_2)  \mathbb{I}(\tau > 1)\right] .
\]
Recursively apply the bound, conditioning on \(\tau = 2\) or \(\tau > 2\):
\[
\begin{aligned}
&\mathbb{E}_{s_2 \sim P_k( \cdot \mid s_1, a_1)}\bigl[\max_{a_2 \in \action_{[k+1]}} (\mathbf{P}^{H_k-1} h_k)(s_2, a_2) \mathbb{I}(\tau > 1) \bigr]\\
&\quad \leq \gamma_k \|h_{[k+1]}\|_\infty \mathbb{P}(\tau = 2) + \mathbb{E}\bigl[\max_{a_3} (\mathbf{P}^{H_k-2} h_k)(s_3, a_3)  \mathbb{I}(\tau > 2) \bigr].
\end{aligned}
\]
\end{itemize}

For \(k \notin \mathcal{U}\), replace \(\max_{a}\) with \(\mathbb{E}_{a \sim \pi_k^{\circ}}\), yielding the same bound structure.

Summing over all termination times \(n=1, \dots, H_k\), since \(\sum_{n=1}^{H_k} \mathbb{P}(\tau = n) = 1\) we have:
\[
(\mathbf{P}^{H_k} h)_k(s_1, a_1) \leq \sum_{n=1}^{H_k} \gamma_k \|h_{[k+1]}\|_\infty \mathbb{P}(\tau = n) = \gamma_k \|h_{[k+1]}\|_\infty.
\]
Taking the supremum:
\[
\|(\mathbf{P}^{H_k} h)_k\|_\infty \leq \gamma_k \|h_{[k+1]}\|_\infty.
\]

Apply this bound across stages \(k=1, \dots, K\) for a full cycle (\(H = \sum H_k\)):
\[
\begin{aligned}
\| (\mathbf{P}^H \mathbf{h})_1 \|_\infty &= \| (\mathbf{P}^{H_K} \circ \dots \circ \mathbf{P}^{H_1} \mathbf{h})_1 \|_\infty \\
&\leq \gamma_1 \|(\mathbf{P}^{H_K} \circ \dots \circ \mathbf{P}^{H_2} \mathbf{h})_2\|_\infty \\
&\leq \gamma_1 \gamma_2 \|(\mathbf{P}^{H_K} \circ \dots \circ \mathbf{P}^{H_3} \mathbf{h})_3\|_\infty \\
&\quad \vdots \\
&\leq \left( \prod_{j=1}^K \gamma_j \right) \|\mathbf{h}_{1}\|_\infty = \gamma_{\mathrm{cycle}} \|\mathbf{h}_1\|_\infty.
\end{aligned}
\]
Since the starting stage is arbitrary, we have:
\[
\|\mathbf{P}^H \mathbf{h}\|_\infty \leq \gamma_{\mathrm{cycle}} \|\mathbf{h}\|_\infty.
\]
Thus:
\[
\|\mathbf{T}^H \mathbf{f} - \mathbf{T}^H \mathbf{g}\|_\infty \leq \|\mathbf{P}^H \mathbf{h}\|_\infty \leq \gamma_{\mathrm{cycle}} \|\mathbf{f} - \mathbf{g}\|_\infty.
\]

\medskip

\textbf{Step 3: Existence and Uniqueness of the Fixed Point}

The space
\[
\mathcal{F} = \prod_{k=1}^K L_\infty(\mathcal{S}_k \times \mathcal{A}_k)
\]
is a complete Banach space under the sup-norm.

Let \(\mathbf{f}_n = \mathbf{T}^n \mathbf{f}_0\) for some arbitrary \(\mathbf{f}_0 \in \mathcal{F}\), and define the subsequence \(\mathbf{f}_{kH} = \mathbf{T}^{kH} \mathbf{f}_0\). Since \(\mathbf{T}^H\) is a contraction with constant \(\gamma_{\mathrm{cycle}} < 1\), we have
\[
\|\mathbf{f}_{(k+1)H} - \mathbf{f}_{kH}\|_\infty \le \gamma_{\mathrm{cycle}}^k \|\mathbf{T}^H \mathbf{f}_0 - \mathbf{f}_0\|_\infty,
\]
so \(\{\mathbf{f}_{kH}\}\) is Cauchy and converges to some \(\mathbf{f}^* \in \mathcal{F}\).

To show the full sequence \(\{\mathbf{f}_n\}\) converges to the same limit, we bound the difference between \(\mathbf{f}_n\) and \(\mathbf{f}_{kH}\), where \(n = kH + r\) for some \(0 \le r < H\). Using the non-expansiveness of \(\mathbf{T}\), we have:

\begin{align*}
\|\mathbf{f}_n - \mathbf{f}_{kH}\|_\infty
&= \|\mathbf{T}^{kH + r} \mathbf{f}_0 - \mathbf{T}^{kH} \mathbf{f}_0\|_\infty
\\&= \|\mathbf{T}^{kH}(\mathbf{T}^r \mathbf{f}_0) - \mathbf{T}^{kH}\mathbf{f}_0)\|_\infty
\\&\le \gamma_{\mathrm{cycle}}^k \|\mathbf{T}^r \mathbf{f}_0 - \mathbf{f}_0\|_\infty.
\end{align*}

Taking the maximum over all \(0 \le r < H\), this gives:
\[
\|\mathbf{f}_n - \mathbf{f}_{kH}\|_\infty \le \gamma_{\mathrm{cycle}}^k \cdot \max_{0 \le r < H} \|\mathbf{T}^r \mathbf{f}_0 - \mathbf{f}_0\|_\infty.
\]

Similarly, for any \(m = \ell H + s\),
\[
\|\mathbf{f}_m - \mathbf{f}_{\ell H}\|_\infty \le \gamma_{\mathrm{cycle}}^\ell \cdot \max_{0 \le r < H} \|\mathbf{T}^r \mathbf{f}_0 - \mathbf{f}_0\|_\infty.
\]

Thus, for \(n = kH + r\) and \(m = \ell H + s\), we have:
\[
\|\mathbf{f}_n - \mathbf{f}_m\|_\infty \le \|\mathbf{f}_{kH} - \mathbf{f}_{\ell H}\|_\infty + \|\mathbf{f}_n - \mathbf{f}_{kH}\|_\infty + \|\mathbf{f}_m - \mathbf{f}_{\ell H}\|_\infty.
\]

As \(k,\ell \to \infty\), each term on the right vanishes, so \(\{\mathbf{f}_n\}\) is Cauchy and converges to \(\mathbf{f}^*\).

To see that \(\mathbf{f}^*\) is a fixed point:
\[
\mathbf{T} \mathbf{f}^* = \lim_{n \to \infty} \mathbf{T} \mathbf{f}_n = \lim_{n \to \infty} \mathbf{f}_{n+1} = \mathbf{f}^*,
\]
by continuity of \(\mathbf{T}\) implied by non-expansiveness.

For uniqueness, suppose another fixed point \(\mathbf{f}^\dagger \neq \mathbf{f}^*\) exists. Then
\[
\|\mathbf{f}^* - \mathbf{f}^\dagger\|_\infty = \|\mathbf{T}^H \mathbf{f}^* - \mathbf{T}^H \mathbf{f}^\dagger\|_\infty \le \gamma_{\mathrm{cycle}} \|\mathbf{f}^* - \mathbf{f}^\dagger\|_\infty,
\]
which implies \(\mathbf{f}^* = \mathbf{f}^\dagger\), a contradiction.

Hence, \(\mathbf{f}^*\) is the unique fixed point. This argument holds for any update set \(\mathcal{U} \subseteq \{1, \dots, K\}\).

\end{proof}

\bigskip

\subsection{Proof of Corollary \ref{cor:upper_bound_comparison}}

\begin{proof}
The first claim for $\algname$ follows directly from Theorem~\ref{thm:finite_sample_rate}.
By specifying the smoothness parameters $p_k = p, q_k = q, s_k = s$ and the approximation exponent $\alpha_k = \rho s / (d + d_k)$ for all $k \in \{1, \dots, K\}$, the conditions of Theorem~\ref{thm:finite_sample_rate} are satisfied. The global error is dominated by the term with the largest exponent, which corresponds to the maximum dimension $d_{\max} = d + \max_k d_k$.

For the second claim concerning the flattened baseline, we treat the problem as a single-stage MDP equivalent defined on the joint state space.
In this setting, the effective input dimension is the cumulative sum:
\[
d_{\text{total}} = d + \sum_{k=1}^K d_k.
\]
Applying Theorem~\ref{thm:finite_sample_rate} to this joint domain with the pooled sample size $N = Kn$ yields the rate $\widetilde{\mathcal{O}}((Kn)^{-\frac{\rho s}{2\rho s + d_{\text{total}}}})$. Absorbing the constant $K$ into the asymptotic notation yields the stated upper bound.
\end{proof}

\subsection{Proof of Corollary \ref{cor:flattened_lower_bound}}

\begin{proof}
We analyze the worst-case estimation error of the specific flattened estimator $\hat{Q}$ produced by the baseline algorithm. Let $\mathcal{B} = B^s_{p,q}(\mathbb{R}^{d_{\text{total}}})$ denote the target Besov space and $\mathcal{F}_D$ denote the neural network model class with capacity $D$. The quantity of interest is the worst-case risk:
\[
\mathfrak{R}_{\text{Q}}(\hat{Q}) := \sup_{Q^* \in \mathcal{B}} \mathbb{E}_{\mathcal{D}_N} \left[ \|\hat{Q} - Q^*\|_{L^2(\bar{\nu})}^2 \right].
\]

\textit{Step 1: Construction of the Hard Instance.}
To establish a rigorous lower bound, we construct an MDP instance strictly respecting the bounded reward assumption while maintaining statistical hardness. We set $\gamma_k = 0$ for all stages, implying $Q^*(s,a) = r(s,a)$.
To ensure the reward lies in $[0, R_{\max}]$, we assume the target function $Q^*$ is centered at $R_{\max}/2$. The observed reward is generated as:
\[
R_1 = Q^*(s,a) + \xi, \quad \xi \sim \mathcal{D}(0, \sigma^2), \quad \xi \in \left[-\frac{R_{\max}}{2}, \frac{R_{\max}}{2}\right].
\]
Here, $\xi$ follows a bounded distribution (e.g., a Truncated Gaussian) with mean zero and a fixed variance parameter $\sigma^2 > 0$. Since the noise is bounded, it naturally satisfies the sub-Gaussian property with parameter $\sigma^2$. This setup is equivalent to regressing $Q^*$ from a data set $\mathcal{D}_N$ of size $N=Kn$.

Since $\gamma_k=0$, policy optimization plays no role, and the problem reduces
to pure nonparametric regression; thus any lower bound here applies a fortiori
to the flattened baseline in the full RL setting.

\textit{Step 2: Fundamental Limits.}
We define two intrinsic properties of the model class $\mathcal{F}_D$ that serve as unavoidable lower bounds.
First, the \textit{Approximation Limit} $\mathbb{B}^2(\mathcal{F}_D)$ represents the worst-case geometric distance from the target space $\mathcal{B}$ to the model manifold $\mathcal{F}_D$:
\[
\mathbb{B}^2(\mathcal{F}_D) := \sup_{Q^* \in \mathcal{B}} \inf_{f \in \mathcal{F}_D} \|f - Q^*\|_{L^2(\bar{\nu})}^2.
\]
Second, the \textit{Statistical Limit} $\mathbb{V}_{\min}(\mathcal{F}_D)$ represents the minimum risk over the realizable subclass restricted to the subset $\mathcal{G} := \mathcal{B} \cap \mathcal{F}_D$ of functions perfectly representable by the model:
\[
\mathbb{V}_{\min}(\mathcal{F}_D) := \inf_{\Psi} \sup_{Q^* \in \mathcal{G}} \mathbb{E}_{\mathcal{D}_N} \left[ \|\Psi - Q^*\|_{L^2(\bar{\nu})}^2 \right],
\]
where the infimum is taken over all possible measurable estimators $\Psi$.

\textit{Step 3: Structural Decomposition of Risk.}
We strictly prove that the risk of our specific estimator $\hat{Q}$ is lower-bounded by the maximum of these limits.
For the geometric barrier, since $\hat{Q} \in \mathcal{F}_D$, the error for any target $Q^*$ is bounded below by the distance to the model class. Taking the supremum over $Q^*$ yields $\mathfrak{R}_{\text{Q}}(\hat{Q}) \ge \mathbb{B}^2(\mathcal{F}_D)$.

For the statistical barrier, we use the subset inclusion $\mathcal{G} \subset \mathcal{B}$. The worst-case risk over the full space dominates the risk over the subset. Moreover, since $\hat{Q}$ is an empirical risk minimizer constrained to $\mathcal{F}_D$, it is subject to the class's inherent statistical limits and cannot outperform the theoretically optimal minimax estimator $\Psi$ on this subset:
\[
\mathfrak{R}_{\text{Q}}(\hat{Q}) \ge \sup_{Q^* \in \mathcal{G}} \mathbb{E} \|\hat{Q} - Q^*\|^2 \ge \inf_{\Psi} \sup_{Q^* \in \mathcal{G}} \mathbb{E} \|\Psi - Q^*\|^2 = \mathbb{V}_{\min}(\mathcal{F}_D).
\]
Combining these, we obtain the decomposition:
\[
\mathfrak{R}_{\text{Q}}(\hat{Q}) \ge \max\left( \mathbb{B}^2(\mathcal{F}_D), \mathbb{V}_{\min}(\mathcal{F}_D) \right).
\]

\textit{Step 4: Quantifying the Approximation Limit.}
Condition \ref{cond:L2} (Approximation Limit) implies that the finite capacity $D$ imposes a hard limit on accuracy. The metric entropy of the Besov space ensures that the worst-case approximation error scales polynomially with $D$:
\[
\mathbb{B}^2(\mathcal{F}_D) \ge c_{\text{approx}} \cdot D^{-\frac{2\rho s}{d_{\text{total}}}}.
\]

\textit{Step 5: Quantifying the Statistical Limit.}
We analyze $\mathbb{V}_{\min}(\mathcal{F}_D)$ using Fano's method. This involves relating the packing number of the model class to the information capacity of the channel.

\textbf{1. Local Packing Construction:}
Using Condition \ref{cond:L1} (Metric Entropy), we focus on a \textit{locally rich subset} of $\mathcal{G}$ to construct a maximal $\delta$-packing set $\mathcal{M} = \{Q_1, \dots, Q_M\} \subset \mathcal{G}$. This set satisfies:
\[
\log M \ge c_{\text{ent}} \cdot D \quad \text{and} \quad \|Q_i - Q_j\|_{L^2(\bar{\nu})} \ge \delta \quad \forall i \neq j.
\]

\textbf{2. Fano's Inequality Bound:}
Let $J$ be a uniform random variable on $\{1, \dots, M\}$. Identifying the true function index $J$ from data $\mathcal{D}_N$ is a lower bound for the estimation error. Fano's inequality states:
\[
\mathbb{V}_{\min}(\mathcal{F}_D) \ge \frac{\delta^2}{2} \left( 1 - \frac{I(J; \mathcal{D}_N) + \log 2}{\log M} \right).
\]

\textbf{3. Mutual Information and Noise Structure.}
We upper-bound the mutual information $I(J; \mathcal{D}_N)$ induced by the regression model.
Conditioned on $J=j$, the data $\mathcal{D}_N = \{(S_i,A_i,R_i)\}_{i=1}^N$ are generated as:
\[
R_i = Q_j(S_i,A_i) + \xi_i,
\qquad
\xi_i \ \text{i.i.d., mean zero, sub-Gaussian with parameter } \sigma^2.
\]
Since the samples are independent, the joint likelihood factorizes as:
\[
P_{Q_j}(\mathcal{D}_N)
=
\prod_{i=1}^N P_{Q_j}(R_i \mid S_i,A_i).
\]
For sub-Gaussian noise, the KL divergence between two such regression models admits the standard quadratic upper bound:
\[
D_{\mathrm{KL}}(P_{Q_i} \,\|\, P_{Q_j})
\;\le\;
\frac{N}{2\sigma^2}
\mathbb{E}_{(S,A)\sim\bar{\nu}}
\!\left[
\bigl(Q_i(S,A)-Q_j(S,A)\bigr)^2
\right]
=
\frac{N}{2\sigma^2}
\|Q_i-Q_j\|_{L^2(\bar{\nu})}^2 .
\]
Using the standard mutual information bound under a uniform prior over the packing set $\mathcal{M}=\{Q_1,\dots,Q_M\}$:
\[
I(J;\mathcal{D}_N)
\le
\frac{1}{M^2}
\sum_{i,j}
D_{\mathrm{KL}}(P_{Q_i}\,\|\,P_{Q_j}),
\]
we obtain:
\[
I(J;\mathcal{D}_N)
\le
\frac{N}{2\sigma^2}
\sup_{i\neq j}
\|Q_i-Q_j\|_{L^2(\bar{\nu})}^2 .
\]
Restricting the packing to a local neighborhood of radius $\delta$, such that $\|Q_i-Q_j\|_{L^2(\bar{\nu})}^2 \le c\,\delta^2$ for all $i\neq j$, yields:
\[
I(J;\mathcal{D}_N)
\le
\frac{c N \delta^2}{2\sigma^2}.
\]

\textbf{4. Critical Resolution.}
To ensure a nontrivial Fano lower bound, we choose $\delta$ so that the mutual information is bounded by a constant fraction of the metric entropy:
\[
\frac{c N \delta^2}{2\sigma^2}
\le
\frac{1}{2}\log M.
\]
Invoking Condition \ref{cond:L1}, which guarantees $\log M \asymp D$, we obtain the critical resolution:
\[
\delta^2 \asymp \frac{\sigma^2 D}{N}.
\]
Substituting this choice into Fano’s inequality yields the statistical lower bound:
\[
\mathbb{V}_{\min}(\mathcal{F}_D)
\ge
c_{\text{stat}}\frac{D}{N}
=
c_{\text{stat}}\frac{D}{K n}.
\]

\textit{Step 6: Optimal Rate Derivation.}
Substituting the bounds from Steps 4 and 5 into the decomposition:
\[
\mathfrak{R}_{\text{Q}}(\hat{Q}) \gtrsim \max \left( D^{-\frac{2\rho s}{d_{\text{total}}}}, \quad \frac{D}{Kn} \right).
\]
The optimal lower bound is achieved when the bias and variance are balanced. Solving for the optimal capacity $D^* \asymp (Kn)^{\frac{d_{\text{total}}}{2\rho s + d_{\text{total}}}}$. Treating the number of stages $K$ as a fixed constant and substituting $D^*$ back yields the final rate:
\[
\mathfrak{R}_{\text{Q}}(\hat{Q}) = \Omega \left( n^{-\frac{\rho s}{2\rho s + d_{\text{total}}}} \right).
\]
\end{proof}

\subsection{Proof of Lemma \ref{lemma:regression_bound}}
\begin{proof}
Let \( f^*(x) = \mathbb{E}[y \mid x] \) denote the true regression function. For any function \( f \in \mathcal{F} \), define the random variable \( e_i^f \) as the difference between the squared error of \( f \) and that of \( f^* \):
\[
e_i^f = (f(x_i) - y_i)^2 - (f^*(x_i) - y_i)^2,
\]
where the pair \( (x_i, y_i) \) is drawn independently from the distribution \( \nu \).

\medskip

\textbf{1. First Moment Calculation.}

We compute the expectation of \( e_i^f \). Using the law of total expectation, \( \mathbb{E}_{\nu}[\cdot] = \mathbb{E}_{x_i \sim \nu}[\mathbb{E}[\cdot \mid x_i]] \), we obtain:
\begin{align*}
\mathbb{E}_{\nu}[e_i^f] &= \mathbb{E}_{\nu} \left[ \mathbb{E}[ (f(x_i) - y_i)^2 - (f^*(x_i) - y_i)^2 \mid x_i] \right] \\
&= \mathbb{E}_{\nu} \left[ \mathbb{E}[ f(x_i)^2 - 2f(x_i)y_i + y_i^2 - (f^*(x_i)^2 - 2f^*(x_i)y_i + y_i^2) \mid x_i] \right] \\
&= \mathbb{E}_{\nu} \left[ f(x_i)^2 - 2f(x_i)\mathbb{E}[y_i \mid x_i] + \mathbb{E}[y_i^2 \mid x_i] - f^*(x_i)^2 + 2f^*(x_i)\mathbb{E}[y_i \mid x_i] - \mathbb{E}[y_i^2 \mid x_i] \right] \\
&= \mathbb{E}_{\nu} \left[ f(x_i)^2 - 2f(x_i)f^*(x_i) - f^*(x_i)^2 + 2f^*(x_i)f^*(x_i) \right] \quad \text{(since } \mathbb{E}[y_i \mid x_i] = f^*(x_i)\text{)} \\
&= \mathbb{E}_{\nu} \left[ (f(x_i) - f^*(x_i))^2 \right] \\
&= \normtwo[,\nu]{f - f^*}^2.
\end{align*}
Thus, the expected value of \( e_i^f \) is the squared \( L_2(\nu) \) distance between \( f \) and \( f^* \).

\medskip

\textbf{2. Second Moment and Range Bounding.}

Next, we bound the second moment of \( e_i^f \):
\begin{align*}
\mathbb{E}_{\nu}[(e_i^f)^2] &= \mathbb{E}_{\nu} \left[ \mathbb{E}[ ((f(x_i) - y_i)^2 - (f^*(x_i) - y_i)^2)^2 \mid x_i] \right] \\
&= \mathbb{E}_{\nu} \left[ \mathbb{E}[ ((f(x_i) - f^*(x_i))(f(x_i) + f^*(x_i) - 2y_i))^2 \mid x_i] \right] \\
&= \mathbb{E}_{\nu} \left[ (f(x_i) - f^*(x_i))^2 \mathbb{E}[ (f(x_i) + f^*(x_i) - 2y_i)^2 \mid x_i] \right].
\end{align*}
Given the boundedness assumptions in the lemma by
\[|f(x_i)| \leq Y, \;\; |f^*(x_i)| \leq Y  \;\; \text{and} \;\; |y_i| \leq Y, \]
we have
\[
\mathbb{E}[ (f(x_i) + f^*(x_i) - 2y_i)^2 \mid x_i] \leq 4Y^2.
\]
Thus:
\[
\mathbb{E}_{\nu}[(e_i^f)^2] \leq 4Y^2 \mathbb{E}_{\nu}[ (f(x_i) - f^*(x_i))^2 ] = 4Y^2 \normtwo[,\nu]{f - f^*}^2.
\]
Similarly, the range of \( e_i^f \) is bounded by:
\[
|e_i^f| = |(f(x_i) - f^*(x_i))(f(x_i) + f^*(x_i) - 2y_i)| \leq 2Y^2.
\]

\medskip

\textbf{3. Covering Numbers and \(\epsilon\)-Net Construction and Error Decomposition.}

Let \( \mathcal{F}_\epsilon \) be an \( \epsilon \)-net for the function class \( \mathcal{F} \) with respect to the \( \norminf{\cdot} \) norm, with size \( \mathcal{N}_{\mathcal{F}}(\epsilon) = \mathcal{N}(\epsilon, \mathcal{F}, \norminf{\cdot}) \).
Since the empirical risk minimizer \( \hat{f} \in \mathcal{F} \), there exists \( f_\epsilon \in \mathcal{F}_\epsilon \) such that \[ \norminf{\hat{f} - f_\epsilon} \leq \epsilon. \]

We decompose the target error \( \normtwo[,\nu]{\hat{f} - f^*}^2 \) using the triangle inequality and the inequality \( (a + b)^2 \leq 2a^2 + 2b^2 \):
\begin{align}
\normtwo[,\nu]{\hat{f} - f^*}^2 &\leq \left( \normtwo[,\nu]{\hat{f} - f_\epsilon} + \normtwo[,\nu]{f_\epsilon - f^*} \right)^2 \nonumber \\
&\leq 2 \normtwo[,\nu]{\hat{f} - f_\epsilon}^2 + 2 \normtwo[,\nu]{f_\epsilon - f^*}^2 \nonumber \\
&\leq 2 \epsilon^2 + 2 \normtwo[,\nu]{f_\epsilon - f^*}^2. \label{eq:error_decomp}
\end{align}
The goal is to establish a high-probability bound for \( \normtwo[,\nu]{f_\epsilon - f^*}^2 \).

\medskip

\textbf{4. Application of Bernstein's Inequality over the Net}

For any fixed \( f \in \mathcal{F} \), the random variables \( e_1^f, \dots, e_n^f \) are independent, with mean \( \mathbb{E}_{\nu}[e_i^f] = \normtwo[,\nu]{f - f^*}^2 \) and variance bounded by \( \mathrm{var}_{\nu}(e_i^f) \leq \mathbb{E}_{\nu}[(e_i^f)^2] \leq 4Y^2 \normtwo[,\nu]{f - f^*}^2 \).

Applying Bernstein's inequality to the empirical average \( \displaystyle\frac{1}{n} \sum_{i=1}^n e_i^f \) for each \( f \in \mathcal{F}_\epsilon \), and using a union bound over all \( \mathcal{N}(\epsilon, \mathcal{F}, \norminf{\cdot}) \) functions, we set \( \delta' = \delta/2 \).

With probability at least \( 1 - \delta' \), the following holds for all \( f \in \mathcal{F}_\epsilon \):
\begin{equation} \label{eq:bernstein_net}
\normtwo[,\nu]{f - f^*}^2 \leq \frac{1}{n} \sum_{i=1}^n e_i^f + \sqrt{\frac{8Y^2 \normtwo[,\nu]{f - f^*}^2 \log(\mathcal{N}_{\mathcal{F}}(\epsilon)/\delta')}{n}} + \frac{4Y^2 \log(\mathcal{N}_{\mathcal{F}}(\epsilon)/\delta')}{3n}.
\end{equation}

\textbf{5. Bounding the Empirical Term Involving \(\hat{f}\).}

Let \[ \bar{f} = \arg\min_{f \in \mathcal{F}} \normtwo[,\nu]{f - f^*}^2 \] 
be the best \( L_2(\nu) \)-approximating function in \( \mathcal{F} \), with \[\normtwo[,\nu]{\bar{f} - f^*}^2 \leq \epsilon_{\mathrm{approx}}^2. \]

Applying Bernstein's inequality to the empirical average \( \frac{1}{n} \sum_{i=1}^n e_i^{\bar{f}} \), with probability at least \( 1 - \delta' \):
\[
\frac{1}{n} \sum_{i=1}^n e_i^{\bar{f}} \leq \normtwo[,\nu]{\bar{f} - f^*}^2 + \sqrt{\frac{8Y^2 \normtwo[,\nu]{\bar{f} - f^*}^2 \log(1/\delta')}{n}} + \frac{4Y^2 \log(1/\delta')}{3n}.
\]
If \( \displaystyle\frac{1}{n} \sum_{i=1}^n e_i^{\bar{f}} \geq \frac{Y^2 \log(1/\delta')}{n} \), this implies:
\[
\displaystyle\frac{1}{n} \sum_{i=1}^n e_i^{\bar{f}} \leq \normtwo[,\nu]{\bar{f} - f^*}^2 + \sqrt{\frac{8 \normtwo[,\nu]{\bar{f} - f^*}^2 \left( \frac{1}{n} \sum_{i=1}^n e_i^{\bar{f}} \right)}{n}} + \frac{4Y^2 \log(1/\delta')}{3n}.
\]
Solving yields:
\[
\displaystyle\frac{1}{n} \sum_{i=1}^n e_i^{\bar{f}} \leq 10 \normtwo[,\nu]{\bar{f} - f^*}^2 + \frac{8Y^2 \log(1/\delta')}{3n}.
\]
If \( \displaystyle\dfrac{1}{n} \sum_{i=1}^n e_i^{\bar{f}} < \dfrac{Y^2 \log(1/\delta')}{n} \), the same bound holds. Thus, with probability at least \( 1 - \delta' \):
\[
\frac{1}{n} \sum_{i=1}^n e_i^{\bar{f}} \leq 10 \normtwo[,\nu]{\bar{f} - f^*}^2 + \frac{8Y^2 \log(1/\delta')}{3n} \leq 10 \epsilon_{\mathrm{approx}}^2 + \frac{8Y^2 \log(1/\delta')}{3n}.
\]

\medskip

\textbf{6. Bounding \(\normtwo[,\nu]{f_\epsilon - f^*}^2\).}

From inequality \eqref{eq:bernstein_net}, for \( f_\epsilon \), with probability at least \( 1 - \delta' \):
\[
\normtwo[,\nu]{f_\epsilon - f^*}^2 \leq \frac{1}{n} \sum_{i=1}^n e_i^{f_\epsilon} + \sqrt{\frac{8Y^2 \normtwo[,\nu]{f_\epsilon - f^*}^2 \log(\mathcal{N}_{\mathcal{F}}(\epsilon)/\delta')}{n}} + \frac{4Y^2 \log(\mathcal{N}_{\mathcal{F}}(\epsilon)/\delta')}{3n}.
\]
Since \( \displaystyle\frac{1}{n} \sum_{i=1}^n e_i^{\hat{f}} \leq \frac{1}{n} \sum_{i=1}^n e_i^{\bar{f}} \), and:\[
\frac{1}{n} \sum_{i=1}^n e_i^{f_\epsilon} - \frac{1}{n} \sum_{i=1}^n e_i^{\hat{f}} = \frac{1}{n} \sum_{i=1}^n (\hat{f}(x_i) - f_\epsilon(x_i))(\hat{f}(x_i) + f_\epsilon(x_i) - 2y_i) \leq 2Y \epsilon,
\]
we substitute the bound for \( \bar{f} \):
\begin{multline*}
\normtwo[,\nu]{f_\epsilon - f^*}^2 \leq  \left( 2Y \epsilon + 10 \epsilon_{\mathrm{approx}}^2 + \frac{8Y^2 \log(2/\delta)}{3n} \right) \\
  + \sqrt{\frac{8Y^2 \normtwo[,\nu]{f_\epsilon - f^*}^2 \log(2\mathcal{N}_{\mathcal{F}}(\epsilon)/\delta)}{n}} + \frac{4Y^2 \log(2\mathcal{N}_{\mathcal{F}}(\epsilon)/\delta)}{3n}.    
\end{multline*}
Solving for \( \normtwo[,\nu]{f_\epsilon - f^*}^2 \), with probability at least \( 1 - \delta \):
\[
\normtwo[,\nu]{f_\epsilon - f^*}^2 \leq \frac{22 Y^2 \log(2\mathcal{N}_{\mathcal{F}}(\epsilon)/\delta)}{n} + 4Y \epsilon + 20 \epsilon_{\mathrm{approx}}^2.
\]

\medskip

\textbf{7. Final Combination.}

Substituting into \eqref{eq:error_decomp}, with probability at least \( 1 - \delta \):
\begin{align*}
\normtwo[,\nu]{\hat{f} - f^*}^2 &\leq 2 \epsilon^2 + 2 \normtwo[,\nu]{f_\epsilon - f^*}^2 \\
&\leq 2 \epsilon^2 + 2 \left( \frac{22 Y^2 \log(2\mathcal{N}(\epsilon, \mathcal{F}, \norminf{\cdot})/\delta)}{n} + 4Y \epsilon + 20 \epsilon_{\mathrm{approx}}^2 \right) \\
&\leq 9Y \epsilon + \frac{44 Y^2 \log(2\mathcal{N}(\epsilon, \mathcal{F}, \norminf{\cdot})/\delta)}{n} + 40 \epsilon_{\mathrm{approx}}^2 \\
&\leq \frac{45 Y^2 \log(2\mathcal{N}(1/n, \mathcal{F}, \norminf{\cdot})/\delta)}{n} + 40 \epsilon_{\mathrm{approx}}^2,
\end{align*}
by choosing \( \epsilon = 1/n \) for large \( n \), yielding the lemma's result.
\end{proof}

\bigskip

\subsection{Proof of Lemma \ref{lemma:iterative_regression}}
\begin{proof}
Our goal is to establish that the error bound holds simultaneously for all \( M \times K \) pairs \((j, k)\),
where \( j \in \{0, \dots, M-1\} \) and \( k \in \{1, \dots, K\} \),
with probability at least \( 1 - \delta \).

To achieve this, we apply Lemma~\ref{lemma:regression_bound} to each pair and employ a union bound,
a process which assigns a failure probability of \( \delta' = \delta / (MK) \) to each pair.

For a fixed pair \((j, k)\), the regression problem satisfies the conditions of Lemma~\ref{lemma:regression_bound}.
Specifically, the least-squares empirical risk minimizer is \( f_k^{j+1} \in \mathcal{F}_k \),
which targets the conditional expectation defined by \( (\mathbf{T}\mathbf{f}^j)_k \).
This target, \( (\mathbf{T}\mathbf{f}^j)_k \), satisfies
\[ (\mathbf{T}\mathbf{f}^j)_k(x) = \mathbb{E}[y \mid x] \;\; \textrm{for}  \;\; x \sim \nu_k, \]
ensuring it plays the role of the true regression function \( f^* \) in Lemma~\ref{lemma:regression_bound}.

The regression is performed using \( n_k \) i.i.d. samples drawn from the distribution \( \nu_k \),
with the function class \( \mathcal{F}_k \).
The \(\epsilon\)-covering number for this setup is given by \( \mathcal{N}_k(\epsilon) = \mathcal{N}(\epsilon, \mathcal{F}_k, \norminf{\cdot}) \).

Additionally, the boundedness conditions hold: functions in \( \mathcal{F}_k \) and the target \( (\mathbf{T}\mathbf{f}^j)_k \)
take values in \( [0, Y] \) almost surely under \( \nu_k \), as guaranteed by Lemma~\ref{lemma:iterative_regression}.

Crucially, the approximation error condition is satisfied. 
Lemma~\ref{lemma:iterative_regression} assumes that for each \( j \) and \( k \), 
there exists an approximation error \( \epsilon_{\mathrm{approx}, k} \ge 0 \) such that:
\[
\inf_{f \in \mathcal{F}_k} \normtwo[,\nu_k]{(\mathbf{T}\mathbf{f}^j)_k - f} \le \epsilon_{\mathrm{approx}, k}.
\]
This matches the approximation error condition in Lemma~\ref{lemma:regression_bound}, 
where \( \inf_{f \in \mathcal{F}} \normtwo[,\nu]{f^* - f} \le \epsilon_{\mathrm{approx}} \), 
with
\[ f^* = (\mathbf{T}\mathbf{f}^j)_k , \; \mathcal{F} = \mathcal{F}_k , \;
\nu = \nu_k , \;\; \text{and} \;\;  \epsilon_{\mathrm{approx}} = \epsilon_{\mathrm{approx}, k}. \] 

Thus, all assumptions of Lemma~\ref{lemma:regression_bound} are met.

Applying Lemma~\ref{lemma:regression_bound} to the pair \((j, k)\) with failure probability \( \delta' = \delta / (MK) \), 
we obtain, with probability at least \( 1 - \delta' \):
\[
\normtwo[,\nu_k]{f_k^{j+1} - (\mathbf{T}\mathbf{f}^j)_k}^2 
\leq \frac{45 Y^2}{n_k} \log\left( \frac{2 \mathcal{N}_k(1/n_k)}{\delta'} \right) + 40 \epsilon_{\mathrm{approx}, k}^2.
\]

Next, we simplify the logarithmic term. 
Since \( \delta' = \delta / (MK) \), we have:
\[
\log\left( \frac{2 \mathcal{N}_k(1/n_k)}{\delta'} \right) 
= \log\left( \frac{2 \mathcal{N}_k(1/n_k) \cdot MK}{\delta} \right).
\]
Substituting this into the bound, we get:
\[
\normtwo[,\nu_k]{f_k^{j+1} - (\mathbf{T}\mathbf{f}^j)_k}^2 
\leq \frac{45 Y^2}{n_k} \log\left( \frac{2 MK \mathcal{N}_k(1/n_k)}{\delta} \right) + 40 \epsilon_{\mathrm{approx}, k}^2.
\]

Finally, we apply the union bound over all \( M \times K \) pairs. 
Since each pair \((j, k)\) has a failure probability of at most \( \delta' \), 
the probability that the bound holds simultaneously for all pairs is at least:
\[
1 - MK \delta' = 1 - MK \cdot \frac{\delta}{MK} = 1 - \delta.
\]
Thus, the lemma is proved.
\end{proof}

\bigskip

\subsection{Proof of Lemma \ref{lemma:q_error_bound}}  
\begin{proof}

We begin by denoting the trajectory of all random variables that arise during an episode under the given transition rules, starting at certain state-action pair \((s_0, a_0)\), stage \(j_0=k\). 

Specifically, let
\[\tau_m = (s_0, a_0, j_0, \gamma_0', s_1, a_1,  j_1, \gamma_1', \dots, s_m, a_m, j_m, \gamma_m' ) \]
represent a trajectory of length \(m\), where each component is determined recursively as follows.

Define \( j_t \) as the stage index associated with state \( s_t \). The single-step discount factor \( \gamma'_{t+1} \) is set to \( \gamma_{j_t} \) if a stage transition occurs at time \(t\) by \( (s_t, a_t) \in \mathcal{T}_{j_t} \), and to \( 1 \) otherwise.

The next state \( s_{t+1} \) is determined by:
\[
s_{t+1} = 
\begin{cases}
\phi_{j_t}(s_t') & \text{if } (s_t, a_t) \in \mathcal{T}_{j_t}, \text{ where } s_t' \sim P_{j_t}(\cdot \mid s_t, a_t), \\
s_t' & \text{otherwise, where } s_t' \sim P_{j_t}(\cdot \mid s_t, a_t).
\end{cases}
\]

The next action \( a_{t+1} \in \mathcal{A}_{j_{t+1}} \) is selected depending on whether the current stage \( j_{t+1} \) is in the update set \( \mathcal{U} \). If \( j_{t+1} \in \mathcal{U} \), the action is chosen to maximize the Q-function error:
\[
a_{t+1} = \argmax_{a \in \mathcal{A}_{j_{t+1}}} \left| Q_{j_{t+1}}^{(m-1-t)}(s_{t+1}, a) - Q_{j_{t+1}}^*(s_{t+1}, a) \right|.
\]
Otherwise, the action is sampled from a fixed policy:
\[
a_{t+1} \sim \pi_{j_{t+1}}^{\circ}(\cdot \mid s_{t+1}).
\]

Let \( \beta_k \in \Delta(\mathcal{S}_k \times \mathcal{A}_k) \) denote an arbitrary admissible distribution over the initial state-action pair \( (s_0, a_0) \) at stage \(k\),
and let \( \Gamma_t = \prod_{i=1}^t \gamma'_i \) represent the cumulative discount factor, with \( \Gamma_0 = 1 \).

Since each stage \( k \) has a finite horizon \( H_k \), and $H = \sum H_k$, we have \( \Gamma_m \leq \gamma_{\mathrm{cycle}}^{\lfloor m / H \rfloor} \) almost surely.
 
Define the error terms: single-step error function
\[ \epsilon_{k,m} := \left| Q_k^{(m)} - (\mathbf{T}_{\mathcal{U}}\mathbf{Q}^{(m-1)})_k \right| \]
and total error function
\[ \delta_{k,m} := \left| Q_k^{(m)} - Q_k^* \right|, \]
where $Q^*$ denotes the $\mathcal{U}$-constrained optimal Q-function.

From Lemma~\ref{lemma:iterative_regression}, for any admissible \( \beta_k \), we have \( \|\epsilon_{k,m}\|_{2,\beta_k} \leq \sqrt{C} \epsilon_k \) with high probability, where $\epsilon_k$ is defined from that Lemma. Also, \( \delta_{k,m} \leq Y \) almost surely.

Next, we decompose the total error \( \delta_{k,m} \):
\begin{align*}
& \; \delta_{k,m}(s_0, a_0) \\
&\leq \left| Q_k^{(m)}(s_0, a_0) - (\mathbf{T}_{\mathcal{U}}\mathbf{Q}^{(m-1)})(s_0, a_0) \right| + \left| (\mathbf{T}_{\mathcal{U}}\mathbf{Q}^{(m-1)})(s_0, a_0) - Q_k^*(s_0, a_0) \right| \\
&\leq \epsilon_{k,m}(s_0, a_0) + \mathbb{E} \left[ \gamma'_1 \delta_{j_1, m-1}(s_1, a_1) \mid (s_0, a_0) \right] \\
&\leq \epsilon_{k,m}(s_0, a_0) + \mathbb{E} \left[ \gamma'_1 \epsilon_{j_1, m-1}(s_1, a_1) \mid (s_0, a_0) \right] + \mathbb{E} \left[ \gamma'_1 \gamma'_2 \delta_{j_2, m-2}(s_2, a_2) \mid (s_0, a_0) \right] \\
&= \epsilon_{k,m}(s_0, a_0) + \mathbb{E} \left[ \Gamma_1 \epsilon_{j_1, m-1}(s_1, a_1) \mid (s_0, a_0) \right] + \mathbb{E} \left[ \Gamma_2 \delta_{j_2, m-2}(s_2, a_2) \mid (s_0, a_0) \right] \\
&\quad \vdots \\
&\leq \sum_{t=0}^{m-1} \mathbb{E} \left[ \Gamma_t \epsilon_{j_t, m-t}(s_t, a_t) \mid (s_0, a_0) \right] + \mathbb{E} \left[ \Gamma_m \delta_{j_m, 0}(s_m, a_m) \mid (s_0, a_0) \right] \\
&\leq \sum_{t=0}^{m-1} \mathbb{E} \left[ \Gamma_t \epsilon_{j_t, m-t}(s_t, a_t) \mid (s_0, a_0) \right] + \gamma_{\mathrm{cycle}}^{\lfloor m / H \rfloor} Y,
\end{align*}
where the expectation \( \mathbb{E}[\cdot \mid (s_0, a_0)] \) represents the expectation over the trajectory \( \tau_t = (s_1, a_1, \dots, s_t, a_t) \) conditional on \( (s_0, a_0) \).

Let \( \varepsilon := \max_k \epsilon_k \). Then, bounding the norm of the conditional expectation term as shown in the provided derivation:
\begin{align*}
\left\| \mathbb{E} (\epsilon_{j_t, m-t}(s_t, a_t) \mid (s_0, a_0) ) \right\|_{2,\beta_k}
&\leq \sqrt{ \mathbb{E}_{(s_0, a_0) \sim \beta_k} \left[ \mathbb{E} ( \epsilon^2_{j_t, m-t}(s_t, a_t) \mid (s_0, a_0) ) \right] } \\
&\leq \sqrt{ \mathbb{E}_{(s_0, a_0) \sim \beta_k} [ C \epsilon_{j_t}^2 ] } \\
&\leq \sqrt{C} \varepsilon.
\end{align*}

The first inequality follows from Jensen’s inequality. For the second inequality, observe that:
\begin{align*}
\mathbb{E} ( \epsilon^2_{j_t, m-t}(s_t, a_t) \mid (s_0, a_0) )  
&= \sum\limits_{k=1}^K \mathbb{E} \left( \epsilon^2_{k, m-t}(s_t, a_t) \mathbb{I}(j_t = k) \mid (s_0, a_0) \right) \\
&= \sum\limits_{k=1}^K \mathbb{E}_{\beta_{k,t}} \left( \epsilon^2_{k, m-t}(s_t, a_t) \mid (s_0, a_0) \right) \mathbb{P}(j_t = k \mid (s_0, a_0)),
\end{align*}
where \(\beta_{k,t}\) denotes the conditional distribution over the state-action pair at time step \(t\), given that the initial state-action pair is \((s_0, a_0)\) and that \(j_t = k\).

Taking expectation over \((s_0, a_0) \sim \beta_k\), we have:
\begin{multline*}
\mathbb{E}_{(s_0, a_0) \sim \beta_k} \left[ \sum\limits_{k'=1}^K \mathbb{E}_{\beta_{k',t}} ( \epsilon^2_{k', m-t}(s_t, a_t) \mid (s_0, a_0) ) \mathbb{P}(j_t = k' \mid (s_0, a_0)) \right] \\
\leq \sum\limits_{k'=1}^K \mathbb{E}_{\beta'_{k',t}} ( \epsilon^2_{k', m-t}(s_t, a_t) ) \mathbb{P}_{\beta_k}(j_t = k') \leq \sum\limits_{k'=1}^K C \epsilon_{k'}^2 \mathbb{P}_{\beta_k}(j_t = k') = C \varepsilon^2,
\end{multline*}
where \(\beta'_{k,t}\) is the distribution of the state-action pair on stage \(k\) at time \(t\), conditioned on the event that the initial pair is drawn from \(\beta_k\) and that \(j_t = k\). This distribution is admissible, and the bound follows from Assumption~\ref{assumption:coverage} and Lemma~\ref{lemma:iterative_regression}, applied as \(\mathbf{f}^j = Q^{(j)}, \mathbf{T} = \mathbf{T}_{\mathcal{U}}\).

Substituting this bound into the previous estimate for \( \delta_{k,m} \), we conclude that for any admissible \( \beta_k \), with probability at least \( 1 - \delta \),
\[
\left\| Q_k^* - Q_k^{(m)} \right\|_{2,\beta_k} \leq \sum_{t=0}^{m-1} \gamma_{\mathrm{cycle}}^{\lfloor t / H \rfloor} \sqrt{C} \varepsilon + \gamma_{\mathrm{cycle}}^{\lfloor m / H \rfloor} Y \leq \frac{\sqrt{C} H \varepsilon}{1 - \gamma_{\mathrm{cycle}}} + \gamma_{\mathrm{cycle}}^{\lfloor m / H \rfloor} Y.
\]
\end{proof}

\bigskip

\subsection{Proof of Lemma \ref{lemma:performance_difference}}
\begin{proof}
Let \( \Pr^\pi(\tau \mid s_0 = s) \) denote the distribution over trajectories
\[
\tau = (s_0, a_0, j_0, \gamma'_0, s_1, a_1, j_1, \gamma'_1, \dots)
\]
induced by the composite policy \( \pi \) across \(K\) stages, starting from \( s_0 = s \in \mathcal{S}_k \).

Here, \( j_t \) denotes the stage index associated with state \( s_t \), and \( \gamma'_{t+1} \) is the one-step discount factor defined as \( \gamma_{j_t} \) if a stage transition occurs between \( s_t \) and \( s_{t+1} \), and \( 1 \) otherwise.

The cumulative discount factor is given by
\(
\Gamma_t = \prod_{i=1}^t \gamma'_i,
\quad \text{with } \Gamma_0 = 1.
\)

The expectation \( \mathbb{E}_{\tau \sim \Pr^\pi(\tau \mid s_0 = s)} \) is taken over all random variables \( (s_t, a_t, j_t, \Gamma_t) \) along the trajectory generated by \( \pi \).

For a fixed starting state \( s \), we apply a telescoping argument:
\begin{align*}
&\;\; V_k^\pi(s) - V_k^{\pi'}(s) \\
&= \mathbb{E}_{\tau \sim \Pr^\pi(\tau \mid s_0 = s)} \left[ \sum_{t=0}^\infty \Gamma_t r_{j_t}(s_t, a_t) \right] - V_k^{\pi'}(s) \\
&= \mathbb{E}_{\tau \sim \Pr^\pi(\tau \mid s_0 = s)} \left[ \sum_{t=0}^\infty \Gamma_t \left( r_{j_t}(s_t, a_t) + V_{j_t}^{\pi'}(s_t) - V_{j_t}^{\pi'}(s_t) \right) \right] - V_k^{\pi'}(s) \\
&\stackrel{(a)}{=} \mathbb{E}_{\tau \sim \Pr^\pi(\tau \mid s_0 = s)} \left[ \sum_{t=0}^\infty \Gamma_t \left( r_{j_t}(s_t, a_t) + \gamma'_{t+1} V_{j_{t+1}}^{\pi'}(s_{t+1}) - V_{j_t}^{\pi'}(s_t) \right) \right] \\
&\stackrel{(b)}{=} \mathbb{E}_{\tau \sim \Pr^\pi(\tau \mid s_0 = s)} \left[ \sum_{t=0}^\infty \Gamma_t \left( r_{j_t}(s_t, a_t) + \mathbb{E} \left[ \gamma'_{t+1} V_{j_{t+1}}^{\pi'}(s_{t+1}) \mid s_t, a_t \right] - V_{j_t}^{\pi'}(s_t) \right) \right] \\
&\stackrel{(c)}{=} \mathbb{E}_{\tau \sim \Pr^\pi(\tau \mid s_0 = s)} \left[ \sum_{t=0}^\infty \Gamma_t \left( Q_{j_t}^{\pi'}(s_t, a_t) - V_{j_t}^{\pi'}(s_t) \right) \right] \\
&= \mathbb{E}_{\tau \sim \Pr^\pi(\tau \mid s_0 = s)} \left[ \sum_{t=0}^\infty \Gamma_t A_{j_t}^{\pi'}(s_t, a_t) \right],
\end{align*}
where,

step (a) uses the Bellman structure of \( V^{\pi'} \) to rearrange terms via telescoping;

step (b) applies the law of iterated expectations;

and step (c) follows from the definition of \( Q_{j_t}^{\pi'} \). The final equality uses the advantage function defined in the lemma.

Taking the expectation over the initial state distribution \( s \sim \mu_k \):
\begin{align*}
v_k^\pi(\mu_k) - v_k^{\pi'}(\mu_k) 
&= \mathbb{E}_{s \sim \mu_k} \left[ V_k^\pi(s) - V_k^{\pi'}(s) \right] \\
&= \mathbb{E}_{s \sim \mu_k} \left[ \mathbb{E}_{\tau \sim \Pr^\pi(\tau \mid s_0 = s)} \left[ \sum_{t=0}^\infty \Gamma_t A_{j_t}^{\pi'}(s_t, a_t) \right] \right].
\end{align*}
Since each stage \( m \) has a finite horizon \( H_m \), we have \( \Gamma_{t+H} \leq \gamma_{\mathrm{cycle}} \Gamma_t \) almost surely. This property, involving the cycle-based discount term \( \gamma_{\mathrm{cycle}}^{\lfloor t/H \rfloor} \), yields the result stated in the lemma.
\end{proof}

\bigskip

\section{Proofs of the Main Theorems}

\subsection{Proof of Theorem \ref{thm:main_convergence}}
\begin{proof}
We begin by invoking Lemma~\ref{lemma:performance_difference}, the performance difference lemma tailored to our multi-stage cyclic setting.

Let $v_k^*$ denote the optimal value for stage $k$, defined with respect to the fixed update set $\mathcal{U}$ and the baseline policies $\pi_k^\circ$. The lemma states:
\[
v_k^* - v_k^{\pi^{(M)}} = \sum_{t=0}^\infty \gamma_{\mathrm{cycle}}^{\lfloor t/H \rfloor} \mathbb{E}_{s \sim \eta_k} \mathbb{E}_{\tau \sim \Pr^{\pi^{(M)}}(\tau \mid s_0 = s)} \left[ A_{j_t}^{*} (s_t, a_t) \right].
\]
Here the action $a_t$ is sampled according to $a_t \sim \pi_{j_t}^{(M)}(\cdot \mid s_t)$, where $j_t$ is the stage index at time step $t$, and the advantage function relative to the $\mathcal{U}$-constrained optimal policy $\pi_{\mathcal{U}}^*$ is denoted by $A_{j_t}^{*}(s_t, a_t)$ and defined as:
$$ A_{j_t}^{*}(s_t, a_t) = Q_{j_t}^*(s_t, a_t) - V_{j_t}^*(s_t).$$

Note that for stages outside the constrained set with $k \notin \mathcal{U}$, the policies coincide: $\pi_k^{(M)} = \pi_k^\circ = \pi_{k,\mathcal{U}}^*$.

We now analyze the expected advantage term $\mathbb{E}[A_{j_t}^{\pi_{\mathcal{U}}^*}(s_t, a_t)]$ appearing in the summation. When $j_t \notin \mathcal{U}$, the executed policy $\pi_{j_t}^{(M)}$ coincides with $\pi_{j_t}^\circ$, which in turn is equal to the optimal constrained policy $\pi_{j_t,\mathcal{U}}^*$.

Thus, the action $a_t$ is sampled from the optimal action distribution under $\pi_{\mathcal{U}}^*$, and the expected advantage vanishes:
\begin{align*}
\mathbb{E}_{a_t \sim \pi_{j_t}^{(M)}(\cdot \mid s_t)}[A_{j_t}^*(s_t, a_t) ] \mathbb{I}(j_t \notin \mathcal{U}) &= \mathbb{E}_{a_t \sim \pi_{j_t}^{\circ}(\cdot \mid s_t)} [Q_{j_t}^*(s_t, a_t) - V_{j_t}^*(s_t)] \mathbb{I}(j_t \notin \mathcal{U}) \\
&= [V_{j_t}^*(s_t) - V_{j_t}^*(s_t)] \mathbb{I}(j_t \notin \mathcal{U}) \\
&= 0.
\end{align*}
Hence, the summation in the performance difference expression effectively reduces to terms where $j_t \in \mathcal{U}$. For these terms, using the identity $V_{j_t}^*(s_t) = Q_{j_t}^{*}(s_t, \pi_{j_t}^{*}(s_t))$, we rewrite the advantage as:
\[
Q_{j_t}^{*}(s_t, \pi_{j_t}^{*}(s_t)) - Q_{j_t}^{*}(s_t, \pi_{j_t}^{(M)}(s_t)).
\]
The performance difference then becomes:
\begin{multline*}
v_k^* - v_k^{\pi^{(M)}} \\
= \sum_{t=0}^\infty \gamma_{\mathrm{cycle}}^{\lfloor t/H \rfloor} \mathbb{E}_{s \sim \eta_k} \mathbb{E}_{\tau \sim \Pr^{\pi^{(M)}}(\tau \mid s_0 = s)} \left[ \left( Q_{j_t}^{*}(s_t, \pi_{j_t}^{*}(s_t)) - Q_{j_t}^{*}(s_t, \pi_{j_t}^{(M)}(s_t)) \right) \mathbb{I}(j_t \in \mathcal{U}) \right].
\end{multline*}

For each term with $j_t \in \mathcal{U}$, we introduce and subtract the estimated Q-function \(Q^{(M)}\):
\begin{align*}
Q_{j_t}^{*}(s_t, \pi_{j_t}^{*}(s_t)) - Q_{j_t}^{*}(s_t, \pi_{j_t}^{(M)}(s_t))
&= \left[ Q_{j_t}^{*}(s_t, \pi_{j_t}^{*}(s_t)) - Q_{j_t}^{(M)}(s_t, \pi_{j_t}^{*}(s_t)) \right] \\
&\quad + \left[ Q_{j_t}^{(M)}(s_t, \pi_{j_t}^{(M)}(s_t)) - Q_{j_t}^{*}(s_t, \pi_{j_t}^{(M)}(s_t)) \right] \\
&\quad + \left[ Q_{j_t}^{(M)}(s_t, \pi_{j_t}^{*}(s_t)) - Q_{j_t}^{(M)}(s_t, \pi_{j_t}^{(M)}(s_t)) \right].
\end{align*}
Since $\pi^{(M)}$ is greedy with respect to $Q^{(M)}$ for all $j_t \in \mathcal{U}$, the final term is non-positive. Therefore,
\begin{multline*}
Q_{j_t}^{*}(s_t, \pi_{j_t}^{*}(s_t)) - Q_{j_t}^{*}(s_t, \pi_{j_t}^{(M)}(s_t)) \\
\leq \left| Q_{j_t}^{*}(s_t, \pi_{j_t}^{*}(s_t)) - Q_{j_t}^{(M)}(s_t, \pi_{j_t}^{*}(s_t)) \right| + \left| Q_{j_t}^{*}(s_t, \pi_{j_t}^{(M)}(s_t)) - Q_{j_t}^{(M)}(s_t, \pi_{j_t}^{(M)}(s_t)) \right|.
\end{multline*}
Substituting this bound back into the performance difference expression, we obtain:
\begin{multline*}
v_k^* - v_k^{\pi^{(M)}} \leq \sum_{t=0}^\infty \gamma_{\mathrm{cycle}}^{\lfloor t/H \rfloor} \mathbb{E}_{s \sim \eta_k} \mathbb{E}_{\tau \sim \Pr^{\pi^{(M)}}(\tau \mid s_0 = s)} \Big[ \left| Q_{j_t}^{*} - Q_{j_t}^{(M)} \right|(s_t, \pi_{j_t}^{*}(s_t)) \\
 + \left| Q_{j_t}^{*} - Q_{j_t}^{(M)} \right|(s_t, \pi_{j_t}^{(M)}(s_t)) \Big] \mathbb{I}(j_t \in \mathcal{U}).
\end{multline*}

Let $\beta_{k,t}^{(M)*}$ denote the conditional distribution over $(s_t, a_t)$, givne that $j_t = k$ and the state $s_t$ is generated under $\pi^{(M)}$ from $\eta_k$, and the action is $a_t = \pi_k^*(s_t)$. Similarly, define $\beta_{k,t}^{(M)(M)}$ where $a_t \sim \pi_k^{(M)}(\cdot \mid s_t)$.

Applying $L_1$ norms, switching to \(L_2\) norm using the Cauchy–Schwarz inequality, and the high-probability bound from Lemma~\ref{lemma:q_error_bound}, we have:
\begin{align*}
&\mathbb{E}_{\tau \sim \Pr^{\pi^{(M)}}} \left[  \left| Q_{j_t}^{*} - Q_{j_t}^{(M)} \right|(s_t, \pi_{j_t}^{*}(s_t))
 + \left| Q_{j_t}^{*} - Q_{j_t}^{(M)} \right|(s_t, \pi_{j_t}^{(M)}(s_t)) \right] \mathbb{I}(j_t \in \mathcal{U}) \\
&\le \sum_{k \in \mathcal{U}} \left( \| Q_k^* - Q_k^{(M)} \|_{1, \beta_{k,t}^{(M)*}} + \| Q_k^* - Q_k^{(M)} \|_{1, \beta_{k,t}^{(M)(M)}} \right) \mathbb{P}(j_t = k \mid s_0 = s) \\
&\le \sum_{k \in \mathcal{U}} \left( \| Q_k^* - Q_k^{(M)} \|_{2, \beta_{k,t}^{(M)*}} + \| Q_k^* - Q_k^{(M)} \|_{2, \beta_{k,t}^{(M)(M)}} \right) \mathbb{P}(j_t = k \mid s_0 = s) \\
&\le \sum_{k \in \mathcal{U}} 2 \left( \frac{\sqrt{C} H \varepsilon}{1 - \gamma_{\mathrm{cycle}}} + \gamma_{\mathrm{cycle}}^{\lfloor M/H \rfloor} Y \right) \mathbb{P}(j_t = k \mid s_0 = s).
\end{align*}

Plugging this into the bound for the performance difference:
\begin{align*}
v_k^* - v_k^{\pi^{(M)}} &\le \sum_{t=0}^\infty \gamma_{\mathrm{cycle}}^{\lfloor t/H \rfloor} \mathbb{E}_{s \sim \eta_k} \left[ \sum_{k \in \mathcal{U}} 2 \left( \frac{\sqrt{C} H \varepsilon}{1 - \gamma_{\mathrm{cycle}}} + \gamma_{\mathrm{cycle}}^{\lfloor M/H \rfloor} Y \right) \mathbb{P}(j_t = k \mid s_0 = s) \right] \\
&\le \sum_{t=0}^\infty \gamma_{\mathrm{cycle}}^{\lfloor t/H \rfloor} \cdot 2 \left( \frac{\sqrt{C} H \varepsilon}{1 - \gamma_{\mathrm{cycle}}} + \gamma_{\mathrm{cycle}}^{\lfloor M/H \rfloor} Y \right).
\end{align*}

Using the geometric series bound
\(
\displaystyle\sum_{t=0}^\infty \gamma_{\mathrm{cycle}}^{\lfloor t/H \rfloor} \le \frac{H}{1 - \gamma_{\mathrm{cycle}}},
\)
we finally obtain:
\[
v_k^* - v_k^{\pi^{(M)}} \le \frac{2 H}{1 - \gamma_{\mathrm{cycle}}} \left[ \frac{\sqrt{C} H \varepsilon}{1 - \gamma_{\mathrm{cycle}}} + \gamma_{\mathrm{cycle}}^{\lfloor M/H \rfloor} Y \right].
\]
This bound holds uniformly across all initial stages $k \in \{1,\dots,K\}$, proving the theorem for $\|\mathbf{v}^* - \mathbf{v}^{(M)}\|_\infty$.
\end{proof}

\bigskip

\subsection{Proof of Theorem \ref{thm:finite_sample_rate}}
\begin{proof}
We aim to establish the finite-sample error rate for the uniform suboptimality gap, as stated in Theorem~\ref{thm:finite_sample_rate}.

According to Theorem~\ref{thm:main_convergence}, the iteration-dependent term \( \gamma_{\mathrm{cycle}}^{M/H} \) vanishes when \( M \asymp \mathrm{poly}(n) \), where \( n = \sum_{k=1}^K n_k \) is the total sample size. 
Thus, in this regime, the leading-order terms dominate the overall error, and we analyze these terms under Assumption~\ref{assumption:besov}.

Under this assumption, the complexity of the function class \( \mathcal{F}_k \) is characterized by a model-specific parameter \( D_k \), which governs both the covering number \( \mathcal{N}_k(\epsilon) \) and the approximation error \( \epsilon_{\mathrm{approx}, k} \). 

In particular, we have the expressions:
\[
\log \mathcal{N}_k(\epsilon) = \mathcal{O}\left( D_k \log\left( \frac{\mathrm{poly}(D_k)}{\epsilon} \right) \right), 
\quad \epsilon_{\mathrm{approx}, k} = \mathcal{O}(D_k^{-\alpha_k}),
\]
where \( \alpha_k > 0 \) denotes a regularity parameter associated with \( \mathcal{F}_k \).

\smallskip

\noindent
\textit{Note that} these expressions do not impose any additional conditions beyond Assumption~\ref{assumption:besov}; they are automatically satisfied by selecting an appropriate approximation class. 
Table~\ref{tab:approx_rates_covering} provides examples of such classes, and further details can be found in Appendix~\ref{sec:discussion_approx}.

\smallskip

To obtain the error rate, we analyze the term \( \epsilon_k \) from Theorem~\ref{thm:main_convergence}, which consists of a regression error
\[
\sqrt{\dfrac{1}{n_k} \log \mathcal{N}_k(1/n_k)}
\quad \text{and} \quad
\epsilon_{\mathrm{approx}, k}.
\]
Substituting the complexity expressions above, we find
\[
\log \mathcal{N}_k(1/n_k) = \mathcal{O}\left( D_k \log\left( \mathrm{poly}(D_k) \cdot n_k \right) \right),
\]
so that the regression error becomes
\[
\sqrt{\frac{1}{n_k} \log \mathcal{N}_k(1/n_k)} 
= \mathcal{O}\left( \sqrt{\frac{D_k \log(\mathrm{poly}(D_k) \cdot n_k)}{n_k}} \right).
\]
Meanwhile, the approximation error remains
\[
\epsilon_{\mathrm{approx}, k} = \mathcal{O}(D_k^{-\alpha_k}).
\]

To minimize the total error, we balance the two terms by setting them to be of comparable order:
\[
\sqrt{\frac{D_k \log(\mathrm{poly}(D_k) \cdot n_k)}{n_k}} \asymp D_k^{-\alpha_k}.
\]
Neglecting logarithmic factors, this yields the simplified relation:
\[
\sqrt{\frac{D_k}{n_k}} \asymp D_k^{-\alpha_k},
\]
which implies
\[
D_k \asymp n_k^{1/(2\alpha_k + 1)}.
\]

Substituting this back into the expressions for both terms, we obtain the following rate:
\[
\epsilon_k = \mathcal{O}\left( n_k^{-\alpha_k / (2\alpha_k + 1)} \right).
\]
Since the uniform suboptimality gap \( \varepsilon \) is determined by the worst-case error across all stages, we have
\[
\varepsilon = \mathcal{O}\left( \max_k n_k^{-\alpha_k / (2\alpha_k + 1)} \right).
\]

Finally, the constants hidden in the big-\( \mathcal{O} \) notation depend only on the environment and the function classes \( \mathcal{M}_k \) and \( \mathcal{F}_k \), and not on the sample sizes \( n_k \) or the complexity parameters \( D_k \), which completes the proof.
\end{proof}

\begin{remark}
Note that although we use big-\( \mathcal{O} \) notation for simplicity, the result is non-asymptotic and remains valid for any finite sample size \( n_k \).
\end{remark}

\bigskip

\subsection{Proof of Theorem \ref{thm:random_forest}}

\begin{proof}
We begin by recalling that, under the Lipschitz continuity assumption on the optimal Q-functions, the $\mathcal{U}$-constrained optimal Q-function $Q_k^*$ for each stage $k$ and action $a \in \mathcal{A}_k$ is Lipschitz continuous over the domain $\mathcal{S}_k \subseteq [0,1]^{d_k}$.

This continuity is characterized by a uniform upper bound $L$ on the Lipschitz constant across all stages and actions.

In Assumption \ref{assumption:absolute_continuity}, we further assume that the data distribution \(\nu_k\) at each stage \(k\) is absolutely continuous with respect to the product measure \(\lambda_k \times \mu_k\), where \(\lambda_k\) denotes the Lebesgue measure on the state space \(\mathcal{S}_k\), and \(\mu_k\) is the counting measure on the finite action space \(\mathcal{A}_k\).

This absolute continuity ensures that \(\nu_k\) admits a density, so that marginal and conditional distributions are well-defined.

\textbf{1. Sufficient samples for each action.}

For each action \(a \in \mathcal{A}_k\), let \(n_{k,a}\) denote the number of samples corresponding to this action within the stage-\(k\) data set \(\mathcal{D}_k\). The total number of samples at this stage is
\[
n_k = \sum_{a \in \mathcal{A}_k} n_{k,a}.
\]

The marginal probability of selecting action \(a\) under the data distribution \(\nu_k\) is given by
\[
p_{k,a} = \int_{\mathcal{S}_k} \nu_k(s_k, a) \, d\lambda_k(s_k).
\]

Under Assumption \ref{assumption:coverage}, which considers an admissible state-action distribution induced by a uniformly random policy over \(\mathcal{A}_k\), we obtain a lower bound on the marginal probabilities:
\[
p_{k,a} \ge p_{\min,k} \coloneqq \frac{1}{C |\mathcal{A}_k|} > 0, \quad \forall a \in \mathcal{A}_k.
\]

This guarantees a minimum level of exploration for each action within the data set \(\mathcal{D}_k\).

Given the total sample size \(n_k\) and the marginal action probabilities \((p_{k,a})_{a \in \mathcal{A}_k}\), the sample counts \((n_{k,a})_{a \in \mathcal{A}_k}\) follow a multinomial distribution:
\[
(n_{k,a})_{a \in \mathcal{A}_k} \sim \text{Multi}(n_k, (p_{k,a})_{a \in \mathcal{A}_k}).
\]

We are interested in the case where each action is sufficiently represented. Define the “bad event” \(E_k\) as the event that there exists at least one action with a sample count below half of its expected proportion:
\[
E_k \coloneqq \left\{ \exists a \in \mathcal{A}_k \text{ such that } n_{k,a} < \frac{n_k p_{\min,k}}{2} \right\} = \left\{ \min_{a \in \mathcal{A}_k} n_{k,a} < \frac{n_k p_{\min,k}}{2} \right\}.
\]

Using standard concentration inequalities for multinomial distributions, there exist constants \(C_{1,k}, C_{2,k} > 0\) such that the probability of this undesirable event is bounded by
\begin{equation} \label{eq:bad_event_prob_rf_v4}
\mathbb{P}(E_k) \le C_{1,k} \exp(-C_{2,k} n_k p_{\min,k}).
\end{equation}

Thus, with high probability, that is, on the complement event \(E_k^c\), we have that \(n_{k,a} \ge n_k p_{\min,k}/2\) for all \(a \in \mathcal{A}_k\), ensuring that each action receives a sufficient number of samples.

Moreover, the absolute continuity of \(\nu_k\) with respect to \(\lambda_k \times \mu_k\) implies that the conditional distribution \(\nu_k(\cdot|a)\) admits a density with respect to \(\lambda_k\) for every action \(a \in \mathcal{A}_k\) such that \(p_{k,a} > 0\).

\textbf{2. \(L_2\)-error bound on Q-function.}

We now consider the update step in Algorithm \ref{alg:cfqi}, where Random Forest is used to fit Q-functions at each stage \(k \in \{1,\dots,K\}\), action \(a \in \mathcal{A}_k\), and iteration step \(m \in \{0,\dots,M-1\}\).

According to Proposition 2.2 of \cite{biau2012analysis}, given the sample size \(n_{k,a}\) for a fixed stage and action, we may select the number of terminal nodes in the Random Forest proportional to
\[
\Theta\left(n_{k,a}^{- \frac{1}{1+0.75/(d_k \log 2)}}\right),
\]
so that the expected squared error satisfies:
\[
\mathbb{E}_{\mathcal{D}_{k,a}} \left[ \left( Q_k^{(m)}(\cdot, a) - (\mathbf{TQ}^{(m-1)})_k (\cdot, a) \right)^2 \mid n_{k,a} \right] = \mathcal{O}\left( n_{k,a}^{-\frac{0.75}{d_k \log 2 + 0.75}} \right).
\]

Since on the event \(E_k^c\) we have \(n_{k,a} \ge n_k p_{\min,k}/2\), this implies that
\[
\mathcal{O}\left(n_k^{-\frac{0.75}{d_k \log 2 + 0.75}}\right)
\]
is a uniform bound for all actions.

Taking the expectation over the full data set \(\mathcal{D}_k\), and accounting for the small probability of \(E_k\), we obtain the unconditional bound:
\[
\mathbb{E}_{\nu_k} \left[ \left( Q_k^{(m)} - (\mathbf{TQ}^{(m-1)})_k \right)^2 \right] = \mathcal{O}\left( n_k^{-\frac{0.75}{d_k \log 2 + 0.75}} \right).
\]

Using the error propagation argument from Lemma \ref{lemma:q_error_bound}, adapted to expected bounds, we have
\[
\sqrt{\mathbb{E}_{\nu_k} \left\| Q_k^* - Q_k^{(m)} \right\|^2 } = \dfrac{\sqrt{C}H \epsilon_k}{1-\gamma_{\mathrm{cycle}}} + \gamma_{\mathrm{cycle}}^{\lfloor m/H \rfloor} Y,
\]
where
\[
\epsilon_k = \mathcal{O}\left( n_k^{-\frac{0.375}{d_k \log 2 + 0.75}} \right).
\]

\textbf{3. Expected suboptimality gap for \(\algname\) with Random Forest.}

Substituting the above result into Lemma \ref{lemma:performance_difference} and applying Theorem \ref{thm:main_convergence}, we obtain the following bound on the expected value error:
\[
\mathbb{E}_{\mathcal{D}}\|\mathbf{v}^* - \mathbf{v}^{(M)}\|_\infty \leq \frac{2 \sqrt{C} H^2 \varepsilon}{(1 - \gamma_{\mathrm{cycle}})^2} + \frac{2H Y}{1 - \gamma_{\mathrm{cycle}}} \cdot \gamma_{\mathrm{cycle}}^{\lfloor M/H \rfloor},
\]
where \(H = \sum_{i=1}^K H_i\) and
\[
\varepsilon = \max_k \epsilon_k = \mathcal{O}\left( Y \max_k n_k^{-\frac{0.375}{d_k \log 2 + 0.75}} \right).
\]

Here, the expectation \(\mathbb{E}_{\mathcal{D}}\) is over the entire data set \(\mathcal{D} = \{\mathcal{D}_k\}_{k=1}^K\), with each \(\mathcal{D}_k\) sampled from \(\nu_k\).
\end{proof}

\begin{remark}
Although we have assumed Lipschitz continuity of the optimal Q-function, corresponding to a Besov space of smoothness order 1, this assumption is common in the analysis of Random Forests.

It is likely that the results extend to broader function classes such as higher-order Sobolev or Besov spaces, though such generalizations are beyond the scope of this work.
\end{remark}

\bigskip

\subsection{Proof of Theorem \ref{thm:asymptotic}}

The proof adapts the approach from Theorems 1 and 2 of~\cite{shi2022statistical}, 
extending it to address the multi-stage structure of our setting.

Specifically, we generalize the analysis from a scalar value and its standard deviation 
to a vector of values and the corresponding covariance matrix. 
This multivariate extension poses new challenges, 
as it is not a direct application of the original results.

\begin{proof}

\textbf{Step 1: Analysis of the Coefficient Matrix.}

For each \( n = 1, \dots, N-1 \),
we estimate the policy \(\hat{\pi}^{n}\) using Algorithm~\ref{alg:cfqi}
on the aggregated data set \(\bar{\mathcal{D}}^n\),
formed by combining the first \( n \) data subsets.
We then evaluate \(\hat{\pi}^n\) using the disjoint \((n+1)\)-th data subset \(\mathcal{D}^{n+1}\),
following the evaluation method in~\eqref{eq:mean_estimate} and~\eqref{eq:variance_estimate}.

During evaluation, we compute the sample coefficient matrix \(\widehat{\mathbf{H}}_n\)
using~\eqref{eq:H_matrix_def}.
Its conditional expectation, \(\mathbf{H}_n = \mathbb{E}[\widehat{\mathbf{H}}_n \mid \bar{\mathcal{D}}^{n-1}]\),
satisfies
\[
\max_{1 \leq n < N} \|\mathbf{H}_n^{-1}\|_2 \leq 2c^{-1},
\]
by Assumption~\ref{assumption:min_eigenvalue}.
Lemma 3 of~\cite{shi2022statistical} develops a comparable bound for similar matrix structures, particularly under conditions analogous to the \( T=1 \) case in that work.

Moreover, under Assumption~\ref{assumption:absolute_continuity}, and adapting arguments from Lemma 2 of~\cite{shi2022statistical}, there exists a constant \( c_* > 0 \) such that for standard sieve bases—such as B-splines (following the proof of Theorem 3.3 from \cite{burman1989nonparametric}) and wavelets (following the proof of Theorem 5.1 from \cite{chen2015optimal})—the concatenated basis vector \(\Phi = (\Phi_1(s_1)^{\top}, \dots, \Phi_K(s_K)^{\top})^{\top}\) satisfies the following:
the eigenvalues of
\[
\int \bigl[ \Phi \Phi^{\top} \bigr] \, d\lambda_1 \dots d\lambda_K
\]
lie within \( \left( \dfrac{1}{c_*}, c_* \right) \),
and the \( L_2 \)-norm is uniformly bounded by \( \sup\|\Phi\|_2 \le c^* L \),
where \( L = \sum_{k=1}^K L_k \) is the total number of basis functions
and \(\lambda_k\) denotes the Lebesgue measure over \(\state_k\).

Further, to analyze the concentration of \( \widehat{\mathbf{H}}_n \) around \( \mathbf{H}_n \), we apply matrix concentration arguments, such as inequalities for sums of independent random matrices or matrix martingales, drawing upon the approach used for Lemma 3 of~\cite{shi2022statistical}. This yields, with probability at least \( 1 - O\left(\dfrac{1}{n^2}\right) \),
\[
\max_{1 \leq n \leq N-1} \|\widehat{\mathbf{H}}_n - \mathbf{H}_n\|_2 \preceq \sqrt{\frac{L}{n}} \log n,
\]
where \(\|\cdot\|_2\) denotes the matrix Frobenius norm.

Under Assumption~\ref{assumption:besov_general} for Theorem~\ref{thm:asymptotic},
where \( L_k \asymp n_k^{d_k/(s_k + d_k)} \) with \( s_k > d_k \),
we have \( L_k = o(\dfrac{\sqrt{n_k}}{\log n_k} ) \).
This condition, combined with the application of the aforementioned matrix concentration principles, allows us to bound the inverse sample coefficient matrices, applying these principles as guided by Lemma 3 of~\cite{shi2022statistical}.
Consequently, with probability at least \( 1 - O\left(\dfrac{1}{n^2}\right) \),
\begin{equation}\label{eq:coefficient_matrix_inverse_bound}
\max_{1 \leq n < N} \|\widehat{\mathbf{H}}_n^{-1}\|_2 \leq \dfrac{3}{c}, \quad
\max_{1 \leq n < N} \|\widehat{\mathbf{H}}_n^{-1} - \mathbf{H}_n^{-1}\|_2 \preceq \sqrt{\frac{L}{n}} \log n.    
\end{equation}

\medskip

\textbf{Step 2: Decomposition of Coefficient Estimation Error.}

By Assumption~\ref{assumption:besov_general}, 
an extension of Assumption~\ref{assumption:besov} to arbitrary constrained policies, 
the \( Q \)-functions \( Q_k^{\hat{\pi}^n} \), 
for all stages \( k \) and the \( n \)-th estimated policy, 
belong to a Besov space with smoothness parameter \( s_k \).

For suitable sieve bases, such as B-splines or wavelets of sufficient resolution 
(see Table~\ref{tab:approx_rates_covering} and Appendix~\ref{sec:discussion_approx}), 
there exist coefficient vectors \(\{\beta_{k,a}^n\}_{a \in \action_k}\) 
such that the global approximation error is bounded by
\[
\max_{1 \leq n < N} \left\| Q_k^{\hat{\pi}^n}(\cdot, \cdot) - \Phi_k^{\top}(\cdot) \beta_{k,a}^n \right\|_\infty \leq C L_k^{-\alpha_k},
\]
where \(\alpha_k = s_k / d_k\), for some constant \( C > 0 \), 
and for each stage \( k \).

We define \(\xi_{k,i}^n\) as the approximation error for the \( i \)-th sample at stage \( k \), 
representing the difference between the target value from the Bellman equation, 
using the true \( Q \)-functions for \(\hat{\pi}^n\), 
and the value implied by the sieve approximation coefficients \(\beta^n\). 
This error is given by
\begin{multline*}
\xi_{k,i}^n = \Biggl[ \gamma_k T_{k,i} \sum_{a \in \action_{[k+1]}} \bigl( \Phi_{[k+1]}^{\top}(\phi_k(s_k'^i)) \beta_{[k+1],a}^n - Q_{[k+1]}^{\hat{\pi}^n}(\phi_k(s_k'^i), a) \bigr) \hat{\pi}_{[k+1]}^n(a \mid \phi_k(s_k'^i)) \\
+ (1 - T_{k,i}) \sum_{a \in \action_k} \bigl( \Phi_k^{\top}(s_k'^i) \beta_{k,a}^n - Q_k^{\hat{\pi}^n}(s_k'^i, a) \bigr) \hat{\pi}_k^n(a \mid s_k'^i) \Biggr] \\
- \bigl( \Phi_k^{\top}(s_k^i) \beta_{k,a_k^i}^n - Q_k^{\hat{\pi}^n}(s_k^i, a_k^i) \bigr).
\end{multline*}
From the \( Q \)-function approximation bound, 
it follows that, for each stage \( k \),
\begin{equation}\label{eq:xi_bound}
\max_{1 \leq n < N, i} |\xi_{k,i}^n| \leq 2 C L_k^{-\alpha_k}.
\end{equation}

Following the approach in Theorem 1 of~\cite{shi2022statistical}, 
we decompose the error in the estimated sieve coefficients for the \( m \)-th policy evaluation step as
\begin{equation}\label{eq:proof_asymptotic_coeff}
\begin{aligned}
\hat{\beta}^m - \beta^m
&= \widehat{\mathbf{H}}_m^{-1} \left( \dfrac{N}{n} \sum_{k=1}^K \sum_{i \in \mathcal{D}_k^m} \psi_{k,i} (\epsilon_{k,i}^m - \xi_{k,i}^m) \right) \\
&= \mathbf{H}_m^{-1} \left( \dfrac{N}{n} \sum_{k=1}^K \sum_{i \in \mathcal{D}_k^m} \psi_{k,i} \epsilon_{k,i}^m \right) \\
&\quad + \left( \widehat{\mathbf{H}}_m^{-1} - \mathbf{H}_m^{-1} \right) \left( \dfrac{N}{n} \sum_{k=1}^K \sum_{i \in \mathcal{D}_k^m} \psi_{k,i} \epsilon_{k,i}^m \right) 
- \widehat{\mathbf{H}}_m^{-1} \left( \dfrac{N}{n} \sum_{k=1}^K \sum_{i \in \mathcal{D}_k^m} \psi_{k,i} \xi_{k,i}^m \right) \\
&= \mathbf{H}_m^{-1} \left( \dfrac{N}{n} \sum_{k=1}^K \sum_{i \in \mathcal{D}_k^m} \psi_{k,i} \epsilon_{k,i}^m \right) 
+ \mathcal{O}_p \left( \sum_{k=1}^K \dfrac{L_k \log n_k}{n_k} \right) 
+ \mathcal{O}_p \left( \sum_{k=1}^K L_k^{-\alpha_k} \right),
\end{aligned}
\end{equation}
where \(\epsilon_{k,i}^m\) is the sample Bellman error for policy \(\hat{\pi}^m\), defined as
\begin{multline}\label{eq:sample_bellman_error}
\epsilon_{k,i}^m = r_{k,i}^m + \gamma_k T_{k,i} \sum_{a \in \action_{[k+1]}} Q_{[k+1]}^{\hat{\pi}^m}(\phi_k(s_k'^i), a) \hat{\pi}_{[k+1]}^m(a \mid \phi_k(s_k'^i)) \\
+ (1 - T_{k,i}) \sum_{a \in \action_k} Q_k^{\hat{\pi}^m}(s_k'^i, a) \hat{\pi}_k^m(a \mid s_k'^i) - Q_k^{\hat{\pi}^m}(s_k^i, a_k^i),
\end{multline}
and \( i \in \mathcal{D}_k^m \), with a slight abuse of notation, 
indicates that the sample \( (s_k^i, a_k^i, r_k^i, s_k'^i) \) at stage \( k \) belongs to \(\mathcal{D}_k^m\).

The second term in Equation~\eqref{eq:proof_asymptotic_coeff} is controlled 
using the bound from Equation~\eqref{eq:coefficient_matrix_inverse_bound} 
and the independence and conditional mean-zero property of the Bellman error terms \(\epsilon_{k,i}^m\). 
The third term is bounded by adapting the arguments of Lemma 4 in~\cite{shi2022statistical}, 
which relies on Assumption~\ref{assumption:min_eigenvalue}.

For later use, we denote the leading term as
\[
\zeta_m = \mathbf{H}_m^{-1} \left( \dfrac{N}{n} \sum_{k=1}^K \sum_{i \in \mathcal{D}_k^m} \psi_{k,i} \epsilon_{k,i}^m \right).
\]

\medskip

\textbf{Step 3: Asymptotic Decomposition.} 

Our primary goal is to establish the asymptotic normality of the estimator 
from Algorithm~\ref{alg:ensemble_cfqi}.

Specifically, for the first statement of Theorem~\ref{thm:asymptotic}, 
we seek to derive the asymptotic distribution of
\begin{equation}\label{eq:proof_asymptotic_goal}
\sqrt{\frac{n(N-1)}{N}} \widehat{\Sigma}^{-1/2} (\hat{\mathbf{v}} - \mathbf{v}^{\hat{\pi}}) 
= \sqrt{\frac{n}{N(N-1)}} \sum_{m=1}^{N-1} \widehat{\Sigma}_m^{-1/2} (\hat{\mathbf{v}}_{\mathcal{D}^{m+1}}(\hat{\pi}^m) - \mathbf{v}^{\hat{\pi}}) 
\xrightarrow{d} \mathcal{N}(\mathbf{0}_K, \mathbf{I}_K),
\end{equation}
where \(\hat{\pi} = \hat{\pi}^N\) denotes the final policy estimated using the entire data set \(\mathcal{D}\).

Drawing on arguments analogous to the proof of Theorem 1 in~\cite{shi2022statistical}, 
and leveraging the coefficient error decomposition from Equation~\eqref{eq:proof_asymptotic_coeff}, using the fact that \(L_k^{-\alpha_k} = o (n_k^{-1/2})\) by the choice of \(L_k\),
we obtain
\begin{multline}\label{eq:proof_asymptotic_long_error}
\sqrt{\frac{n}{N(N-1)}} \left\| \sum_{m=1}^{N-1} \widehat{\Sigma}_m^{-1/2} \left( \hat{\mathbf{v}}_{\mathcal{D}^{m+1}}(\hat{\pi}^m) - \mathbf{v}^{\hat{\pi}} \right) 
- \sum_{m=1}^{N-1} \widehat{\Sigma}_m^{-1/2} \left( \mathbf{v}^{\hat{\pi}^m} + \mathbb{E}[\mathbf{U}^m]^{\top} \zeta_m - \mathbf{v}^{\hat{\pi}} \right) \right\|_\infty 
\\ = o_p(1),
\end{multline}
as \( n \to \infty \).

Here, \(\mathbf{U}^m\) is a block diagonal matrix defined as
\[
\mathbf{U}^m = \begin{bmatrix}
\mathbf{U}_1^m & \mathbf{0} & \cdots & \mathbf{0} \\
\mathbf{0} & \mathbf{U}_2^m & \cdots & \mathbf{0} \\
\vdots & \vdots & \ddots & \vdots \\
\mathbf{0} & \mathbf{0} & \cdots & \mathbf{U}_K^m
\end{bmatrix}
\in \mathbb{R}^{\left( \sum_{k=1}^K L_k A_k \right) \times K},
\]
where each block \(\mathbf{U}_k^m(s_k) = \mathbf{U}_k(s_k)\) corresponds to stage \( k \) under policy \(\hat{\pi}^m\). 
The policy-weighted feature vector \(\mathbf{U}_k\) is defined in Equation~\eqref{eq:U_function} 
from subsection~\ref{sec:inference_setup}.

Next, we aim to show that the expected suboptimality of the estimated policies, 
\(\mathbb{E}\|\mathbf{v}^{\hat{\pi}^m} - \mathbf{v}^*\|_\infty\), 
converges at a rate \( O(n^{-b_0}) \) with \( b_0 > 1/2 \).
To this end, we use the \( L_2 \)-norm error bounds for the estimated \( Q \)-functions 
from Lemma~\ref{lemma:q_error_bound}, 
which decay at rate \( n_k^{-b_*} \) for \( b_* > 1/4 \), by assumption. 
We consider two cases based on \( b_* \).

If \( b_* > 1/2 \), 
the desired rate follows directly from the proofs of 
Theorem~\ref{thm:main_convergence} and Theorem~\ref{thm:finite_sample_rate}, 
confirming \( b_0 > 1/2 \).

If \( b_* \leq 1/2 \), 
we incorporate the margin condition from Assumption~\ref{assumption:margin} 
for Theorem~\ref{thm:asymptotic}, 
which requires the exponent \(\alpha\) to satisfy
\[
\alpha > \dfrac{2 - 4b_*}{4b_* - 1}.
\]
Using an argument analogous to Theorem 4 of~\cite{shi2022statistical}, 
which leverages the faster convergence of the integrated value suboptimality gap 
relative to the \( Q \)-function under the margin condition, 
we combine the \( Q \)-function rate \( b_* \) and the margin parameter \(\alpha\) to define
\[
b_0 = b_* \dfrac{2 + 2\alpha}{2 + \alpha}.
\]
This ensures \( b_0 > 1/2 \). 
Thus, for all \( m = 1, \dots, N \),
\[
\|\mathbf{v}^{\hat{\pi}^m} - \mathbf{v}^*\|_\infty = O_p(n^{-b_0}),
\]
and the expected suboptimality is
\[
\mathbb{E}\|\mathbf{v}^{\hat{\pi}^m} - \mathbf{v}^*\|_\infty = O(n^{-b_0}),
\]
since the values are bounded. 
In both cases, the rate \( b_0 > 1/2 \) is achieved.

As \(\hat{\pi} = \hat{\pi}^N\) uses the full data set \(\mathcal{D} = \bar{\mathcal{D}}_N\), 
this bound applies to the final policy as well. 
Consequently, the average difference between the values of intermediate policies \(\hat{\pi}^m\) 
and the final policy \(\hat{\pi}\) vanishes at rate \( n^{-b_0} \):
\begin{align*}
\frac{1}{N-1} \sum_{m=1}^{N-1} \mathbb{E} \|\mathbf{v}^{\hat{\pi}^m} - \mathbf{v}^{\hat{\pi}}\|_\infty 
&\leq \frac{1}{N-1} \sum_{m=1}^{N-1} \mathbb{E} \|\mathbf{v}^{\hat{\pi}^m} - \mathbf{v}^*\|_\infty + \mathbb{E} \|\mathbf{v}^{\hat{\pi}} - \mathbf{v}^*\|_\infty \\
&= O(n^{-b_0}).
\end{align*}
Since \( b_0 > 1/2 \), 
\[
\sqrt{\frac{nN}{N-1}} \sum_{m=1}^{N-1} \mathbb{E} \|\mathbf{v}^{\hat{\pi}^m} - \mathbf{v}^{\hat{\pi}}\|_\infty 
= \mathcal{O}(n^{1/2 - b_0}) = o(\|\mathbb{E}\Phi\|_2),
\]
where \(\Phi = (\Phi_1^{\top}, \dots, \Phi_K^{\top})^{\top}\) is the vector of sieve basis functions 
across all stages, 
and the expectation is taken over the initial state distributions \(\eta_k\). 
By Markov’s inequality, 
\[
\|\mathbb{E}\Phi\|_2^{-1} \sqrt{\frac{nN}{N-1}} \sum_{m=1}^{N-1} \|\mathbf{v}^{\hat{\pi}^m} - \mathbf{v}^{\hat{\pi}}\|_\infty = o_p(1),
\]
as \( n \to \infty \).

Let \( A = \max_k A_k \) denote the maximum number of actions across all stages. 
Using the Cauchy-Schwarz inequality, 
we relate the norm of the expected basis functions to that of the expected \(\mathbf{U}^m\) matrix 
for \( m = 1, \dots, N-1 \):
\[
\|\mathbb{E}\Phi\|_2^{-1} \geq A^{-1/2} \|\mathbb{E}[\mathbf{U}^m]\|_2^{-1},
\]
yielding
\begin{equation}\label{eq:asymptotic_proof_error}
\sqrt{\frac{nN}{N-1}} \sum_{m=1}^{N-1} \|\mathbb{E}[\mathbf{U}^m]\|_2^{-1} \|\mathbf{v}^{\hat{\pi}^m} - \mathbf{v}^{\hat{\pi}}\|_\infty = o_p(1).
\end{equation}

Moreover, adapting arguments on the convergence of covariance matrix estimates 
from the proof of Theorem 1 of ~\cite{shi2022statistical}, similar to their proof of Theorem 1, 
we find
\begin{equation}\label{eq:proof_asymptotic_covariance_error}
\max_{1 \leq m < N} \left\| \Sigma_m^{-1/2} (\widehat{\Sigma}_m^{1/2} - \Sigma_m^{1/2}) \right\|_\infty = o_p(1).
\end{equation}

Furthermore, using the results from \textbf{Step 1} on the coefficient matrices \(\mathbf{H}_n\) and \(\widehat{\mathbf{H}}_n\), 
there exists a constant \( c_1 > 0 \) such that, 
for all \( m = 1, \dots, N-1 \), 
with probability approaching 1 as \( n \to \infty \),
\[
\lambda_{\min}(\Sigma_m^{1/2}) \geq c_1 \|\mathbb{E}[\mathbf{U}^m]\|_2, \quad 
\lambda_{\min}(\widehat{\Sigma}_m^{1/2}) \geq c_1 \|\mathbb{E}[\mathbf{U}^m]\|_2.
\]

Thus, from Equation~\eqref{eq:asymptotic_proof_error}, 
\[
\sqrt{\frac{n}{N(N-1)}} \left\| \sum_{m=1}^{N-1} \widehat{\Sigma}_m^{-1/2} \left( \mathbf{v}^{\hat{\pi}^m} - \mathbf{v}^{\hat{\pi}} \right) \right\|_\infty = o_p(1).
\]
Combining this with Equation~\eqref{eq:proof_asymptotic_long_error}, 
we obtain
\[
\sqrt{\frac{n}{N(N-1)}} \left\| \sum_{m=1}^{N-1} \widehat{\Sigma}_m^{-1/2} \left( \hat{\mathbf{v}}_{\mathcal{D}^{m+1}}(\hat{\pi}^m) - \mathbf{v}^{\hat{\pi}} \right)
- \sum_{m=1}^{N-1} \widehat{\Sigma}_m^{-1/2} \mathbb{E}[\mathbf{U}^m]^{\top} \zeta_m \right\|_\infty = o_p(1).
\]
To prove the goal in Equation~\eqref{eq:proof_asymptotic_goal}, 
it suffices to show the distributional convergence
\begin{equation}\label{eq:proof_asymptotic_goal2}
\sqrt{\frac{n}{N(N-1)}} \sum_{m=1}^{N-1} \widehat{\Sigma}_m^{-1/2} \mathbb{E}[\mathbf{U}^m]^{\top} \zeta_m 
\xrightarrow{d} \mathcal{N}_K(\mathbf{0}, \mathbf{I}_K).
\end{equation}

\medskip

\textbf{Step 4: Asymptotic Martingale Representation.}

To replace the estimated covariance matrices \(\widehat{\Sigma}_m\) with their expected values \(\Sigma_m\), 
we decompose the left-hand side of Equation~\eqref{eq:proof_asymptotic_goal2} as
\[
\sqrt{\frac{n}{N(N-1)}} \sum_{m=1}^{N-1} \Sigma_m^{-1/2} \mathbb{E}[\mathbf{U}^m]^{\top} \zeta_m 
+ \sqrt{\frac{n}{N(N-1)}} \sum_{m=1}^{N-1} \bigl( \widehat{\Sigma}_m^{-1/2} \Sigma_m^{1/2} - \mathbf{I} \bigr) \Sigma_m^{-1/2} \mathbb{E}[\mathbf{U}^m]^{\top} \zeta_m.
\]
By Equation~\eqref{eq:proof_asymptotic_covariance_error} and Slutsky’s theorem, 
the supremum norm of the second term converges to zero in probability. 
Thus, it suffices to analyze the first term, which can be expressed as
\begin{equation}\label{eq:martingale_sum}
\sqrt{\frac{N}{n(N-1)}} \sum_{m=1}^{N-1} \sum_{k=1}^K \sum_{i \in \mathcal{D}_k^m} \Sigma_m^{-1/2} \mathbb{E}[\mathbf{U}^m]^{\top} \mathbf{H}_m^{-1} \psi_{k,i} \epsilon_{k,i}^m,
\end{equation}
where \(\psi_{k,i}\) is the feature vector for stage \( k \) and sample \( i \), 
defined in subsection~\ref{sec:inference_setup}, 
and \(\epsilon_{k,i}^m\) is the sample Bellman error from Equation~\eqref{eq:sample_bellman_error}.

To represent this sum as a martingale, 
we construct a filtration \(\{\mathcal{F}^{(g)}\}_{g \geq 0}\), 
where \( g \) indexes the terms in Equation~\eqref{eq:martingale_sum} 
corresponding to the \( i \)-th sample from stage \( k \) of the \( m \)-th data set \(\mathcal{D}_k^m\). 
We order the indices \((m, k, i)\) such that
\[
(m_1, k_1, i_1) \prec (m_2, k_2, i_2)
\]
if and only if \( m_1 < m_2 \), or \( m_1 = m_2 \) and \( k_1 < k_2 \), 
or \( m_1 = m_2 \), \( k_1 = k_2 \), and \( i_1 < i_2 \). 
The index \( g \) ranges from 0 to \( n-1 \), 
and we denote the corresponding \((m, k, i)\) for each \( g \) as \( m(g), k(g), i(g) \), respectively.

The filtration \(\mathcal{F}^{(g)}\) is the sigma-field generated by the union of state-action pairs 
with indices \((m, k, i)\) such that \((m, k, i) \prec (m(g), k(g), i(g))\).

Using this ordering, Equation~\eqref{eq:martingale_sum} can be rewritten as
\begin{equation}\label{eq:martingale_sum_reindex}
\sqrt{\frac{N}{n(N-1)}} \sum_{g=0}^{n-1} \left[ \Sigma_{m(g)}^{-1/2} \mathbb{E}[\mathbf{U}^{m(g)}]^{\top} \mathbf{H}_{m(g)}^{-1} \psi_{k(g),i(g)} \epsilon_{k(g),i(g)}^{m(g)} \right],
\end{equation}
which constitutes a sum of mean-zero martingale differences with respect to the filtration \(\{\mathcal{F}^{(g)}\}_{0 \leq g < n}\). 
We denote \(\psi_{k(g),i(g)} = \psi^{(g)}\) and \(\epsilon_{k(g),i(g)}^{m(g)} = \epsilon^{(g)}\).

Following an argument analogous to the  proof of Theorem 1 in~\cite{shi2022statistical}, 
and leveraging the condition \( L = o\left( \dfrac{\sqrt{n}}{\log n} \right) \), 
using that the \(\epsilon_{k,i}^m\) are uniformly bounded and \(\normtwo{\psi^{(g)}} \leq \sup \normtwo{\Phi} \leq c^* \sqrt{L}\), 
we show that the maximum contribution of the martingale terms is negligible:
\begin{equation}\label{eq:martingale_max_bound}
\max_{0 \leq g < n} \norminf{\Sigma_{m(g)}^{-1/2} \mathbb{E}[\mathbf{U}^{m(g)}]^{\top} \mathbf{H}_{m(g)}^{-1} \psi_{k(g),i(g)} \epsilon_{k(g),i(g)}^{m(g)}} = o_p(1).
\end{equation}

Further, we have
\begin{align*}
&\norminf{\dfrac{N}{n(N-1)} \sum_{g=0}^{n-1} \left[ \Sigma_{m(g)}^{-1/2} \mathbb{E}[\mathbf{U}^{m(g)}]^{\top} \mathbf{H}_{m(g)}^{-1} \psi^{(g)} \psi^{(g) \top} (\mathbf{H}_{m(g)}^{-1})^{\top} \mathbb{E}[\mathbf{U}^{m(g)}] \Sigma_{m(g)}^{-1/2} (\epsilon^{(g)})^2 \right] - \mathbf{I}_K} \\
&\leq \max_{1 \leq m < N} \norminf{\dfrac{N}{n} \sum_{(k,i) \in \mathcal{D}_m} \left[ \Sigma_m^{-1/2} \mathbb{E}[\mathbf{U}^m]^{\top} \mathbf{H}_m^{-1} \psi_{k,i} \psi_{k,i}^{\top} (\mathbf{H}_m^{-1})^{\top} \mathbb{E}[\mathbf{U}^m] \Sigma_m^{-1/2} (\epsilon_{k,i}^m)^2 \right] - \mathbf{I}_K} \\
&= \max_{1 \leq m < N} \norminf{\Sigma_m^{-1/2} \mathbb{E}[\mathbf{U}^m]^{\top} \mathbf{H}_m^{-1} \widehat{\Omega}^*_m (\mathbf{H}_m^{-1})^{\top} \mathbb{E}[\mathbf{U}^m] \Sigma_m^{-1/2} - \mathbf{I}_K} \\
&= \max_{1 \leq m < N} \norminf{\Sigma_m^{-1/2} \mathbb{E}[\mathbf{U}^m]^{\top} \mathbf{H}_m^{-1} (\widehat{\Omega}^*_m - \Omega^*_m) (\mathbf{H}_m^{-1})^{\top} \mathbb{E}[\mathbf{U}^m] \Sigma_m^{-1/2}} \\
&\leq \max_{1 \leq m < N} \normtwo{\widehat{\Omega}^*_m - \Omega_m} \normtwo{(\mathbf{H}_m^{-1})^{\top} \mathbb{E}[\mathbf{U}^m] \Sigma_m^{-1/2}}^2,
\end{align*}
where \(\widehat{\Omega}^*_m = \dfrac{N}{n} \sum_{(k,i) \in \mathcal{D}_{m+1}} \psi_{k,i} \psi_{k,i}^{\top} (\epsilon_{k,i}^m)^2\) 
and \(\Omega_m = \mathbb{E}[\widehat{\Omega}_m]\), 
with \(\widehat{\Omega}_m\) being the matrix \(\widehat{\Omega}\) defined in subsection~\ref{sec:inference_setup} with respect to policy \(\hat{\pi}^m\) and data set \(\mathcal{D}_{m+1}\).

Following an argument analogous to the proof of Theorem 1 in~\cite{shi2022statistical}, 
we derive that
\[
\max_{1 \leq m < N} \normtwo{\widehat{\Omega}^*_m - \Omega_m} = o_p(1),
\]
as \( n \to \infty \), 
and that \(\max\limits_{1 \leq m < N} \normtwo{(\mathbf{H}_m^{-1})^{\top} \mathbb{E}[\mathbf{U}^m] \Sigma_m^{-1/2}}^2\) is bounded above by a constant.

Thus, we obtain
\begin{multline*}
\norminf{\dfrac{N}{n(N-1)} \sum_{g=0}^{n-1} \left[ \Sigma_{m(g)}^{-1/2} \mathbb{E}[\mathbf{U}^{m(g)}]^{\top} \mathbf{H}_{m(g)}^{-1} \psi^{(g)} \psi^{(g) \top} (\mathbf{H}_{m(g)}^{-1})^{\top} \mathbb{E}[\mathbf{U}^{m(g)}] \Sigma_{m(g)}^{-1/2} (\epsilon^{(g)})^2 \right] - \mathbf{I}_K} \\
= o_p(1).    
\end{multline*}

Now, applying a vector-valued martingale central limit theorem for triangular arrays, 
we establish Equation~\eqref{eq:proof_asymptotic_goal2}, 
satisfying the first result of Theorem~\ref{thm:asymptotic}. 

For the second result, since we have shown through this proof that the suboptimality gap converges at a rate \( n^{-b_0} \) with \( b_0 > 1/2 \), 
we can apply Slutsky’s theorem to obtain the second result of the Theorem~\ref{thm:asymptotic}.
\end{proof}

\vskip 0.2in
\bibliography{ref}

\end{document}